\documentclass[11pt]{article}
\usepackage{latexsym}
\usepackage{url}
\usepackage{times}
\usepackage{amsmath}
\usepackage{named}
\usepackage{amsthm}
\input{psfig.sty}


\hyphenation{lif-schitz}

\newcommand{\tuple}[1]{ (#1) }

\newtheorem{proposition}{Proposition}

\newtheorem{lemma}{Lemma}
\newtheorem{corollary}{Corollary}

\def\beq2{\begin{equation}}
\def\eeq2#1{\label{#1}\end{equation}}
\def\sneg{\sim\!}
\def\ii#1{\hbox{\it #1\/}}

\def\no{\ii{not}}

\def\eq{\leftrightarrow}
\def\implies{\rightarrow}
\def\ar{\leftarrow}
\def\o{\overline}
\def\un{\underline}


\author{
Paolo Ferraris\\
Department of Computer Sciences\\
University of Texas at Austin\\
Austin, TX 78712, USA\\
{\tt otto@cs.utexas.edu}
}

\title{Logic Programs with Propositional Connectives and Aggregates}

\author{
Paolo Ferraris\\
Google Inc\\
1600 Anphitheatre Pkwy\\
Mountain View CA 94043, USA\\
{\tt otto@cs.utexas.edu}
}


\begin{document}
\maketitle

\begin{abstract}
Answer set programming (ASP) is a logic programming paradigm that can be used
to solve complex combinatorial search problems. Aggregates are an ASP
construct that plays an important role in many applications.
Defining a satisfactory semantics of aggregates turned out to be a difficult
problem, and in this paper we propose a new approach, based on an analogy
between aggregates and propositional connectives. First, we
extend the definition of an answer set/stable model to cover arbitrary
propositional theories; then we define aggregates on top of them both as
primitive constructs and as abbreviations for formulas. Our definition of
an aggregate combines expressiveness and simplicity, and it inherits many
theorems about programs with nested expressions, such as theorems about
strong equivalence and splitting.
\end{abstract}


\section{Introduction}
Answer set programming (ASP) is a logic programming paradigm that can be used
to solve complex combinatorial search problems~(\cite{mar99}),~(\cite{nie99}).
ASP is based on the stable
model semantics~\cite{gel88} for logic programs: programming in ASP consists
in writing a logic program whose stable models (also called answer sets) 
represent the solution to our problem. ASP has been used, for instance,
in planning~\cite{dim97,lif99c}, model checking~\cite{liu98,hel01},
product configuration~\cite{soi98}, logical cryptanalysis~\cite{hie00},
workflow specification~\cite{tra00,kok01}, reasoning
about policies~\cite{son01}, wire routing problems~\cite{erd00} and
phylogeny reconstruction problems~\cite{erd03a}.

The stable models of a logic program are found by systems called
{\em answer set solvers}. Answer set solvers
can be considered the equivalent of SAT solvers --- systems used to find the
models of propositional formulas --- in logic programming. On the other hand,
it is much easier to express, in logic programming, recursive definitions
(such as reachability in a graph) and defaults.
Several answer set solvers have been developed so far, with
{\sc smodels}\footnote{\tt http://www.tcs.hut.fi/Software/smodels/} and
{\sc dlv}\footnote{\tt http://www.dbai.tuwien.ac.at/proj/dlv/} among the most
popular. As in the case of SAT solvers, answer set solver
competitions --- where answer set solvers are compared to each others in
terms of performance --- are planned to be held
regularly.\footnote{\tt http://asparagus.cs.uni-potsdam.de/contest/}

An important construct in ASP are aggregates.
Aggregates allow, for instance, to perform set operations such as counting
the number of atoms in a set that are true, or summing weights the weights
of the atoms that are true. We can, for instance, express that a node in a
graph has exactly one color by the following cardinality constraint:
$$
1\leq \{c(node,color_1),\dots, c(node,color_m)\} \leq 1.
$$
As another example, a weight constraint of the form
\beq2
3\leq \{p=1, q=2, r=3\}
\eeq2{exweight}
intuitively says that the sum of the weights (the numbers after the ``$=$''
sign) of the atoms from the list $p$, $q$, $r$ that are true is at least 3.

Aggregates are a hot topic in ASP not only because of their importance, but
also because there is no standard understanding of the concept of an aggregate.
In fact, different answer set solvers implement different definitions of
aggregates: for instance, {\sc smodels} implements cardinality and weight
constraints~\cite{nie00}, while {\sc dlv} implements aggregates as defined by
Faber, Leone and Pfeifer~(2005) (we call them FLP-aggregates). Unfortunately,
constructs that are intuitively equivalent to each other may actually
lead to different stable models. In some sense, no current definition of an
aggregate can be considered fully satisfactory, as each of them seems to have
properties that look unintuitive. For instance, it is somehow puzzling that,
as noticed in~\cite{fer05b}, weight constraints
$$
0\leq \{p=2, p=-1\}\qquad\qquad\text{and}\qquad\qquad
0\leq \{p=1\}
$$
are semantically different from each other (may lead to different stable
models). Part of this problem is probably related
to the lack of mathematical tools for studying properties of programs with
aggregates, in particular for reasoning about the correctness of programs
with aggregates.

This paper addresses the problems of aggregates mentioned above by (i)
giving a new semantics of aggregates that, we argue, is more satisfactory than
the existing alternatives, and (ii) providing
tools for studying properties of logic programs with aggregates.

\begin{figure}
    \centering
    \centerline{\psfig{figure=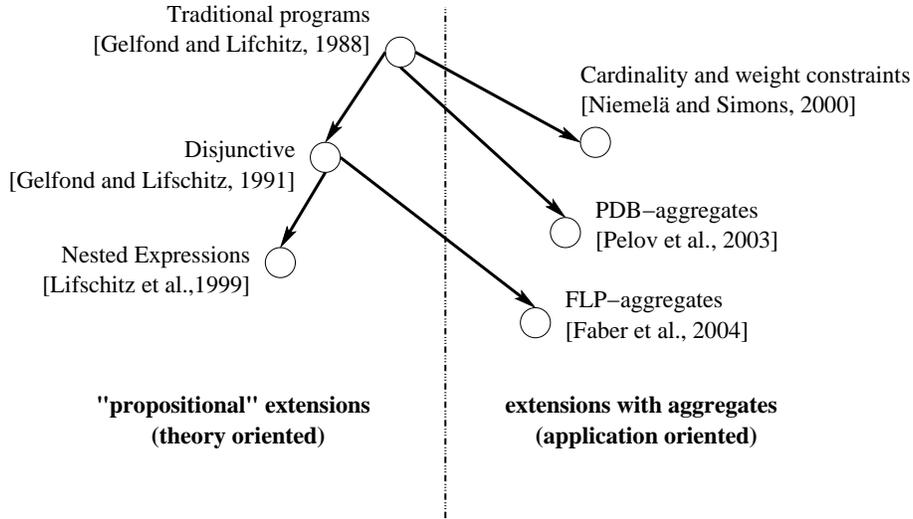,height=7cm}}
    \caption{Evolution of the stable model semantics.}
    \label{fig:evolution}
\end{figure}
Our approach is based on a relationship between two directions of research on
extending the stable model semantics: the work on aggregates, mentioned above,
and the work on  ``propositional extensions'' (see Figure~\ref{fig:evolution}).
The latter makes the syntax of rules more and more similar to the syntax of
propositional formulas. In disjunctive programs,
the head of each rule is a (possibly empty) disjunction of atoms, while
in programs with nested expressions the head and body of each rule can be
any arbitrary formula built with connectives AND, OR and NOT.
For instance,
$$
\neg (p\vee \neg q)\ar p \vee \neg\neg r
$$
is a rule with nested expressions.
Programs with nested expressions are quite attractive especially relative to
point (ii) above, because many theorems about properties of logic programs
have been proved for programs of this kind. For instance, the splitting set
theorem~\cite{lif94e,erdo04} simplifies the task of
computing the stable models of a program/theory by breaking it into two
parts. Work on strong equivalence~\cite{lif01} allows us to modify a
program/theory with the guarantee that stable models are preserved
(more details in Section~\ref{sec:pt-properties}).

Nested expressions have
already been used to express aggregates:~\cite{fer05b} showed
that each weight constraint can be replaced by a nested expressions, 
preserving its stable models. As a consequence, theorems about nested
expressions can be used for programs with weight constraints.
It turns out, however, that nested expressions are not sufficiently general
for defining a semantics for aggregates that overcomes the unintuitive
features of the existing approaches.
For this reason, we extend the syntax of rules with nested
expressions, allowing implication in every part of a ``rule'', and not only
as the outermost connective. (We understand a rule as an implication from
the body to the head). A ``rule'' is then an arbitrary
propositional formula, and a program an arbitrary propositional theory.
Our new definition of a stable model, like all the other definitions,
is based on the process of constructing a reduct. The process that we use
looks very different from all the others, and in particular for programs
with nested expressions. Nevertheless, it turns out
that in application to programs with nested expressions, our definition
is equivalent to the one from~\cite{lif99d}.
This new definition of a stable model also turns out to closely related to
equilibrium logic~\cite{pea97}, a logic based on the concept of a Kripke-model
in the logic of here-and-there. Also, we will show that many theorems about
programs with nested expressions extend to arbitrary propositional theories.

\begin{figure}
    \centering
    \centerline{\psfig{figure=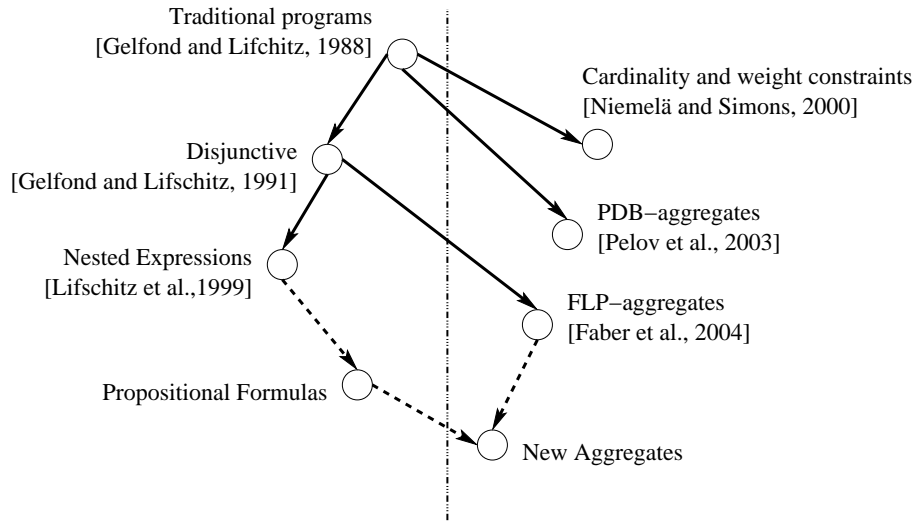,height=7cm}}
    \caption{The proposed extensions}
    \label{fig:proposed}
\end{figure}
On top of arbitrary propositional formulas, we give our definition
of an aggregate. Our extension of the
semantics to aggregates treats aggregates in a way similar to propositional
connectives. Aggregates can be viewed either as primitive constructs or as
abbreviations for propositional formulas; both approaches lead to the same
concept of a stable model. The second view is important because it allows us
to use theorems about stable models of propositional formulas in the
presence of aggregates. As an example of application of such theorems,
we use them to prove the correctness of an ASP program with aggregates
that encodes a combinatorial auction problem.

Syntactically, our aggregates can occur
in any part of a formula, even nested inside each other. (The idea of
``nested aggregates'' is not completely new, as the proof of Theorem~3(a)
in~\cite{fer07c} involves ``nested weight constraints''.)
In our definition of an aggregate we can have, in the same program/theory,
many other kinds of constructs, such as choice rules and disjunction in
the head, while other definitions allow only a subset of them.
Our aggregates seems not to exibit the unintuitive behaviours of other
definitions of aggregates.

It also turns out that a minor syntactical modification of programs with
FLP-aggregates allows us to view them as a special kind of
our aggregates.
(The new picture of extensions is shown in Figure~\ref{fig:proposed}.)
Consequently, we also have a ``propositional'' representation of
FLP-aggregates. We use this fact to compare them with other aggregates
that have a characterization in terms of nested expressions. (As we
said,~\cite{fer05b} showed that weight constraints can be expressed
as nested expressions, and also~\cite{pel03}
implicitly defined PDB-aggregates in terms of nested expressions.)
We will show that all characterizations of aggregates are essentially
equivalent to each other when the aggregates are monotone or antimonotone and
without negation, while there are differences in the other
cases.\footnote{The important role of monotonicity in aggregates has already
been shown, for instance, in~\cite{fab04}.}

The paper is divided into three main parts. We start, in the next section,
with the new definition of a stable model for propositional theories, their
properties and comparisons with previous definitions of stable models and
equilibrium logic. In Section~\ref{sec:aggregates} we
present our aggregates, their properties and the comparisons with other
definitions of aggregates. Section~\ref{sec:proofs} contains all proofs for
the theorems of this paper. The paper ends with the conclusions in
Section~\ref{sec:conclusions}.

Preliminary reports on some results of this paper were published
in~\cite{fer05}.

\section{Stable models of propositional theories}\label{sec:proptheories}

\subsection{Definition}\label{sec:pt-def}

Usually, in logic programming, variables are allowed. As in most definitions
of a stable model, we assume that the variables have been replaced by
constants in a process called ``grounding'' (see, for instance,~\cite{gel88}),
so that we can consider the signature to be essentially propositional.

{\em (Propositional) formulas} are built from atoms and the 0-place
connective $\bot$ (false), using the connectives $\wedge$, $\vee$ and
$\implies$. Even if our definition of a stable model below applies to
formulas with all propositional connectives, we will consider $\top$ as an
abbreviation for $\bot\implies \bot$, a formula $\neg F$ as an abbreviation
for $F\implies \bot$ and $F\eq G$ as an abbreviation for
$(F\implies G) \wedge (G\implies F)$. This will keep notation for other
sections simpler. It can be shown that these abbreviations perfectly capture
the meaning of $\top$, $\neg$ and $\eq$ as primitive connectives
in the stable model semantics.

A {\em (propositional)
theory} is a set of formulas.  As usual in logic programming, truth
assignments will be viewed as sets of atoms; we will write  $X\models F$
to express that a set~$X$ of atoms satisfies a formula~$F$, and similarly
for theories.
 
An implication $F\implies G$ can be also written as a ``rule'' $G\ar F$,
so that traditional programs, disjunctive programs and
programs with nested expressions (reviewed in Section~\ref{sec:ne})
can be seen as special cases of propositional
theories.\footnote{Traditionally, conjunction is represented in a logic
program by a comma, disjunction by a semicolon, and negation as failure
as $\no$.}

We will now define when a  set $X$ of atoms
is a stable model of a propositional theory $\Gamma$. For the rest of the
section $X$ denotes a  set of atoms.

The {\em reduct} $F^X$ of a propositional formula $F$ relative to $X$
is obtained from $F$ by replacing each maximal subformula not satisfied by
$X$ with $\bot$. That is, recursively,
\begin{itemize}
\item
$\bot^X = \bot$;
\item
for every atom $a$, if $X\models a$ then $a^X$ is $a$, otherwise
it is $\bot$; and
\item
for every formulas $F$ and $G$ and any binary connective $\otimes$,
if $X\models F\otimes G$ then $(F\otimes G)^X$ is $F^X\otimes G^X$, otherwise
it is $\bot$.
\end{itemize}
This definition of reduct is similar to a transformation proposed
in~\cite[Section~4.2]{oso04}.

For instance, if $X$ contains $p$ but not $q$ then
\beq2
\begin{split}
(p\ar\neg q)^X&=(p\ar (q\implies \bot))^X=p\ar(\bot\implies\bot)=p\ar\top\\
(q\ar\neg p)^X&=(q\ar(p\implies \bot))^X=\bot\ar\bot\\
((p\implies q)\vee (q\implies p))^X&=\bot\vee (\bot \implies p)\\
\end{split}
\eeq2{exreducts}

The {\em reduct} $\Gamma^X$ of a propositional theory $\Gamma$ relative to
$X$ is $\{F^X:F\in \Gamma\}$. A  set $X$ of atoms is a
{\em stable model} of $\Gamma$ if $X$ is a minimal set satisfying $\Gamma^X$.

For instance, let $\Gamma$ be the theory consisting of
\beq2
\begin{split}
&p\ar \neg q\\
&q\ar\neg p
\end{split}
\eeq2{tradprogram}
Theory $\Gamma$ is actually a traditional program, a logic program in the sense
of~\cite{gel88} (more details in the next section). Set $\{p\}$ is a stable
model of $\Gamma$; indeed, by looking at the first
two lines of~(\ref{exreducts}) we can see that $\Gamma^{\{p\}}$ is
$\{p\ar\top,\bot\ar\bot\}$, which is satisfied by $\{p\}$ but not by its
unique proper subset $\emptyset$.
It is easy to verify that $\{q\}$ is the only other
stable model of $\Gamma$. Similarly, it is not difficult to see that
$\{p\}$ is the only stable model of the theory
\beq2
\begin{split}
&(p\implies q)\vee (q\implies p)\\
&p
\end{split}
\eeq2{proptheory}
(The reduct relative to $\{p\}$ is $\{\bot\vee (\bot \implies p),p\}$).

As the name suggests, a stable model of a propositional theory
$\Gamma$ is a model --- in the sense of classical logic --- of $\Gamma$.
Indeed, it follows from the easily verifiable
fact that, for each  set $X$ of atoms, $X\models \Gamma^X$ iff
$X\models \Gamma$. On the other hand,
formulas that are equivalent in classical logic may have different stable
models: for instance,
$\{\neg\neg p\}$ has no stable models, while $\{p\}$ has stable model $\{p\}$.
Proposition~\ref{prop:se} below will give some characterizations of
transformations that preserves stable models.
Notice that classically equivalent transformations can be applied to the
reduct of a theory, as the sets of atoms that are minimal don't change.

Finally, a note about a second kind of negation in propositional theories.
In \cite[Section~3.9]{fer05e}, atoms were divided into two groups: ``positive''
and ``negative'', so that each negative atom has the form $\sneg a$, where $a$
is a positive atom.  Symbol $\sneg\ $ is called ``strong negation'', to
distinguish it from the connective $\neg$, which is called
{\em negation as failure}.\footnote{Strong negation was introduced in the
syntax of logic programs in~\cite{gel91b}. In that paper, it was called
``classical negation'' and treated not as a part of an atom, but rather as a
logical operator.} In presence of strong negation, the stable model semantics
says that only sets of atoms that don't contain both atoms $a$ and $\sneg a$
can be stable models. For simplicity, we will make no distinctions between
positive and negative atoms, considering that we can remove the sets of atoms
containing any pair of atoms $a$ and $b$ from the stable models of a theory
by adding a formula $\neg (a\wedge b)$ to the theory.
(See Proposition~\ref{prop:constraint}).

\subsection{Relationship with previous definitions of a stable model}
\label{sec:ne}

As mentioned in the introduction, a propositional theory is the extension
of traditional programs~\cite{gel88}, disjunctive programs~\cite{gel91b}
and programs with nested expressions~\cite{lif99d}
(see Figure~\ref{fig:proposed}). We want to compare the
definition of a stable model from the previous section with the definitions
in the three papers cited above.

\begin{figure}
\begin{tabular}{|c|c|}
\hline
kind of rule& syntax \\
\hline
\hline
traditional& $a\ar l_1\wedge\cdots\wedge l_n$\\
\hline
disjunctive& $a_1\vee\cdots\vee a_m\ar l_1\wedge\cdots\wedge l_n$\\
\hline
with nested expressions& $F\ar G\quad$ ($F$ and $G$ are nested expressions)\\
\hline
\end{tabular}
\caption{Syntax of ``propositional'' logic programs. Each $a,a_1,\dots,a_m$
($m\geq 0$) denotes an atom, and each $l_1,\dots,l_n$ ($n\geq 0$) a literal
--- an atom possibly prefixed by $\neg$. A {\em nested expression} is
any formula that contains no implications other than negations or $\top$.}
\label{fig:ne}
\end{figure}

The syntax of a {\em traditional rule}, {\em disjunctive rule} and
{\em rule with nested expressions} are shown in Figure~\ref{fig:ne}.
We understand an empty conjunction as $\top$ and an empty disjunction as
$\bot$, so that traditional and disjunctive rules are also rules with
nested expressions. The part before and after the arrow $\ar$ are called the
{\em head} and the {\em body} of the rule, respectively. When the body
is empty (or $\top$), we can denote the whole rule by its head.
A {\em logic program} is a set of rules. If all rules in a logic program are
traditional then we
say that the program is {\em traditional} too, and similarly for the other
two kinds of rules.

For instance,~(\ref{tradprogram}) is a traditional program as well as
a disjunctive program and a program with nested expressions. On the
other hand,~(\ref{proptheory}) is not a logic program of any of those kinds,
because of the first formula that contains implications nested in a
disjunction.

For all kinds of programs described above, the definition of a stable model
is similar to ours for propositional theories: to check whether a 
set $X$ of atoms is a stable model of a program $\Pi$, we (i) compute the
reduct of $\Pi$ relative to $X$, and (ii) verify if $X$ is a minimal model
of such reduct. On the other hand, the way in which the reduct is computed
is different. We consider the definition from~\cite{lif99d}, as the
definitions from~\cite[\citeyear{gel91b}]{gel88} are essentially its
special cases.

The {\em reduct} $\Pi^{\un X}$ of a program $\Pi$ with nested expressions
relative to a  set $X$ of atoms is the result of replacing,
in each rule of $\Pi$, each maximal subformula of the form $\neg F$
with $\top$ if $X\models \neg F$, and with $\bot$ otherwise.
Set $X$ is a {\em stable
model} of $\Pi$ if it is a minimal model of $\Pi^{\un X}$.~\footnote{We
underline the set $X$ in $\Pi^{\un X}$ to distinguish this
definition of a reduct from the one from the previous section.}

For instance, if $\Pi$ is~(\ref{tradprogram}) then the reduct
$\Pi^{\un {\{p\}}}$ is
\begin{equation*}
\begin{split}
&p\ar \top\\
&q\ar \bot,\\
\end{split}
\end{equation*}
while $\Pi^{\un \emptyset}$ is
\begin{equation*}
\begin{split}
&p\ar \top\\
&q\ar \top,\\
\end{split}
\end{equation*}
The stable models of $\Pi$ --- based on this definition of the reduct ---
are the same ones that we computed in the previous section using the newer
definition of a reduct: $\{p\}$ and $\{q\}$. On the
other hand, there are differences in the value of the reducts: for instance,
we have just seen that $\Pi^{\un \emptyset}$ is classically equivalent to
$\{p,q\}$, while $\Pi^\emptyset=\{\bot,\bot\}$. However, some similarities
between these definitions exist. For instance, negations are treated
essentially in the same way: a nested expression $\neg F$ is transformed into
$\bot$ if $X\models F$, and into
$\top$ otherwise, under both definitions of a reduct.

The following proposition states a more general relationship between the new
definition and the 1999 definition of a reduct.

\begin{proposition}
\label{pt-prop1}
For any program $\Pi$ with nested expressions and any set $X$ of atoms,
$\Pi^X$ is equivalent, in the sense of classical logic,
\begin{itemize}
\item
to $\bot$, if $X\not\models \Pi$, and
\item
to the program obtained from $\Pi^{\un X}$ by replacing all atoms
that do not belong to $X$ by $\bot$, otherwise.
\end{itemize}
\end{proposition}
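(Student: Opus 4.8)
The plan is to prove the statement by structural induction on the formula, generalized from rules to arbitrary nested-expression formulas (i.e., formulas whose only implications are inside negations or $\top$). For a single such formula $F$ and a set $X$ of atoms, I would prove simultaneously: (a) if $X\not\models F$ then $F^X$ is equivalent in classical logic to $\bot$; and (b) if $X\models F$ then $F^X$ is equivalent in classical logic to the formula $\widehat{F^{\un X}}$ obtained from $F^{\un X}$ by replacing every atom not in $X$ with $\bot$. Since a program is a conjunction (set) of rules, and both reducts act rule-by-rule, the proposition for $\Pi$ follows immediately: if $X\not\models\Pi$, some rule $R$ has $X\not\models R$, so $R^X\equiv\bot$ and hence $\Pi^X\equiv\bot$; if $X\models\Pi$, then each $R^X\equiv\widehat{R^{\un X}}$, so $\Pi^X\equiv\widehat{\Pi^{\un X}}$.

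Part (a) is in fact immediate from the definition of the new reduct (a maximal unsatisfied subformula — here all of $F$ — is replaced by $\bot$), so no induction is needed there; it is the fact recorded in the excerpt that $X\models\Gamma^X$ iff $X\models\Gamma$, specialized. The real work is part (b), by induction on $F$ under the hypothesis $X\models F$. The base cases are $\bot$ (vacuous, since $X\not\models\bot$) and atoms $a$ (then $X\models a$, so $a^X=a$, and $a$ is in $X$ so $\widehat{a^{\un X}}=a$). For a conjunction $F\wedge G$ with $X\models F\wedge G$: both conjuncts are satisfied, so $(F\wedge G)^X=F^X\wedge G^X$, and on the other side the 1999 reduct and the atom-replacement both distribute over $\wedge$; apply the induction hypothesis to each conjunct. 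Disjunction $F\vee G$ with $X\models F\vee G$ is the one genuinely interesting case: here $(F\vee G)^X=F^X\vee G^X$ even when, say, $X\not\models F$; but then by part (a) $F^X\equiv\bot$, so $F^X\vee G^X\equiv G^X$, and by the induction hypothesis $G^X\equiv\widehat{G^{\un X}}$; on the other side $\widehat{(F\vee G)^{\un X}}=\widehat{F^{\un X}}\vee\widehat{G^{\un X}}$, and I must argue this is classically equivalent to $\widehat{G^{\un X}}$ — which follows because the induction hypothesis, now in its part-(a) form applied to $F$ under $X\not\models F$, would need to say that $\widehat{F^{\un X}}$ is classically equivalent to $\bot$. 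So the induction should really carry both halves (a$'$) and (b) together, where (a$'$) states: if $X\not\models F$ then $\widehat{F^{\un X}}\equiv\bot$ as well (equivalently, both reducts collapse). This mutual form makes the disjunction case go through cleanly.

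The last case is negation: a maximal subformula $\neg F$. Since $\neg F$ abbreviates $F\implies\bot$ and $F$ contains no further implications outside negations, $\neg F$ is handled as a unit by both reducts. Under the new reduct, if $X\models\neg F$ then $(\neg F)^X=(F\implies\bot)^X=F^X\implies\bot^X=F^X\implies\bot$, and since $X\not\models F$, part (a) gives $F^X\equiv\bot$, so $(\neg F)^X\equiv(\bot\implies\bot)\equiv\top$; if $X\not\models\neg F$ then $(\neg F)^X=\bot$. Under the 1999 reduct, $\neg F$ is replaced by $\top$ if $X\models\neg F$ and by $\bot$ otherwise, and the atom-replacement step does nothing to these constants. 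So in both subcases the two sides agree up to classical equivalence ($\top$ or $\bot$ respectively, and these contain no atoms so the replacement is harmless), which is exactly what (b)/(a$'$) demand.

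The step I expect to be the main obstacle is getting the induction statement exactly right so that the disjunction (and, dually, any nested-negation) case closes: one must verify that the maximal-subformula clause in the new reduct's definition and the maximal-$\neg F$ clause in the 1999 definition interact correctly, which is why I would state and carry the strengthened biconditional (both part (a$'$) for the unsatisfied case and part (b) for the satisfied case) through the induction rather than proving (b) alone. Everything else is routine once the invariant is fixed; in particular, the atom-replacement operation $\widehat{(\cdot)}$ commutes with $\wedge$ and $\vee$ and fixes $\bot$, $\top$, and atoms in $X$, which are the only facts about it that are needed.
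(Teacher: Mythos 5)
Your proposal is correct and follows essentially the same route as the paper: the paper proves an unconditional lemma that for any nested expression $F$, $F^X$ is classically equivalent to $F^{\un X}$ with atoms outside $X$ replaced by $\bot$ (using a separate lemma that the new reduct distributes over $\wedge$ and $\vee$ up to classical equivalence), and your strengthened two-part invariant (a$'$)/(b) is just this same statement split by cases, with your disjunction analysis playing the role of the paper's distribution lemma. The only detail worth tightening is that a rule $F\implies G$ is itself not a nested expression, so the final rule-level step ($R^X = F^X\implies G^X \equiv \widehat{F^{\un X}}\implies\widehat{G^{\un X}}$ when $X\models R$) needs to be stated once explicitly, exactly as the paper does.
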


\begin{corollary}
\label{cor1}
Given two sets of atoms $X$ and $Y$ with $Y\subseteq X$ and any program
$\Pi$ with nested expressions,
$Y\models \Pi^X$ iff $X\models \Pi$ and $Y\models \Pi^{\un X}$.
\end{corollary}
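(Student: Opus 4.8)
The plan is a case analysis on whether $X\models\Pi$, with Proposition~\ref{pt-prop1} doing the heavy lifting in each case.

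If $X\not\models\Pi$, the right-hand side of the desired equivalence is false outright. On the left, Proposition~\ref{pt-prop1} gives that $\Pi^X$ is classically equivalent to $\bot$, which no set of atoms satisfies; so $Y\not\models\Pi^X$ and the left-hand side is false as well. Hence the equivalence holds (both sides failing), and $Y\subseteq X$ is not even needed in this case.

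If $X\models\Pi$, the right-hand side reduces to $Y\models\Pi^{\un X}$, so it suffices to prove $Y\models\Pi^X$ iff $Y\models\Pi^{\un X}$. By Proposition~\ref{pt-prop1}, $\Pi^X$ is classically equivalent to the program $\Pi'$ obtained from $\Pi^{\un X}$ by replacing every atom not in $X$ with $\bot$, so I only need $Y\models\Pi'$ iff $Y\models\Pi^{\un X}$. This follows from a small lemma, proved by induction on the structure of a formula $F$: if $Y\subseteq X$, then $F$ and the formula $F'$ obtained from $F$ by replacing each atom not in $X$ by $\bot$ have the same truth value under $Y$. The only nontrivial base case is an atom $a\notin X$, where $a\notin Y$ (since $Y\subseteq X$) makes both $a$ and $\bot$ false under $Y$; the connective cases are immediate because the replacement commutes with $\wedge$, $\vee$, $\implies$ (hence with the derived $\neg$ and $\top$) and truth under $Y$ is compositional. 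Applying this rule by rule to $\Pi^{\un X}$ gives the needed equivalence and closes the case.

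I do not expect a genuine obstacle: once Proposition~\ref{pt-prop1} is in hand the argument is routine bookkeeping. The one point requiring a little care is to phrase the inductive lemma as an equality of truth values (rather than merely ``is satisfied by''), so that it propagates cleanly through all the binary connectives and the $\neg$/$\top$ abbreviations.
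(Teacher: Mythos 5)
Your proof is correct and is exactly the derivation the paper intends: the paper states Corollary~\ref{cor1} as an immediate consequence of Proposition~\ref{pt-prop1} without writing out the details, and your case split on $X\models\Pi$ together with the observation that replacing atoms outside $X$ by $\bot$ does not change satisfaction under any $Y\subseteq X$ is precisely the missing bookkeeping. No gaps.
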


From the corollary above, one of the main claims of this paper follows,
that our definition of a stable model is an extension of the definition
for programs with nested expressions.

\begin{proposition}
\label{th:th1}
For any program $\Pi$ with nested expressions, the collections of stable
models of $\Pi$ according to our definition and according to~\cite{lif99d}
are identical.
\end{proposition}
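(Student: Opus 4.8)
The plan is to deduce the statement from Corollary~\ref{cor1}, which already isolates the combinatorial content. Fix a program $\Pi$ with nested expressions and a set $X$ of atoms. Recall that the new definition declares $X$ stable iff $X$ is a minimal set satisfying $\Pi^X$, while the 1999 definition declares $X$ stable iff $X$ is a minimal model of $\Pi^{\un X}$; in each case the condition splits into ``$X$ itself satisfies the relevant reduct'' together with ``no $Y\subseteq X$ with $Y\neq X$ satisfies that reduct.''

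First I would dispose of the ``$X$ satisfies the reduct'' part. The text already notes $X\models\Pi^X$ iff $X\models\Pi$. The analogous statement $X\models\Pi^{\un X}$ iff $X\models\Pi$ is equally immediate: in forming $\Pi^{\un X}$ each maximal subformula $\neg F$ of a rule of $\Pi$ is replaced by $\top$ precisely when $X\models\neg F$ and by $\bot$ precisely when $X\not\models\neg F$, so the replacement never alters the truth value of a rule under $X$ (and atoms are untouched). Hence ``$X$ is a model of its reduct'' means the same thing, namely $X\models\Pi$, under both definitions.

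Next, under the assumption $X\models\Pi$, Corollary~\ref{cor1} gives, for every $Y\subseteq X$, the equivalence $Y\models\Pi^X$ iff $Y\models\Pi^{\un X}$. Thus the subsets of $X$ satisfying $\Pi^X$ and those satisfying $\Pi^{\un X}$ are literally the same family; in particular $X$ has a proper subset satisfying $\Pi^X$ iff it has a proper subset satisfying $\Pi^{\un X}$. Chaining this with the previous paragraph yields: $X$ is a minimal set satisfying $\Pi^X$ iff $X\models\Pi$ and no $Y\subseteq X$ with $Y\neq X$ satisfies $\Pi^X$, iff $X\models\Pi$ and no such $Y$ satisfies $\Pi^{\un X}$, iff $X$ is a minimal model of $\Pi^{\un X}$. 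That is exactly the assertion that the two notions of stable model coincide.

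I do not anticipate a real obstacle: the substantive work is already packaged into Corollary~\ref{cor1}, hence into Proposition~\ref{pt-prop1}. The only points requiring a little care are the bookkeeping in the first step (recording that both ``model of the reduct'' conditions collapse to $X\models\Pi$) and remembering that minimality only constrains proper subsets of $X$ --- so the comparison of the two reducts \emph{restricted to subsets of $X$}, which is all that Corollary~\ref{cor1} supplies, is precisely what is needed and nothing more.
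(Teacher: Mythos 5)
Your proposal is correct and follows essentially the same route as the paper: both dispose of the case $X\not\models\Pi$ (you do this by observing that each ``model of the reduct'' condition collapses to $X\models\Pi$, where the paper appeals to the same fact as a known property), and both then invoke Corollary~\ref{cor1} to conclude that the two reducts are satisfied by exactly the same subsets of $X$, so the minimality conditions coincide. No gaps.
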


\subsection{Relationship with Equilibrium Logic}

Equilibrium logic~\cite[\citeyear{pea99}]{pea97} is defined in terms
of Kripke models in the logic of here-and-there, a logic intermediate between
intuitionistic and classical logic.

The logic of here-and-there is a 3-valued logic, where an interpretation
(called an {\em HT-interpretation}) is represented by a pair $(X,Y)$ of
sets of atoms where $X\subseteq Y$. Intuitively, atoms in $X$ are considered
``true'', atoms not in $Y$ are considered ``false'', and all other atoms
(that belong to $Y$ but not $X$) are ``undefined''.

An HT-interpretation $(X,Y)$ {\em satisfies} a formula $F$ (symbolically,
$(X,Y)\models F$) based on the following recursive definition ($a$ stands
for an atom):
\begin{itemize}
\item
$(X,Y)\models a$ iff $a\in X$,
\item
$(X,Y)\not\models \bot$,
\item
$(X,Y)\models F\wedge G$ iff $(X,Y)\models F$ and $(X,Y)\models G$,
\item
$(X,Y)\models F\vee G$ iff $(X,Y)\models F$ or $(X,Y)\models G$,
\item
$(X,Y)\models F\implies G$ iff $(X,Y)\models F$ implies $(X,Y)\models G$,
and $Y$ satisfies $F\implies G$ in classical logic.
\end{itemize}
An HT-interpretation $(X,Y)$ {\em satisfies} a propositional theory if it
satisfies all the elements of the theory. Two formulas are {\em equivalent} in
the logic of here-and-there if they are satisfied by the same
HT-interpretations.

Equilibrium logic defines when a  set $X$ of atoms
is an equilibrium model of a propositional theory $\Gamma$. Set
$X$ is {\em an equilibrium model} of $\Gamma$ if $(X,X)\models \Gamma$ and,
for all proper subsets $Z$ of $X$, $(Z,X)\not\models \Gamma$.

A relationship between the concept of a model in the logic of here-and-there,
and satisfaction of the reduct exists.

\begin{proposition}
\label{lemma:mainlemma}
For any formula $F$ and any HT-interpretation $(X,Y)$, $(X,Y)\models F$ iff
$X\models F^Y$.
\end{proposition}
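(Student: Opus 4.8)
The plan is to prove the equivalence by structural induction on $F$, with the induction hypothesis being the statement of the proposition for the immediate subformulas of $F$. Fix the HT-interpretation $(X,Y)$, so that $X\subseteq Y$; throughout, $\models$ applied to a single set denotes classical satisfaction, and recall that formulas are built only from atoms and $\bot$ using $\wedge$, $\vee$, $\implies$ (with $\top$, $\neg$, $\eq$ being abbreviations), so these are the only cases to treat.

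First I would record a trivial fact about the reduct: for every formula $F$, if $Y\not\models F$ then $F^Y=\bot$. This is immediate from the three clauses of the definition ($\bot^Y=\bot$; for an atom $a\notin Y$, $a^Y=\bot$; and when $Y\not\models G\otimes H$ the reduct is $\bot$ by the ``otherwise'' part of the clause for binary connectives). Consequently $X\not\models F^Y$ whenever $Y\not\models F$, so all the real content lies in the case $Y\models F$.

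For the induction itself, the base cases $F=\bot$ and $F=a$ are direct: $(X,Y)\not\models\bot$ and $\bot^Y=\bot$ is unsatisfied by $X$; and for an atom, if $a\in Y$ then $a^Y=a$ and both sides assert $a\in X$, while if $a\notin Y$ then also $a\notin X$ (since $X\subseteq Y$), so both sides fail. In the inductive step write $F=G\otimes H$ with $\otimes\in\{\wedge,\vee,\implies\}$ and split on whether $Y\models F$. If $Y\not\models F$, the right-hand side fails by the fact above, and the left-hand side fails too: for $\wedge$ (resp. $\vee$) because $Y$ fails one (resp. both) of $G,H$, so by the fact above and the induction hypothesis $(X,Y)$ fails the corresponding conjunct (resp. both disjuncts); for $\implies$ because the HT clause for implication explicitly demands that $Y$ satisfy $G\implies H$ in classical logic. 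If $Y\models F$, then $F^Y=G^Y\otimes H^Y$, and by the induction hypothesis $(X,Y)\models G$ iff $X\models G^Y$, and likewise for $H$. For $\wedge$ and $\vee$ the desired equivalence is then immediate. For $\implies$: $(X,Y)\models G\implies H$ holds iff both ``$(X,Y)\models G$ implies $(X,Y)\models H$'' and $Y\models G\implies H$ hold; the second conjunct holds by assumption, so this reduces to ``$(X,Y)\models G$ implies $(X,Y)\models H$'', which by the induction hypothesis is ``$X\models G^Y$ implies $X\models H^Y$'', i.e. $X\models G^Y\implies H^Y = F^Y$.

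I do not expect a serious obstacle here --- it is a routine induction. The only point needing care is the interaction between the reduct's rule of collapsing $Y$-unsatisfied subformulas to $\bot$ and the extra classical-satisfaction requirement built into the HT-semantics of $\implies$; it is exactly this requirement (equivalently, persistence: $(X,Y)\models F$ implies $Y\models F$) that makes the ``$Y\not\models F$'' case work out, and with it the whole statement. The rest is bookkeeping.
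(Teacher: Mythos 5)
Your proof is correct and follows essentially the same route as the paper's: structural induction on $F$ over the cases $\bot$, atom, $\wedge$, $\vee$, $\implies$, using the observation that $Y$-unsatisfied subformulas reduce to $\bot$ and that the HT clause for implication builds in classical satisfaction at $Y$. The only cosmetic difference is that the paper packages the case split for $\wedge$ and $\vee$ into a separate lemma (that the reduct commutes with conjunction and disjunction up to classical equivalence), whereas you carry it out inline.
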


Next proposition compares the concept of an equilibrium model with the new
definition of a stable model.

\begin{proposition}
\label{pt-mainth}
For any theory, its models in the sense of equilibrium logic are identical
to its stable models.
\end{proposition}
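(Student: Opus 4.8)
The plan is to derive Proposition~\ref{pt-mainth} as an immediate corollary of Proposition~\ref{lemma:mainlemma}, by unwinding both definitions and observing that they become literally the same condition. First I would fix a propositional theory $\Gamma$ and a set $X$ of atoms, and recall the two notions to be compared: $X$ is a stable model of $\Gamma$ when $X$ is a minimal set satisfying $\Gamma^X$, and $X$ is an equilibrium model of $\Gamma$ when $(X,X)\models\Gamma$ and $(Z,X)\not\models\Gamma$ for every proper subset $Z$ of $X$.

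The key step is to translate every occurrence of classical satisfaction of a reduct into HT-satisfaction using Proposition~\ref{lemma:mainlemma}, which says $(X,Y)\models F$ iff $X\models F^Y$, applied with $Y:=X$. Taking $X=Y$ gives $(X,X)\models F$ iff $X\models F^X$, so $(X,X)\models\Gamma$ iff $X\models\Gamma^X$; and by the remark in the text (the easily verifiable fact that $X\models\Gamma^X$ iff $X\models\Gamma$) this is also equivalent to $(X,X)\models\Gamma$ being nothing other than classical satisfiability of $\Gamma$ by $X$ — though I do not even need that reformulation. More importantly, for a proper subset $Z\subsetneq X$, Proposition~\ref{lemma:mainlemma} with the HT-interpretation $(Z,X)$ (legitimate since $Z\subseteq X$) gives $(Z,X)\models F$ iff $Z\models F^X$, hence $(Z,X)\models\Gamma$ iff $Z\models\Gamma^X$. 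Therefore the condition ``$X$ satisfies $\Gamma^X$ and no proper subset $Z$ of $X$ satisfies $\Gamma^X$'' — i.e. $X$ is a minimal set satisfying $\Gamma^X$, i.e. $X$ is a stable model — is equivalent, clause by clause, to ``$(X,X)\models\Gamma$ and $(Z,X)\not\models\Gamma$ for all proper subsets $Z$ of $X$'' — i.e. $X$ is an equilibrium model.

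The only subtlety is that minimality of $X$ among sets satisfying $\Gamma^X$ is a priori minimality among \emph{all} sets of atoms, not just subsets of $X$; but this is harmless, because if $X\models\Gamma^X$ then automatically every minimal model of $\Gamma^X$ contained in the support is a subset of $X$ — more directly, ``$X$ is a minimal set satisfying $\Gamma^X$'' means $X\models\Gamma^X$ and there is no proper subset of $X$ satisfying $\Gamma^X$, which is exactly the form matched above. (One can note, if desired, that $\Gamma^X$ may only be satisfiable by sets related to $X$, but we need nothing beyond comparing $X$ with its own proper subsets.) I expect no real obstacle here; essentially all the content is in Proposition~\ref{lemma:mainlemma}, and this proof is a short bookkeeping argument matching the two definitions term for term, so the main thing to be careful about is simply citing Proposition~\ref{lemma:mainlemma} with the right interpretation $(Z,X)$ and confirming $Z\subseteq X$ so that it is a well-formed HT-interpretation.
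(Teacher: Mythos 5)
Your proposal is correct and follows essentially the same route as the paper's proof: both apply Proposition~\ref{lemma:mainlemma} to the pairs $(X,X)$ and $(Z,X)$ for $Z\subsetneq X$ to translate the equilibrium-model condition clause by clause into the stable-model condition. The extra remark about what ``minimal set satisfying $\Gamma^X$'' means is harmless bookkeeping that the paper leaves implicit.
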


This proposition offers another way of proving Proposition~\ref{th:th1},
as~\cite{lif01} showed that the equilibrium models of a program with nested
expressions are the stable models of the same program in the sense
of~\cite{lif99d}.

\subsection{Properties of propositional theories}\label{sec:pt-properties}

This section shows how several theorems about logic programs with nested
expressions can be extended to propositional theories.

\subsubsection{Strong equivalence}

Two theories $\Gamma_1$ and $\Gamma_2$ are {\em strongly equivalent} if,
for every theory $\Gamma$, $\Gamma_1\cup \Gamma$ and $\Gamma_2\cup \Gamma$
have the same stable models.

\begin{proposition}\label{prop:se}
For any two theories $\Gamma_1$ and $\Gamma_2$, the following conditions
are equivalent:
\begin{enumerate}
\item[(i)]
$\Gamma_1$ is strongly equivalent to $\Gamma_2$,
\item[(ii)]
$\Gamma_1$ is equivalent to $\Gamma_2$ in the logic of here-and-there, and
\item[(iii)]
for each  set $X$ of atoms, $\Gamma_1^X$ is equivalent to $\Gamma_2^X$
in classical logic.
\end{enumerate}
\end{proposition}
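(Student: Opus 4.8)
The plan is to prove the cycle of implications (iii) $\Rightarrow$ (i) $\Rightarrow$ (ii) $\Rightarrow$ (iii), leaning heavily on Proposition~\ref{lemma:mainlemma} (the correspondence $(X,Y)\models F$ iff $X\models F^Y$), which already does most of the conceptual work connecting reducts and here-and-there satisfaction.

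For (iii) $\Rightarrow$ (i): assume $\Gamma_1^X$ and $\Gamma_2^X$ are classically equivalent for every set $X$ of atoms. Fix an arbitrary theory $\Gamma$ and a set $X$. I would first observe that the reduct operation distributes over unions, $(\Gamma_i\cup\Gamma)^X=\Gamma_i^X\cup\Gamma^X$, directly from the definition of $\Gamma^X$ as $\{F^X:F\in\Gamma\}$. Hence $(\Gamma_1\cup\Gamma)^X$ and $(\Gamma_2\cup\Gamma)^X$ are classically equivalent. Since a set $Y$ of atoms is a stable model of $\Gamma_i\cup\Gamma$ exactly when $Y$ is a minimal set satisfying $(\Gamma_i\cup\Gamma)^Y$ — and "minimal set satisfying" depends only on the classical models of the reduct — the two theories have the same stable models. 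So $\Gamma_1$ and $\Gamma_2$ are strongly equivalent.

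For (i) $\Rightarrow$ (ii): I would argue by contraposition. Suppose $\Gamma_1$ and $\Gamma_2$ are not HT-equivalent, so some HT-interpretation $(X,Y)$ (with $X\subseteq Y$) satisfies one but not the other, say $(X,Y)\models\Gamma_1$ and $(X,Y)\not\models\Gamma_2$. The standard trick is to exhibit a context theory $\Gamma$ that separates them. A natural candidate, adapting the classical strong-equivalence proof of Lifschitz, Pearce and Valverde, is to take $\Gamma$ built from $Y$ and $X$ — for instance $\Gamma=Y\cup\{\,b\implies a : a\in Y\setminus X,\ b\in Y\setminus X\,\}$ together with, if $X\neq Y$, the "choice-like" formulas forcing exactly the gap $Y\setminus X$ to behave as undefined. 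One then checks, using Proposition~\ref{lemma:mainlemma} and Proposition~\ref{pt-mainth} (equilibrium models = stable models), that $\Gamma\cup\Gamma_1$ and $\Gamma\cup\Gamma_2$ have different stable models: $Y$ (or $X$) is a stable model of one of the unions but not the other because the witnessing HT-interpretation survives in exactly one case. This step is the main obstacle — the separating context has to be designed so that its equilibrium models pin down precisely the interpretation $(X,Y)$, and verifying that it does requires care when $X\subsetneq Y$.

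For (ii) $\Rightarrow$ (iii): assume $\Gamma_1$ and $\Gamma_2$ are HT-equivalent, and fix a set $X$ of atoms; I must show $\Gamma_1^X$ and $\Gamma_2^X$ have the same classical models. Let $Z$ be any set of atoms; I want $Z\models\Gamma_1^X$ iff $Z\models\Gamma_2^X$. Here I would split on whether $Z\subseteq X$. If $Z\subseteq X$, then $(Z,X)$ is a legitimate HT-interpretation, and by Proposition~\ref{lemma:mainlemma}, $Z\models\Gamma_i^X$ iff $(Z,X)\models\Gamma_i$; HT-equivalence gives the conclusion. If $Z\not\subseteq X$, then $Z$ contains an atom $a\notin X$; since $a^X=\bot$ and every occurrence of such $a$ in $\Gamma_i^X$ sits inside a subformula replaced by $\bot$ (more carefully: the reduct $F^X$ contains only atoms from $X$, by an easy induction on $F$), one checks that $Z\models\Gamma_i^X$ iff $Z\cap X\models\Gamma_i^X$, reducing to the previous case. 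This closes the cycle. I expect (ii) $\Rightarrow$ (iii) and (iii) $\Rightarrow$ (i) to be essentially routine once Proposition~\ref{lemma:mainlemma} is in hand; the real work is the construction of the separating theory in (i) $\Rightarrow$ (ii).
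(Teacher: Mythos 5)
Your steps (iii) $\Rightarrow$ (i) and (ii) $\Rightarrow$ (iii) are correct and essentially match the paper: the paper proves (ii) $\Leftrightarrow$ (iii) exactly as you do, via Proposition~\ref{lemma:mainlemma} together with the observation that $\Gamma_i^X$ mentions only atoms of $X$, and your direct (iii) $\Rightarrow$ (i) via distribution of the reduct over unions is a clean shortcut that the paper does not need because it routes everything through (ii).

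The gap is in (i) $\Rightarrow$ (ii), which is precisely the step the paper does \emph{not} prove from scratch: it invokes Lemma~4 of \cite{lif01}, which already establishes that HT-equivalence coincides with preservation of equilibrium models under arbitrary extensions, and then translates equilibrium models into stable models via Proposition~\ref{pt-mainth}. Your sketch of a separating context is not yet a proof, and the candidate you write down is wrong as stated: with $\Gamma=Y\cup\{\dots\}$ every HT-interpretation $(Z,Y)$ satisfying $\Gamma$ must have $Z\supseteq Y$, hence $Z=Y$, so the gap $Y\setminus X$ can never be ``undefined'' and the witnessing interpretation $(X,Y)$ is invisible to $\Gamma$; no additional ``choice-like'' formulas can repair this, since adding formulas only shrinks the set of HT-models. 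The standard construction takes $\Gamma=X\cup\{a\implies b: a,b\in Y\setminus X\}$, whose HT-models $(Z,Y)$ with $Z\subseteq Y$ are exactly $Z=X$ and $Z=Y$; and one must also split into cases according to whether $Y\models\Gamma_2$ classically (if not, the simpler context $\Gamma=Y$ already separates, using that $(X,Y)\models\Gamma_1$ forces $Y\models\Gamma_1$). None of this verification appears in your proposal --- you flag it yourself as ``the main obstacle'' --- so the argument is incomplete exactly where the work lies. Either carry out the Lifschitz--Pearce--Valverde construction in full, or do what the paper does and cite their lemma.
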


The equivalence between~(i) and~(ii) is essentially Lemma~4
from~\cite{lif01} about equilibrium logic. The equivalence between~(i)
and~(iii) is similar to Theorem~1 from~\cite{tur03} about nested expressions,
but simpler and more general. Notice that~(iii) cannot be replaced by
\begin{enumerate}
\item[(iii')]
for each  set $X$ of atoms, $\Gamma_1^{\un X}$ is equivalent to
$\Gamma_2^{\un X}$ in classical logic,
\end{enumerate}
not even when $\Gamma_1$ and $\Gamma_2$ are programs with nested expressions.
Indeed, $\{p\ar\neg p\}$ is strongly equivalent to $\{\bot\ar\neg p\}$, but
$\{p\ar\neg p\}^{\un \emptyset}=\{p\ar \top\}$ is not classically
equivalent to $\{\bot\ar\neg p\}^{\un \emptyset}=\{\bot\ar \top\}$.

Replacing, in a theory $\Gamma$, a (sub)formula $F$ with a formula $G$ is
guaranteed to preserve strong equivalence iff $F$ is strongly equivalent to
$G$. Indeed, strong equivalence between $F$ and $G$ is clearly a necessary
condition: take $\Gamma=\{F\}$. It is also sufficient because --- as in
classical logic --- replacements of formulas with equivalent formulas in
the logic of here-and-there preserves equivalence in the same logic.

Cabalar and Ferraris~[\citeyear{cab04}] showed that any propositional theory
is strongly
equivalent to a logic program with nested expressions. That is, a propositional
theory can be seen as a different way of writing a logic program.
This shows that the concept of a stable model for propositional theories
is not too different from the concept of a stable model for a logic program.

\subsubsection{Other properties}

To state several propositions below, we need the following definitions.
Recall that an expression of the form $\neg F$ is an abbreviation
for $F\implies \bot$, and equivalences are the conjunction of two opposite
implications.
An occurrence of an atom in a formula is {\em positive} if it is in
the antecedent of an even number of implications. An occurrence is
{\em strictly positive} if such number is 0, and {\em negative} if it
odd.\footnote{The concept of a positive and negative occurrence of an
atom should not be confused by the concept of a ``positive'' and ``negative''
atom mentioned at the end of Section~\ref{sec:pt-def}.}
For instance, in a formula
$(p\implies r)\implies q$, the occurrences of $p$ and $q$ are
positive, the one of $r$ is negative, and the one of $q$ is strictly positive.

The following proposition is an extension of the property that in each
stable model of a program, each atom occurs in the head of a rule of that
program~\cite[Section~3.1]{lif96b}.
An atom is an {\em head atom} of a theory $\Gamma$ if it has a
strictly positive occurrence in $\Gamma$.~\footnote{In case of
programs with nested expressions, it is easy to check that head atoms are
atoms that occur in the head of a rule outside the scope of negation $\neg$.}

\begin{proposition}\label{prop:head}
Each stable model of a theory $\Gamma$ consists of head atoms of $\Gamma$.
\end{proposition}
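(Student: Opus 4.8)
The plan is to show the contrapositive: if $X$ is a set of atoms containing some atom $a$ that is not a head atom of $\Gamma$, then $X$ is not a stable model of $\Gamma$. To do this I would exhibit the proper subset $X' = X \setminus \{a\}$ and argue that $X' \models \Gamma^X$, which contradicts minimality of $X$. The key observation is that the reduct $F^X$ of a formula $F$ never introduces an atom that did not already occur in $F$, and moreover $a$ can only occur strictly positively in $F^X$ if it occurred strictly positively in $F$ — but by hypothesis $a$ has no strictly positive occurrence anywhere in $\Gamma$, hence none in $\Gamma^X$.

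The heart of the argument is then a lemma about classical satisfaction: if a formula $H$ has no strictly positive occurrence of an atom $a$, and a set $Y$ satisfies $H$, then $Y \setminus \{a\}$ also satisfies $H$. I would prove this by structural induction on $H$, tracking occurrences through the connectives. The interesting cases are implication and the interaction with negation. Recall $\neg F$ abbreviates $F \implies \bot$, so an antecedent position flips "strictly positive" status. Concretely: for $H = F \implies G$, a strictly positive occurrence of $a$ in $H$ is exactly a strictly positive occurrence in $G$; occurrences of $a$ in $F$ are positive (in $H$) but not strictly positive. So the inductive hypothesis applies directly to $G$ (removing $a$ preserves satisfaction of $G$), but for $F$ I need the dual statement — that removing $a$ can only make $F$ "easier" to falsify, i.e. if $Y \setminus \{a\} \models F$ then $Y \models F$ — which is the mirror-image induction. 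Thus I would actually prove, simultaneously by induction, a two-part statement: (a) if $a$ has no strictly positive occurrence in $H$ and $Y \models H$ then $Y \setminus\{a\} \models H$; (b) if $a$ has no negative occurrence in $H$ and $Y \setminus \{a\} \models H$ then $Y \models H$. These two feed each other across the antecedent of an implication. (For atoms: case (a) with $H = b$, $b \neq a$ is immediate since $a$ has a strictly positive occurrence in $a$; case (b) with $H = a$ would require $a$ to have no negative occurrence, which it doesn't have — it has a strictly positive one — so that instance is vacuous, and for $H = b \neq a$ it is trivial. For $H = \bot$ both parts are trivial.)

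Once this lemma is in hand the proposition follows quickly: since $a$ is not a head atom of $\Gamma$, it has no strictly positive occurrence in any $F \in \Gamma$, hence (by inspection of the reduct definition) no strictly positive occurrence in any $F^X \in \Gamma^X$. Because $X$ is a stable model, $X \models \Gamma^X$, so applying part (a) of the lemma to each $F^X$ with $Y = X$ gives $X \setminus \{a\} \models F^X$ for all $F^X \in \Gamma^X$, i.e. $X \setminus \{a\} \models \Gamma^X$. This contradicts the minimality of $X$ among models of $\Gamma^X$. Since we may remove every non-head atom this way, every stable model consists only of head atoms.

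The main obstacle I anticipate is getting the double induction bookkeeping exactly right, in particular making sure the abbreviations $\neg F = (F \implies \bot)$ and $F \eq G = (F \implies G)\wedge(G \implies F)$ are handled consistently with the "antecedent of an even/odd number of implications" definitions of positive/strictly-positive/negative, and confirming the elementary fact that $F^X$ introduces no new atoms and preserves non-strict-positivity of $a$ (this last point is itself a trivial induction on the reduct definition, since each clause either deletes a subformula, replaces it by $\bot$, or recurses into subformulas without relocating them across implications). None of these steps is deep, but the paper's chosen abbreviation conventions mean the parity arguments must be stated carefully.
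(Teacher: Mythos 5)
There is a genuine gap: part~(a) of your key lemma is false as a statement about arbitrary formulas. Take $H=\neg\neg a$, which unfolds to $(a\implies\bot)\implies\bot$; the only occurrence of $a$ is in the antecedent of two implications, so it is positive but not strictly positive, and the hypothesis of~(a) is satisfied. Yet $\{a\}\models H$ while $\emptyset\not\models H$. The structural reason your mutual induction cannot close is visible in the case $H=F\implies G$: the hypothesis ``$a$ has no strictly positive occurrence in $H$'' constrains only $G$ and says nothing about occurrences of $a$ in $F$ (a negative occurrence in $F$ is positive-but-not-strictly-positive in $H$, hence allowed), so you are not entitled to invoke part~(b) on the antecedent. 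This is not a removable technicality --- $\{\neg\neg a\}$ really does have no stable model containing $a$, but the reason is only visible after the reduct is taken ($(\neg\neg a)^{\{a\}}=\top$), so no purely classical occurrence-counting lemma about the un-reduced connective structure can carry the argument.

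The fix is to let the reduct do the work, which is exactly what the paper does. Its Lemma on head atoms is an induction on the original formula $F$ proving: if $S$ contains all head atoms of $F$ and $X\models F$, then $X\cap S\models F^X$ (your case is $S$ = all atoms except the non-head atom $a$, so $X\cap S=X\setminus\{a\}$). In the implication case $F=G\implies H$ one assumes $X\cap S\models G^X$ and observes that this forces $G^X\not=\bot$, hence $X\models G$ by the definition of the reduct, hence $X\models H$, and the induction hypothesis applies to $H$ alone --- no dual statement about antecedents is ever needed. Note also that this lemma is proved only for the single subset $X\cap S$ of $X$, not for arbitrary $Y\subseteq X$; the two-sided positive/negative bookkeeping you envisage does appear in the paper, but only later (Lemma for the Completion Lemma), and there the hypotheses are phrased in terms of occurrences \emph{in the scope of negation} and the statement is again about reducts $F^X$, precisely because negations collapse to $\top$ or $\bot$ in the reduct. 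Your surrounding architecture (contrapositive, removing one non-head atom to contradict minimality, and the observation that reducts introduce no new atoms and preserve non-strict-positivity) is fine; it is the central lemma and its induction that must be replaced.
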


A rule is called a {\em constraint} if its head is $\bot$.
In a logic program, adding constraints
to a program $\Pi$ removes the stable models of $\Pi$ that don't satisfy the
constraints. A constraint can be seen as a formula of the form $\neg F$,
a formula that doesn't have head atoms. Next proposition generalizes the
property of logic programs stated above to propositional theories.
\begin{proposition}\label{prop:constraint}
For every two propositional theories $\Gamma_1$ and $\Gamma_2$ such that
$\Gamma_2$ has no head atoms, a  set $X$ of atoms
is a stable model of $\Gamma_1\cup \Gamma_2$ iff $X$ is a stable model of
$\Gamma_1$ and $X\models \Gamma_2$.
\end{proposition}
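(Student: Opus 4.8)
The plan is to reduce everything to the reduct and to the characterization of stable models as minimal models of the reduct. First I would observe that, for any set $X$ of atoms, $(\Gamma_1\cup\Gamma_2)^X = \Gamma_1^X\cup\Gamma_2^X$, since the reduct of a theory is taken formula-by-formula. So $X$ is a stable model of $\Gamma_1\cup\Gamma_2$ iff $X$ is a minimal model of $\Gamma_1^X\cup\Gamma_2^X$. The key technical input will be a claim about theories $\Gamma_2$ with no head atoms: for such a theory, and for any $X$ with $X\models\Gamma_2$, every subset $Z\subseteq X$ also satisfies $\Gamma_2^X$. Intuitively, a formula with no strictly positive atom occurrences imposes, after reduction, no "positive requirement" on the atoms that are true, so once $X$ satisfies it, shrinking $X$ cannot break $\Gamma_2^X$.

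To prove that claim I would argue by structural induction on the formula $F\in\Gamma_2$, simultaneously establishing the slightly stronger statement: if $F$ has no strictly positive occurrence of any atom and $X\models F$, then $F^X$ is classically equivalent to a formula built from $\bot$, $\wedge$, $\vee$ and implications whose antecedents may be arbitrary but whose consequents again have this property — in fact the cleanest route is to show directly that $F^X$ is satisfied by every $Z\subseteq X$. The base cases are $\bot$ (then $X\not\models F$, vacuous) and an atom $a$ (an atom has a strictly positive occurrence of itself, so this case is also vacuous). For a conjunction or disjunction, the subformulas inherit the "no strictly positive occurrence" property and $X$ satisfies them appropriately, so the induction hypothesis applies after noting $(F\otimes G)^X = F^X\otimes G^X$ when $X\models F\otimes G$. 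The interesting case is an implication $F\implies G$ with $X\models F\implies G$: here $G$ still has no strictly positive occurrences, but $F$ may contain strictly positive atoms, so I cannot apply the induction hypothesis to $F^X$ directly. The point, though, is that I only need $Z\models (F\implies G)^X$, i.e.\ $Z\models F^X\implies G^X$; and here I can split on whether $X\models G$. If $X\models G$, then by the induction hypothesis $Z\models G^X$, so the implication holds. If $X\not\models G$, then since $X\models F\implies G$ we get $X\not\models F$, so $F^X=\bot$ and $(F\implies G)^X = \bot\implies\bot = \top$, satisfied by everything. (If instead $X\not\models F\implies G$ the whole reduct is $\bot$ and the lemma is vacuous.) This handles all cases.

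Granting the claim, the proposition follows quickly. Suppose $X$ is a stable model of $\Gamma_1$ with $X\models\Gamma_2$. Then $X\models\Gamma_1^X$ and $X\models\Gamma_2^X$ (using $X\models\Gamma_2$ and the easily-verified fact that $X\models\Gamma\iff X\models\Gamma^X$), so $X\models\Gamma_1^X\cup\Gamma_2^X$. If some $Z\subsetneq X$ satisfied $\Gamma_1^X\cup\Gamma_2^X$ it would in particular satisfy $\Gamma_1^X$, contradicting minimality of $X$ for $\Gamma_1^X$; hence $X$ is a minimal model of the combined reduct, i.e.\ a stable model of $\Gamma_1\cup\Gamma_2$. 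Conversely, suppose $X$ is a stable model of $\Gamma_1\cup\Gamma_2$. Then $X\models\Gamma_1\cup\Gamma_2$, so $X\models\Gamma_2$; and $X\models\Gamma_1^X$, so I must only rule out a proper subset $Z\subsetneq X$ with $Z\models\Gamma_1^X$. Given such a $Z$, the claim (applicable since $X\models\Gamma_2$) gives $Z\models\Gamma_2^X$, so $Z\models\Gamma_1^X\cup\Gamma_2^X$, contradicting that $X$ is a minimal model of $(\Gamma_1\cup\Gamma_2)^X$. Hence $X$ is a stable model of $\Gamma_1$. The main obstacle is the implication case of the structural induction, and the trick there is to prove the downward-closure property of $\Gamma_2^X$ under subsets rather than trying to control the shape of $F^X$ itself.
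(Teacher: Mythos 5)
Your proposal is correct and follows essentially the same route as the paper: the paper also reduces the proposition to the claim that, when $X\models\Gamma_2$ and $\Gamma_2$ has no head atoms, every subset of $X$ satisfies $\Gamma_2^X$, and then argues exactly as you do about minimal models of $\Gamma_1^X\cup\Gamma_2^X$. The only cosmetic difference is that the paper obtains that claim as an instance of a slightly more general lemma (if $S$ contains all head atoms of $\Gamma$ and $X\models\Gamma$ then $X\cap S\models\Gamma^X$), proved by the same structural induction you carry out, with the same treatment of the implication case.
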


The following two propositions are generalizations of
propositions stated in~\cite{fer05b} in the case of
logic programs. We say that an occurrence of an atom is {\em in the scope
of negation} when it occurs in a formula $\neg F$.

\begin{proposition}[Lemma on Explicit Definitions]\label{prop:expl}
Let $\Gamma$ be any propositional theory, and $Q$ a set of atoms
not occurring in $\Gamma$. For each $q\in Q$, let $Def(q)$ be a
formula that doesn't contain any atoms from $Q$. Then
$X\mapsto X\setminus Q$ is a 1--1 correspondence between the stable models of
$\Gamma\cup \{Def(q)\implies q: q\in Q\}$ and the stable models of~$\Gamma$.
\end{proposition}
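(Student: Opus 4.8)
The plan is to prove the Lemma on Explicit Definitions by working with the characterization of stable models via the here-and-there logic (Proposition~\ref{pt-mainth}), since strong-equivalence-style reasoning about formulas of the form $Def(q)\implies q$ is cleanest there. Write $\Gamma' = \Gamma\cup\{Def(q)\implies q : q\in Q\}$, and let $D$ denote the set of these definitional formulas. The key structural observation is that in any HT-interpretation $(Z,Y)$ satisfying $D$, the ``value'' of each $q\in Q$ is forced: since $Def(q)$ contains no atoms of $Q$ and atoms of $Q$ do not occur in $\Gamma$, the truth of $Def(q)$ at $(Z,Y)$ depends only on $Z\setminus Q$, and at $Y$ only on $Y\setminus Q$; so $D$ pins down $Z\cap Q$ and $Y\cap Q$ in terms of $Z\setminus Q$ and $Y\setminus Q$ via the HT clause for implication. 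Concretely, $(Y,Y)\models D$ forces $q\in Y \iff Y\setminus Q\models Def(q)$, and then $(Z,Y)\models D$ forces $q\in Z \iff Z\setminus Q \models Def(q)$ (using that if $Z\setminus Q\models Def(q)$ then, by monotonicity of classical satisfaction and $Z\subseteq Y$, also $Y\setminus Q\models Def(q)$, so $q\in Y$ and the implication clause at $(Z,Y)$ reduces to the first conjunct).

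With that in hand, I would set up the correspondence explicitly. Given a stable model $X'$ of $\Gamma'$, let $X = X'\setminus Q$. First, $X'\models\Gamma'$ in classical logic (stable models are classical models, as noted after the reduct definition), hence $X'\models\Gamma$; since $\Gamma$ has no atoms from $Q$, $X\models\Gamma$. For minimality in the reduct: I claim $X$ is a stable model of $\Gamma$, i.e.\ $(X,X)\models\Gamma$ and no proper subset $Z\subsetneq X$ has $(Z,X)\models\Gamma$. The forward direction $(X,X)\models\Gamma$ follows from $(X',X')\models\Gamma'$ by restricting attention to $\Gamma$ and noting that satisfaction of a $Q$-free formula at $(W,W)$ depends only on $W\setminus Q$. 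For the minimality: suppose $(Z,X)\models\Gamma$ for some $Z\subsetneq X$. Extend $Z$ to $Z' = Z \cup \{q\in Q : Z\models Def(q)\}$; then $Z'\setminus Q = Z \subsetneq X = X'\setminus Q$, so $Z'\subsetneq X'$, and $Z'\subseteq X'$ since $X'$ already contains exactly the forced $Q$-atoms. One checks $(Z',X')\models\Gamma$ (again $Q$-freeness) and $(Z',X')\models D$ (by the forcing computation above), contradicting that $X'$ is an equilibrium model of $\Gamma'$. Conversely, given a stable model $X$ of $\Gamma$, define $X' = X\cup\{q\in Q : X\models Def(q)\}$ and run the same argument in reverse: $(X',X')\models\Gamma'$ is immediate, and any witness $(Z',X')\not\models$ to non-minimality of $X'$ would, after intersecting appropriately, either violate $D$ (if $Z'\cap Q$ is wrong, one shows $(Z',X')\not\models D$) or restrict to a witness $(Z'\setminus Q, X)$ against minimality of $X$. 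Finally $X'\setminus Q = X$ and $(X')\setminus Q$ recovers $X$, and the two constructions are mutually inverse, giving the $1$--$1$ correspondence.

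The main obstacle I anticipate is the minimality bookkeeping in the backward direction, specifically showing that a putative smaller HT-model $(Z',X')$ of $\Gamma'$ cannot have $Z'\cap Q$ ``too large'': if $q\in Z'$ but $X\not\models Def(q)$, one needs $(Z',X')\not\models Def(q)\implies q$ — but note the HT-implication clause also demands the classical implication holds at $X'$, and $q\in X'$ iff $X\models Def(q)$, so $q\notin X'$, hence $q\notin Z'$, contradiction; this forces $Z'\cap Q \subseteq X'\cap Q$. Conversely if $q\notin Z'$ but $Z'\setminus Q\models Def(q)$, then $(Z',X')\models Def(q)$ (as $Def(q)$ is $Q$-free and evaluated at $Z'$), while $q\notin Z'$, so $(Z',X')\not\models Def(q)\implies q$. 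Thus $(Z',X')\models D$ forces $Z'\cap Q$ to be \emph{exactly} $\{q : Z'\setminus Q\models Def(q)\}$, which is the crux that lets the reduction to $\Gamma$ go through. Everything else is routine once this rigidity of the $Q$-atoms under $D$ is established, and it could alternatively be phrased entirely in reduct language using Proposition~\ref{lemma:mainlemma} if one prefers to avoid explicit HT-interpretations.
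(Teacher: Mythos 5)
Your route is genuinely different from the paper's. The paper derives this proposition from the Splitting Set Theorem (Proposition~\ref{prop:split}), applied with $\Gamma_1=\Gamma$ and $\Gamma_2=\{Def(q)\implies q: q\in Q\}$, and then uses Lemma~\ref{lemma:replace} to reduce $(X\setminus Q)\cup\Gamma_2$ to a set of atoms whose unique stable model is evident. You instead argue directly in equilibrium logic via Proposition~\ref{pt-mainth}, exploiting the fact that $Q$-free formulas are evaluated at an HT-interpretation $(Z,Y)$ only through $(Z\setminus Q,\,Y\setminus Q)$. That is a viable and more self-contained plan --- it avoids both the splitting machinery and Lemma~\ref{lemma:replace} --- and your closing paragraph correctly isolates the crux, namely that $D$ together with minimality rigidly determines the $Q$-part of any candidate countermodel.

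Two steps would fail as written, however. First, you justify ``if $Z\setminus Q\models Def(q)$ then also $Y\setminus Q\models Def(q)$'' by ``monotonicity of classical satisfaction''; classical satisfaction is not monotone for formulas containing implication or negation (take $Def(q)=\neg p$, $Z=\emptyset$, $Y=\{p\}$). The correct tool is persistence in the logic of here-and-there: $(Z,Y)\models F$ implies $Y\models F$, which is immediate from Proposition~\ref{lemma:mainlemma} together with the fact that $F^Y=\bot$ whenever $Y\not\models F$. Relatedly, satisfaction of $D$ at $(Y,Y)$ alone forces only the inclusion $\{q\in Q: Y\setminus Q\models Def(q)\}\subseteq Y\cap Q$; the reverse inclusion needs the minimality half of the equilibrium condition, so ``$D$ pins down $Y\cap Q$'' is overstated. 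Second, in the forward direction you extend a hypothetical $Z\subset X$ to $Z'=Z\cup\{q\in Q: Z\models Def(q)\}$ using classical satisfaction by $Z$. With that definition $Z'\subseteq X'$ can fail (again $Def(q)=\neg p$ with $p\in X$ and $p\notin Z$ puts $q$ into $Z'$ but not into $X'$), so $(Z',X')$ need not even be an HT-interpretation. You must instead set $Z'\cap Q=\{q\in Q: (Z,X)\models Def(q)\}$, equivalently $\{q\in Q: Z\models Def(q)^X\}$; then persistence gives $Z'\subseteq X'$, and the contradiction with minimality of $X'$ goes through. With these two repairs your argument is correct.
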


\begin{proposition}[Completion Lemma]\label{prop:compl}
Let $\Gamma$ be any propositional theory, and $Q$ a set of atoms
that have positive occurrences in $\Gamma$ only in the scope of negation.
For each $q\in Q$, let $Def(q)$ be a formula such that all negative
occurrences of atoms from $Q$ in $Def(q)$ are in the scope
of negation. Then $\Gamma\cup \{Def(q)\implies q: q\in Q\}$ and
$\Gamma\cup \{Def(q)\eq q : q\in Q\}$ have the same stable models.
\end{proposition}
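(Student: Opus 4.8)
The plan is to argue directly with the reduct. Write $\Gamma_1=\Gamma\cup\{Def(q)\implies q:q\in Q\}$ and $\Gamma_2=\Gamma\cup\{Def(q)\eq q:q\in Q\}$. Since $Def(q)\eq q$ abbreviates $(Def(q)\implies q)\wedge(q\implies Def(q))$, and replacing a conjunction in a theory by its two conjuncts changes the reduct only up to classical equivalence and hence preserves stable models, I may assume $\Gamma_2=\Gamma_1\cup\{q\implies Def(q):q\in Q\}$; then $\Gamma_1\subseteq\Gamma_2$, so $\Gamma_1^X\subseteq\Gamma_2^X$ for every $X$, and recall $X\models\Gamma_i^X$ iff $X\models\Gamma_i$. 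Granting these, everything reduces to two claims: (1) if $X$ is a stable model of $\Gamma_1$ then $X\models q\implies Def(q)$ for every $q\in Q$ — for then $X\models\Gamma_2$, and since $X$ is already a minimal set satisfying $\Gamma_1^X\subseteq\Gamma_2^X$, it is a minimal set satisfying $\Gamma_2^X$, i.e.\ a stable model of $\Gamma_2$; and (2) if $X$ is a stable model of $\Gamma_2$ then $X$ is still a minimal set satisfying $\Gamma_1^X$ (that $X\models\Gamma_1^X$ is automatic).

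First I would isolate a monotonicity property of reducts on which both claims rest: if every occurrence of an atom from a set $R$ in a formula $F$ is positive or in the scope of negation, then in the classical formula $F^X$ the atoms of $R$ occur only positively (or not at all), and symmetrically with ``negative'' for ``positive''. The key point is that an occurrence of an atom inside a subformula $\neg G$ never survives into $F^X$: surviving would require every enclosing subformula, in particular $\neg G$, to be satisfied by $X$, but then $X\not\models G$, so $G$ is a maximal $X$-unsatisfied subformula of the satisfied $\neg G$ and is replaced by $\bot$, absorbing the occurrence. The surviving $R$-occurrences thus come only from $R$-occurrences not in the scope of negation, whose polarity the reduct preserves, so the statement follows from ordinary classical polarity monotonicity. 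Hence: since in $\Gamma$ every occurrence of a $Q$-atom is negative or in the scope of negation, each $G^X$ with $G\in\Gamma$ is antimonotone in $Q$; and since in each $Def(q)$ every occurrence of a $Q$-atom is positive or in the scope of negation, each $Def(q)^X$ is monotone in $Q$.

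For claim (1), suppose $X$ is a stable model of $\Gamma_1$ but $X\not\models Def(q_0)$ for some $q_0\in X\cap Q$, and set $R=\{q\in X\cap Q:X\not\models Def(q)\}$, a nonempty subset of $Q$. I would show that $X\setminus R$ satisfies $\Gamma_1^X$, contradicting the minimality of $X$. The formulas of $\Gamma^X$ are satisfied by $X$ and are antimonotone in $Q\supseteq R$, hence satisfied by $X\setminus R$. For $q\in Q$: from $X\models Def(q)\implies q$ we get $(Def(q)\implies q)^X=Def(q)^X\implies q^X$; if $q\notin X$ or $q\in R$ then $X\not\models Def(q)$, so $Def(q)^X=\bot$ and the reduct is vacuously satisfied; if $q\in X\setminus R$ then $q^X=q$ and $q\in X\setminus R$, so the consequent is satisfied. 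Thus $X\setminus R\models\Gamma_1^X$, forcing $R=\emptyset$, i.e.\ $X\models q\implies Def(q)$ for every $q\in Q$, which by the reduction above makes $X$ a stable model of $\Gamma_2$.

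Claim (2) is where the real work lies. Suppose $X$ is a stable model of $\Gamma_2$ and, for contradiction, $Z\subsetneq X$ with $Z\models\Gamma_1^X$; I must produce a \emph{proper} subset $Z^*$ of $X$ satisfying $\Gamma_2^X$. The difficulty is that $Z$ itself may fail the extra formulas $(q\implies Def(q))^X$, yet deleting the offending atoms outright may break the formulas $(Def(q)\implies q)^X$. The fix is to let $Z^*$ be the union of all sets $W$ such that $W$ and $Z$ agree outside $Q$, $W\subseteq Z$, and $W\models Def(q)^X$ for every $q\in W\cap Q$: this family is nonempty ($W=Z\setminus Q$ works vacuously) and, since any two members differ only on $Q$-atoms and each $Def(q)^X$ is monotone in $Q$, closed under unions, so $Z^*$ is its largest element and $Z^*\subseteq Z\subsetneq X$. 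Then $Z^*\models\Gamma^X$ by antimonotonicity in $Q$ (passing from $Z$ down to $Z^*$ only removes $Q$-atoms); $Z^*\models(Def(q)\implies q)^X$ because, this reduct being $Def(q)^X\implies q$ for $q\in X$ and vacuous for $q\notin X$, from $Z^*\models Def(q)^X$ monotonicity gives $Z\models Def(q)^X$, hence $q\in Z$ (as $Z$ satisfies that reduct), hence $Z^*\cup\{q\}$ lies in the family and $q\in Z^*$ by maximality; and $Z^*\models(q\implies Def(q))^X$ is immediate from the defining property of $Z^*$. So $Z^*$ is a proper subset of $X$ satisfying $\Gamma_2^X$, contradicting minimality of $X$; hence no such $Z$ exists and $X$ is a stable model of $\Gamma_1$. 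The main obstacle, as indicated, is finding this maximal self-supporting contraction of $Z$ and checking that it still validates the old implications $Def(q)\implies q$.
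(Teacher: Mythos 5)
Your proof is correct. It rests on the same engine as the paper's proof --- the fact that $F^X$ is monotone (respectively antimonotone) in a set of atoms whose negative (respectively positive) occurrences in $F$ all lie in the scope of negation, which is the paper's Lemma~\ref{weaklyposneg} --- but you derive that fact from a syntactic observation (occurrences inside a satisfied $\neg G$ are absorbed into the $\bot$ replacing $G$, while surviving occurrences keep their polarity, so classical polarity monotonicity applies) rather than by the paper's structural induction on satisfaction; your argument is arguably more illuminating. The organization of the main argument is also genuinely different. The paper handles both directions at once: it supposes some $Y\subseteq X$ satisfies $\Gamma_1^X$ but not $\Gamma_2^X$, forms the \emph{intersection} $Z$ of all sets agreeing with $Y$ outside $Q$ and satisfying the reducts $Def(q)^X\implies q$, proves $Z$ satisfies $\Gamma_1^X$ together with the formulas $q\implies Def(q)^X$, and then disposes of $X$ as a stable model of either theory by a case split on whether $X\models\Gamma_2$. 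You instead split into two one-directional claims: the passage from $\Gamma_1$ to $\Gamma_2$ is handled by a short direct deletion argument (remove the atoms $q\in X\cap Q$ with $X\not\models Def(q)$ and contradict minimality), and the converse uses the dual extremal construction, the \emph{union} of all self-supporting subsets of the hypothetical smaller model. What your decomposition buys is a noticeably simpler first direction and a clear localization of where the real work is; what the paper's buys is a single construction serving both directions. All the steps you leave as routine (closure of your family under unions, membership of $Z^*\cup\{q\}$ in the family, the reduction of $Def(q)\eq q$ to its two conjuncts) do check out.
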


The following proposition is essentially a generalization of the splitting
set theorem from~\cite{lif94e} and~\cite{erdo04}, which allows to break
logic programs/propositional theories into parts and compute the
stable models separately. A formulation of this theorem has also been stated
in~\cite{fer05e} in the special case of theories consisting
of a single formula.

\begin{proposition}[Splitting Set Theorem]\label{prop:split}
Let $\Gamma_1$ and $\Gamma_2$ be two theories such that no atom
occurring in $\Gamma_1$ is a head atom of $\Gamma_2$. Let $S$ be
a set of atoms containing all head atoms of $\Gamma_1$ but no head atoms
of $\Gamma_2$.
A  set $X$ of atoms is a stable model of $\Gamma_1\cup \Gamma_2$
iff $X\cap S$ is a stable model of $\Gamma_1$ and $X$ is a stable model
of $(X\cap S)\cup\Gamma_2$.
\end{proposition}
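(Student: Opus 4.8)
The plan is to reduce everything to the reduct characterization of stable models and to exploit the hypothesis that no atom of $\Gamma_1$ is a head atom of $\Gamma_2$, together with Proposition~\ref{prop:head} (stable models consist of head atoms). First I would fix a set $X$ of atoms and analyze the reduct $(\Gamma_1\cup\Gamma_2)^X = \Gamma_1^X\cup\Gamma_2^X$. The key structural observation is that, by the definition of the reduct, $\Gamma_1^X$ mentions only atoms occurring in $\Gamma_1$, hence (by the splitting hypothesis) only atoms outside the head atoms of $\Gamma_2$; in particular every atom occurring in $\Gamma_1^X$ lies in $S$. Dually, I would want to say that whether $X\cap S$ satisfies $\Gamma_1^X$ depends only on $X\cap S$, and that this reduct equals $\Gamma_1^{X\cap S}$; this last equality should follow because for any subformula of $\Gamma_1$ satisfaction by $X$ and by $X\cap S$ agree (all relevant atoms are in $S$), so the ``replace maximal unsatisfied subformulas by $\bot$'' process produces the same result.

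Next I would handle $\Gamma_2$. Since $S$ contains no head atoms of $\Gamma_2$, and since $\Gamma_1\subseteq (X\cap S)\cup\Gamma_2$ is replaced here by the set of atoms $X\cap S$ acting as facts, I would compare $\Gamma_2^X$ with $\Gamma_2^{X}$ computed relative to the theory $(X\cap S)\cup\Gamma_2$ — these should coincide because the reduct of a formula depends only on the interpretation $X$, not on the ambient theory. The reduct of $(X\cap S)\cup\Gamma_2$ relative to $X$ is then $(X\cap S)\cup\Gamma_2^X$ (each atom $a\in X\cap S$ has reduct $a$ since $X\models a$). So I must show: $X$ is a minimal model of $\Gamma_1^{X\cap S}\cup\Gamma_2^X$ iff ($X\cap S$ is a minimal model of $\Gamma_1^{X\cap S}$ and $X$ is a minimal model of $(X\cap S)\cup\Gamma_2^X$).

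For the forward direction, assume $X\models(\Gamma_1\cup\Gamma_2)^X$ minimally. Minimality of $X$ for the union, restricted to the ``$S$-part'', should force $X\cap S$ to be minimal over $\Gamma_1^{X\cap S}$: if $Z\subsetneq X\cap S$ satisfied $\Gamma_1^{X\cap S}$, then $Z\cup(X\setminus S)$ would still satisfy $\Gamma_2^X$ (its atoms in $S$ are irrelevant to $\Gamma_2^X$ up to the facts, but here we need care — the facts $X\cap S$ are not yet present in the union reduct) and $\Gamma_1^{X\cap S}$, contradicting minimality. I would make this precise using Proposition~\ref{prop:head}: the atoms of $X$ not forced by being head atoms of $\Gamma_2$ must already be forced within $S$. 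Conversely, given the two separate stable-model conditions, I would glue the witnesses: any $Z\subsetneq X$ satisfying $\Gamma_1^{X\cap S}\cup\Gamma_2^X$ splits as $Z\cap S$ and $Z\setminus S$; if $Z\cap S=X\cap S$ then $Z$ contradicts minimality of $X$ over $(X\cap S)\cup\Gamma_2^X$, and if $Z\cap S\subsetneq X\cap S$ then — since $Z\cap S$ satisfies $\Gamma_1^{X\cap S}$ (atoms of $\Gamma_1^{X\cap S}$ all in $S$) — it contradicts minimality of $X\cap S$.

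The main obstacle I anticipate is the bookkeeping around which atoms are ``forced'' and in which block: I need that a model $Z$ of the union reduct, when shrunk on $S$, still models $\Gamma_2^X$, and this requires knowing that $\Gamma_2^X$'s behavior on atoms in $S$ is ``monotone upward into $X\cap S$'' in the right sense — essentially that $\Gamma_2^X$ only uses atoms of $S$ positively-as-facts, which comes from $S$ containing no head atoms of $\Gamma_2$ via an analysis of strictly positive occurrences surviving the reduct. Making that monotonicity claim airtight (perhaps by an auxiliary lemma: if an atom $a$ has no strictly positive occurrence in a theory $\Delta$, then for $Z\subseteq X$ with $Z\models\Delta^X$, also $Z\setminus\{a\}\models\Delta^X$) is the crux; once that is in hand, the gluing arguments in both directions are routine set manipulations.
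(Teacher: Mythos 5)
Your skeleton is close to the paper's: the witness $Z\cup(X\setminus S)$ for the forward direction is essentially the paper's $X\cap(S'\cup Y)$, and the auxiliary lemma you ask for at the end (removing atoms with no strictly positive occurrence preserves satisfaction of the reduct) is exactly the paper's Lemma~\ref{lemma:head}. But there is a genuine gap at the very first step. You claim that every atom occurring in $\Gamma_1$ (hence in $\Gamma_1^X$) lies in $S$, inferring this from the fact that such atoms are not head atoms of $\Gamma_2$. That inference is invalid: $S$ is only required to contain the \emph{head} atoms of $\Gamma_1$ and to avoid the head atoms of $\Gamma_2$; an atom of $\Gamma_1$ that is a head atom of neither theory may well lie outside $S$. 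For example, with $\Gamma_1=\{\neg r\implies q\}$ and $S=\{q\}$, taking $X=\{q,r\}$ gives $\Gamma_1^X=\{\bot\implies q\}$ (a tautology) while $\Gamma_1^{X\cap S}=\{\top\implies q\}$, so your identity $\Gamma_1^X=\Gamma_1^{X\cap S}$ fails, and with it the claims that satisfaction of $\Gamma_1^{X\cap S}$ depends only on the $S$-part of a model and that $Z\cap S\models\Gamma_1^{X\cap S}$ follows from $Z\models\Gamma_1^{X\cap S}$. Since the biconditional must be verified for arbitrary $X$ and arbitrary $Z\subseteq X$, this cannot be waved away.

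The fix is the part of the argument you are missing: first prove the statement for the particular set $S_1$ consisting of all atoms occurring in $\Gamma_1$ (which satisfies the hypotheses because no atom of $\Gamma_1$ is a head atom of $\Gamma_2$, and for which your identities do hold), and then reduce the general $S$ to $S_1$ by observing, via Proposition~\ref{prop:head}, that any stable model of $\Gamma_1$ consists of head atoms of $\Gamma_1$, which belong to both $S$ and $S_1$, so that $X\cap S=X\cap S_1$ whenever either side of the equivalence holds. This reduction occupies the last third of the paper's proof. One further simplification worth adopting from the paper: once $X\cap S$ is known to be a stable model of $\Gamma_1$, Lemma~\ref{lemma:split2} says $\Gamma_1^{X\cap S}$ is equivalent to the set of facts $X\cap S$, so $(\Gamma_1\cup\Gamma_2)^X$ and $\bigl((X\cap S)\cup\Gamma_2\bigr)^X$ are equivalent theories; this disposes of your case analysis on $Z\cap S$ and also of the second conjunct of the forward direction, which your sketch does not address.
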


\subsection{Computational complexity}\label{sec:prop-compl}

Since the concept of a stable model is equivalent to the concept of an
equilibrium model, checking the existence of a stable model of a
propositional theory is a $\Sigma_2^P$-complete problem as for
equilibrium models~\cite{pea01}. Notice that the existence of a stable model
of a disjunctive program is already
$\Sigma_2^P$-hard~\cite[Corollary~3.8]{eit93a}.

The existence of a stable model for a traditional program is a  
NP-complete problem~\cite{mar91}. The same holds, more generally, for
logic programs with nested expressions where the head of each rule is an
atom or $\bot$. (We call programs of this kind {\em nondisjunctive}).
We may wonder if the same property holds for arbitrary sets of formulas of
the form $F\implies a$ and $F\implies \bot$. The answer is negative:
the following lemma shows that as soon as we allow implications in
formulas $F$ then we have the same expressivity --- and then complexity --- as
disjunctive rules.

\begin{lemma}
\label{lemma:disjunction}
Rule
$$
l_1\wedge\cdots\wedge l_m \implies a_1\vee\cdots\vee a_n
$$
($n>0, m\geq 0$) where $a_1,\dots,a_n$ are atoms and $l_1,\dots,l_m$ are
literals, is strongly equivalent to the set of $n$ implications $(i=1,\dots,n)$
\beq2
(l_1\wedge\cdots\wedge l_m\wedge 
(a_1\implies a_i)\wedge \cdots \wedge (a_n\implies a_i)) \implies a_i.
\eeq2{translaggr}
\end{lemma}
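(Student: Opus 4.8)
The plan is to use the characterization of strong equivalence given by Proposition~\ref{prop:se}: two theories are strongly equivalent iff they are equivalent in the logic of here-and-there, iff their reducts $\cdot^X$ agree classically for every set $X$ of atoms. I will work with the here-and-there characterization, since it is the most convenient for a syntactic transformation like this one. So the goal reduces to showing that for every HT-interpretation $(X,Y)$ with $X\subseteq Y$, the rule $R := l_1\wedge\cdots\wedge l_m \implies a_1\vee\cdots\vee a_n$ is satisfied by $(X,Y)$ if and only if all $n$ formulas in~(\ref{translaggr}) are satisfied by $(X,Y)$.

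First I would dispatch the case where the body $l_1\wedge\cdots\wedge l_m$ is not satisfied by $(X,Y)$ (or, more precisely, handle the classical-logic component at $Y$ and the HT component at $X$ uniformly). Recall that $(X,Y)\models F\implies G$ means $(X,Y)\models F \Rightarrow (X,Y)\models G$ \emph{and} $Y\models F\implies G$ classically. Since each $l_i$ is a literal, whether $(X,Y)\models l_1\wedge\cdots\wedge l_m$ and whether $Y\models l_1\wedge\cdots\wedge l_m$ are determined by $X$ and $Y$ respectively, and the body appears (conjoined) in the antecedent of every formula of~(\ref{translaggr}) as well. So it suffices to prove the equivalence under the assumption that the literals $l_1,\dots,l_m$ are all satisfied — both at $X$ (for the HT part) and at $Y$ (for the classical part) — since otherwise both sides are vacuously true. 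Thus I reduce to: $(X,Y)\models a_1\vee\cdots\vee a_n$ iff for each $i$, $(X,Y)\models (a_1\implies a_i)\wedge\cdots\wedge(a_n\implies a_i)\implies a_i$.

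The heart of the argument is then a small combinatorial fact about the sets $X_0 := X\cap\{a_1,\dots,a_n\}$ and $Y_0 := Y\cap\{a_1,\dots,a_n\}$, with $X_0\subseteq Y_0$. The left side $(X,Y)\models a_1\vee\cdots\vee a_n$ holds iff $X_0\neq\emptyset$. For the right side: unwinding the HT semantics of each conjunct $\Phi_i := (a_1\implies a_i)\wedge\cdots\wedge(a_n\implies a_i)$, one checks that $(X,Y)\models\Phi_i$ iff $a_i$ is $\subseteq$-maximal among $\{a_1,\dots,a_n\}$ with respect to membership in $X$ jointly with membership in $Y$ — concretely, $(X,Y)\models\Phi_i$ iff ($a_j\in X\Rightarrow a_i\in X$ for all $j$) and ($a_j\in Y\Rightarrow a_i\in Y$ for all $j$), i.e.\ $X_0\subseteq\{a_i\}\cap X$-closure and $Y_0\subseteq\{a_i\}$; more simply, $(X,Y)\models\Phi_i$ iff $a_i\in Y_0$ and ($X_0=\emptyset$ or $a_i\in X_0$), using $X_0\subseteq Y_0$. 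Then $(X,Y)\models\Phi_i\implies a_i$ fails exactly when $(X,Y)\models\Phi_i$ but $(X,Y)\not\models a_i$ (the classical component at $Y$ being automatic once $a_i\in Y_0$). I would check the two directions: if $X_0=\emptyset$ but some $a_i\in Y_0$, then $(X,Y)\models\Phi_i$ while $a_i\notin X$, so that implication fails — matching that the left side also fails (when additionally $Y_0\ne\emptyset$; and if $Y_0=\emptyset$ too, every $\Phi_i$ has $(X,Y)\not\models\Phi_i$ and $Y\not\models\Phi_i$, so all implications hold vacuously, matching that $a_1\vee\cdots\vee a_n$ is... still false at $X$ but true... — careful here). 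Conversely if $X_0\neq\emptyset$, pick any $a_i\in X_0$; then $(X,Y)\models\Phi_i$ forces nothing beyond what holds, and one verifies every implication in~(\ref{translaggr}) is satisfied, matching the left side being true.

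The step I expect to be the main obstacle is the boundary case $Y_0=\emptyset$ (equivalently $X_0=\emptyset$ and no $a_i\in Y$): here the disjunction $a_1\vee\cdots\vee a_n$ is false at $X$, so $R$ is \emph{not} satisfied by $(X,Y)$ (given the body holds), and I must confirm that at least one formula of~(\ref{translaggr}) also fails. Since $n>0$, take $i=1$: I need $(X,Y)\models\Phi_1$ while $(X,Y)\not\models a_1$. But if $Y_0=\emptyset$ then $a_1\notin Y$, so $Y\not\models a_1$, hence $Y\not\models\Phi_1\implies a_1$ requires $Y\models\Phi_1$; and indeed $Y\models a_j\implies a_1$ for all $j$ vacuously since no $a_j\in Y$ — so $Y\models\Phi_1$, and the classical component of $\Phi_1\implies a_1$ at $Y$ fails, giving $(X,Y)\not\models\Phi_1\implies a_1$ as needed. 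This is precisely where the hypothesis $n>0$ is used, and I would flag it explicitly. With all cases matched, strong equivalence follows by Proposition~\ref{prop:se}. $\qquad\blacksquare$
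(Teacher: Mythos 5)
Your proof takes a genuinely different route from the paper's. Both are licensed by Proposition~\ref{prop:se}, but the paper uses criterion~(iii): it computes the reducts $F^X$ and $G_i^X$ for a single set $X$ and checks classical equivalence, splitting into the cases $X\not\models H$, \ $X\models H$ and $X\not\models F$, \ and $X\models H$ and $X\models F$. You use criterion~(ii), reasoning directly in the logic of here-and-there over pairs $(X,Y)$. Your core combinatorial analysis of $X_0$ and $Y_0$ is correct and covers the three relevant configurations ($X_0\neq\emptyset$; $X_0=\emptyset$, $Y_0\neq\emptyset$; $Y_0=\emptyset$), and you correctly isolate $Y_0=\emptyset$ as the place where $n>0$ is needed. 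What the reduct route buys is exactly the absence of two-level bookkeeping: in $F^X$ there is only one set of atoms, an unsatisfied antecedent becomes $\bot$, and the ``body fails'' case is literally a tautology.

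That two-level bookkeeping is where your write-up has a genuine (though repairable) flaw. Your reduction step claims that when the body $l_1\wedge\cdots\wedge l_m$ is not satisfied, ``both sides are vacuously true.'' This is false in the subcase where $Y\models l_1\wedge\cdots\wedge l_m$ but $(X,Y)\not\models l_1\wedge\cdots\wedge l_m$ (some positive $l_j$ lies in $Y\setminus X$): there the HT components of all the implications are indeed vacuous, but the classical components at $Y$ are live, and both sides can be \emph{false} (take $m=n=1$, $l_1=p$, $Y=\{p\}$, $X=\emptyset$, $a_1\notin Y$). To close this case you still need the purely classical equivalence at $Y$ of the rule with the $n$ implications of~(\ref{translaggr}) --- which is exactly your $Y_0$ argument, so the material is already in your proof; it just is not covered by the vacuity claim. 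A smaller slip: your stated characterization ``$(X,Y)\models\Phi_i$ iff $a_i\in Y_0$ and ($X_0=\emptyset$ or $a_i\in X_0$)'' is wrong when $Y_0=\emptyset$, where every $\Phi_i$ is vacuously satisfied at both levels; you do get this right in the boundary case, but the displayed ``iff'' contradicts it. With those two points repaired, the argument is sound.
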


\begin{proposition}
\label{prop:disjunction}
The problem of the existence of a stable model of a theory consisting of
formulas of the form $F\implies a$ and $F\implies \bot$\ is
$\Sigma_2^P$-complete.
\end{proposition}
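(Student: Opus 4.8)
The plan is to prove the two directions separately, and the argument is short because the real combinatorial work has already been done in Lemma~\ref{lemma:disjunction}. For membership in $\Sigma_2^P$, it suffices to note that a theory whose formulas all have the form $F\implies a$ or $F\implies\bot$ is a special case of an arbitrary propositional theory, and deciding the existence of a stable model of an arbitrary propositional theory is in $\Sigma_2^P$ (it coincides with the existence of an equilibrium model, which is $\Sigma_2^P$-complete~\cite{pea01}). So no new work is needed for the upper bound.

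For $\Sigma_2^P$-hardness I would reduce from the problem of deciding whether a disjunctive program has a stable model, which is $\Sigma_2^P$-hard by~\cite[Corollary~3.8]{eit93a}. Given a disjunctive program $\Pi$, replace each of its rules
$$l_1\wedge\cdots\wedge l_m\implies a_1\vee\cdots\vee a_n$$
with $n>0$ by the $n$ implications~(\ref{translaggr}) given by Lemma~\ref{lemma:disjunction}, and leave each constraint (the case $n=0$, i.e.\ a rule of the form $l_1\wedge\cdots\wedge l_m\implies\bot$) unchanged; call the resulting theory $\Gamma$. Every formula of $\Gamma$ then has the form $F\implies a$ (coming from~(\ref{translaggr})) or $F\implies\bot$ (the unchanged constraints), so $\Gamma$ is of the required syntactic shape, and the transformation is clearly polynomial: a rule with $n$ head atoms and $m$ body literals is replaced by $n$ formulas, each of size $O(n+m)$.

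It then remains to check that $\Pi$ and $\Gamma$ have the same stable models. By Lemma~\ref{lemma:disjunction}, each non-constraint rule of $\Pi$ is strongly equivalent (as a one-element theory) to the set of implications replacing it, and each constraint is replaced by itself. Applying these replacements one rule at a time and invoking, at each step, the definition of strong equivalence with the remaining rules as the context theory, we conclude that $\Gamma$ is strongly equivalent to $\Pi$; taking the empty context then gives that $\Pi$ and $\Gamma$ have the same stable models. Hence $\Pi$ has a stable model iff $\Gamma$ does, and the reduction is correct. The only points that need any care are the treatment of empty disjunctions --- which are constraints, already in the allowed syntax, so they require no transformation --- and the legitimacy of the iterated substitution, which follows immediately from the definition of strong equivalence; there is no genuinely difficult step.
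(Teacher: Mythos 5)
Your proposal is correct and follows the paper's own argument: membership via the $\Sigma_2^P$ upper bound for arbitrary theories (through equilibrium logic and \cite{pea01}), and hardness by a polynomial reduction from disjunctive programs using Lemma~\ref{lemma:disjunction}, whose strong-equivalence guarantee justifies the rule-by-rule replacement. You merely spell out details the paper leaves implicit (the constraint case and the iterated substitution), so no further comment is needed.
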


We will see, in Section~\ref{sec:aggr-compl}, that the conjunctive terms
in the antecedent of~(\ref{translaggr}) can equivalently be replaced by
aggregates of a simple kind, thus showing that allowing aggregates in
nondisjunctive programs increases their computational complexity.

\section{Aggregates}\label{sec:aggregates}

\subsection{Syntax and semantics}\label{sec:defaggr}

A {\em formula with aggregates} is defined recursively as follows:
\begin{itemize}
\item atoms and $\bot$ are
formulas with aggregates\footnote{Recall that $\top$ is an abbreviation for
$\bot\implies \bot$},
\item
propositional combinations of formulas with aggregates
are formulas with aggregates, and
\item
any expression of the form 
\beq2
op\langle \{F_1=w_1,\dots, F_n=w_n\}\rangle\prec N
\eeq2{aggregate}
where
\begin{itemize}
\item
$op$ is (a symbol for) a function from multisets of real numbers to
$\mathcal{R}\cup \{-\infty,+\infty\}$ (such as sum,
product, min, max, etc.),
\item
$F_1,\dots,F_n$ are formulas with aggregates, and $w_1,\dots,w_n$ are
(symbols for) real numbers (``weights''),
\item
$\prec$ is (a symbol for) a binary relation between real numbers, such
as $\leq$ and $=$, and
\item
$N$ is (a symbol for) a real number,
\end{itemize}
is a formula with aggregates.
\end{itemize}
A {\em theory with aggregates} is a set of formulas with aggregates.
A formula of the form~(\ref{aggregate}) is called an {\em aggregate}.

The intuitive meaning of an aggregate is explained by the following clause,
which extends the definition of satisfaction of propositional formulas to
arbitrary formulas with aggregates.
For any aggregate~(\ref{aggregate}) and any set $X$ of atoms, let $W_X$ be
the multiset $W$ consisting of
the weights $w_i$ ($1\leq i\leq n$) such that $X\models F_i$; we say that
$X$ {\em satisfies}~(\ref{aggregate})
if $op(W_X)\prec N$.
For instance,
\beq2
sum\langle \{p=1,q=1\} \rangle\not = 1
\eeq2{exaggr}
is satisfied by the  sets of atoms that satisfy both $p$ and $q$ or
none of them.

As usual, we say that $X$
satisfies a theory $\Gamma$ with aggregates if
$X$ satisfies all formulas in $\Gamma$. We extend the concept of classical
equivalence to formulas/theories with aggregates.

We extend the definition of a stable models of propositional theories
(Section~\ref{sec:proptheories}) to
cover aggregates, in a very natural way. Let $X$ be a  set of atoms.
The {\em reduct} $F^X$ of a formula $F$ with aggregates relative to $X$ is
again the result of replacing each maximal formula not satisfied by $X$ with
$\bot$. That is, it is sufficient to add a clause relative to aggregates to the
recursive definition of a reduct:
for an aggregate $A$ of the form~(\ref{aggregate}),
$$
A^X=\begin{cases}
op\langle \{F_1^X=w_1,\dots, F_n^X=w_n\}\rangle\prec N,&\text{if $X\models A$},\cr
\bot,&\text{otherwise.}\\
\end{cases}
$$

This is similar to the clause for binary connectives:
$$
(F\otimes G)^X=\begin{cases}
F^X\otimes G^X, &\text{if $X\models F\otimes G$},\cr
\bot,&\text{otherwise.}\\
\end{cases}
$$

The rest of the definition of a stable model remains the same: the {\em reduct} $\Gamma^X$ of a theory $\Gamma$ with aggregates is $\{F^X: F\in \Gamma\}$,
and $X$ is a {\em stable model} of $\Gamma$ if $X$ is a minimal model of
$\Gamma^X$.

Consider, for instance, the theory $\Gamma$ consisting of one formula
\beq2
sum\langle \{p=-1,q=1\}\rangle\geq 0\implies q.
\eeq2{exaggrsem}
Set $\{q\}$ is a stable model of $\Gamma$. Indeed, since both the
antecedent and consequent of~(\ref{exaggrsem}) are satisfied by $\{q\}$,
$\Gamma^{\{q\}}$ is
$$
sum\langle \{\bot=-1,q=1\}\rangle\geq 0\implies q.
$$
The antecedent of the implication above is satisfied by every set of atoms,
so the whole formula
is equivalent to $q$. Consequently, $\{q\}$ is the minimal model of
$\Gamma^{\{q\}}$, and then a stable model of $\Gamma$.

\subsection{Aggregates as Propositional Formulas}\label{sec:pt-pf}

A formula/theory with aggregates can also be seen as a normal propositional
formula/theory, by 
identifying~(\ref{aggregate}) with the formula
\begin{equation}\label{mainaggr}
\bigwedge_{I\subseteq \{1,\dots,n\}~:~op(\{w_i~:~i\in I\})\not\prec N}
 \big( \big( \bigwedge_{i\in I} F_i \big) \implies
\big( \bigvee_{i\in \o I} F_i \big) \big),
\end{equation}
where $\o I$ stands for $\{1,\dots, n\}\setminus I$,
and $\not\prec$ is the negation of $\prec$.

For instance, if we consider aggregate~(\ref{exaggr}), the
conjunctive terms in~(\ref{mainaggr}) correspond to the
cases when the sum of weights is 1, that is, when $I=\{1\}$ and $I=\{2\}$.
The two implications are
$q\implies p$ and $p\implies q$ respectively, so that~(\ref{exaggr}) is
\beq2
(q\implies p)\wedge (p\implies q).
\eeq2{exaggr2}
Similarly,
\beq2
sum\langle \{p=1,q=1\} \rangle = 1
\eeq2{exaggrno}
is
\beq2
(p\vee q)\wedge \neg (p\wedge q).
\eeq2{exaggrno2}

Even though~(\ref{exaggrno}) can be seen as the negation of~(\ref{exaggr}),
the negation of~(\ref{exaggrno2}) is not strongly equivalent 
to~(\ref{exaggr2}) (although they are classically
equivalent). This shows that it is generally incorrect to ``move'' a
negation from a binary relation symbol (such as $\not =$) in front of
the aggregate as the unary connective $\neg$, and vice versa.

Next proposition shows that this understanding of aggregates as propositional
formulas is equivalent to the semantics for theories with aggregates of the
previous section. Two formulas with aggregates are {\em classically equivalent}
to each other if they are satisfied by the same sets of atoms.

\begin{proposition}
\label{propaggregate}
Let~$A$ be an aggregate of the form~(\ref{aggregate}) and let $G$ be the
corresponding formula~(\ref{mainaggr}). Then
\begin{enumerate}
\item[(a)]
$G$ is classically equivalent to $A$, and
\item[(b)]
for any set $X$ of atoms,
$G^X$ is classically equivalent to $A^X$.
\end{enumerate}
\end{proposition}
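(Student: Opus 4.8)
The plan is to prove (a) and (b) together by a direct unwinding of the definition of satisfaction for aggregates, since (b) is really a uniform version of (a) applied to the reduct. For part (a), I would fix a set $X$ of atoms and show $X\models A$ iff $X\models G$. Let $W_X$ be the multiset of weights $w_i$ with $X\models F_i$, and let $I_X=\{i : X\models F_i\}$, so that $W_X=\{w_i : i\in I_X\}$. By definition $X\models A$ means $op(W_X)\prec N$, i.e. $op(\{w_i : i\in I_X\})\prec N$. On the other hand, $X\not\models G$ means there is some $I\subseteq\{1,\dots,n\}$ with $op(\{w_i : i\in I\})\not\prec N$ such that $X\models \bigwedge_{i\in I}F_i$ but $X\not\models\bigvee_{i\in\o I}F_i$; the first condition says $I\subseteq I_X$ and the second says $\o I\cap I_X=\emptyset$, i.e. $I_X\subseteq I$. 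Together these force $I=I_X$. So $X\not\models G$ iff $op(\{w_i : i\in I_X\})\not\prec N$, which is exactly $X\not\models A$. This gives (a).

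For part (b), I would observe that by the definition of the reduct for aggregates, if $X\not\models A$ then $A^X=\bot$, and since by (a) we also have $X\not\models G$, classical-logic reasoning (every conjunct of $G$ that fails is replaced by $\bot$ in a classically equivalent way, or more simply $G$ is classically equivalent to $A$ hence unsatisfied, and one checks $G^X$ is then classically equivalent to $\bot$) gives $G^X$ classically equivalent to $\bot=A^X$. The more interesting case is $X\models A$; then $A^X = op\langle\{F_1^X=w_1,\dots,F_n^X=w_n\}\rangle\prec N$, and $G^X$ is obtained from $G$ by the reduct operation. The key point is that for any subset $Y$ of atoms, $Y\models F_i^X$ relates to the $F_i$ the same way for all $i$ simultaneously, so applying part (a) to the aggregate $A'=op\langle\{F_1^X=w_1,\dots,F_n^X=w_n\}\rangle\prec N$ shows $A'$ is classically equivalent to its associated formula $G'=\bigwedge_{I : op(\{w_i : i\in I\})\not\prec N}\big((\bigwedge_{i\in I}F_i^X)\implies(\bigvee_{i\in\o I}F_i^X)\big)$, which is precisely $G$ with each $F_i$ replaced by $F_i^X$.

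So the remaining task is to check that $G^X$ is classically equivalent to $G'$, i.e. that reducing $G$ commutes, up to classical equivalence, with replacing the $F_i$ by their reducts $F_i^X$. For this I would use the general fact (stated earlier in the excerpt: classically equivalent transformations may be applied inside a reduct, since minimal models of the reduct are unchanged) together with the recursive structure of the reduct on $\wedge$, $\vee$, $\implies$: when $X\models G$, the reduct $G^X$ recursively descends into the conjuncts, and within each conjunct $(\bigwedge_{i\in I}F_i)\implies(\bigvee_{i\in\o I}F_i)$ it descends into the $F_i$ (replacing those $F_i$ not satisfied by $X$ with $\bot$, which is exactly $F_i^X$ for those $i$, while for satisfied $F_i$ the reduct is $F_i^X$ by definition), and $\bot$-valued conjuncts or subformulas are handled consistently. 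I expect the main obstacle to be the bookkeeping in this last step: one must be careful that when $X\not\models G$ (equivalently $X\not\models A$) the outer reduct collapses $G^X$ to $\bot$ correctly, and when $X\models G$ one must verify that every maximal subformula of $G$ not satisfied by $X$ gets replaced consistently so that what remains is literally $G$ with $F_i$ replaced by $F_i^X$ up to classical equivalence. This is routine but requires attention to the interaction of the "maximal subformula" clause with the nested conjunctions and disjunctions of the $F_i$; appealing to Proposition~\ref{prop:se}(iii) or the remark that classically equivalent pieces can be swapped inside a reduct keeps this manageable.
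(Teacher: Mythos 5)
Your proposal is correct and follows essentially the same route as the paper: part (a) via the observation that exactly one conjunct $H_{I_X}$ of $G$ can fail under $X$, and part (b) via a case split on $X\models A$, identifying $A^X$'s translation as $G$ with each $F_i$ replaced by $F_i^X$, and then distributing the reduct over the conjunctions, disjunctions and (satisfied) implications up to classical equivalence --- which is precisely the paper's Lemma~\ref{conjdisj} together with Lemmas~\ref{lemma:propaggregate-a} and~\ref{lemma:aggrformreduct}. The only cosmetic difference is that the paper packages the $X\not\models A$ case using ``extended aggregates'' and the $\hat{A}$ notation, whereas you handle that case inline.
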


Treating aggregates as propositional formulas allows us to apply many
properties of propositional theories presented in
Section~\ref{sec:pt-properties} to theories with aggregates also.
Consequently, we have the concept of an head atom, of strong
equivalence, we can use the completion lemma and so on. We will use several
of those properties to prove Proposition~\ref{exprop2} below.
In the rest of the paper we will often make no distinctions
between the two ways of defining the semantics of aggregates discussed here.

Notice that replacing, in a theory, an aggregate of the
form~(\ref{aggregate}) with a formula that is not strongly equivalent to the
corresponding formula~(\ref{mainaggr}) may lead to different stable models.
This shows that there is no other way (modulo strong equivalence) of
representing our aggregates as propositional formulas.

\subsection{Monotone Aggregates}\label{sec:monaggr}

An aggregate $op\langle  \{F_1=w_1,\dots, F_n=w_n\}\rangle\prec N$ is
{\em monotone} if, for each pair of multisets $W_1$, $W_2$ such that
$W_1\subseteq W_2\subseteq \{w_1,\dots,w_n\}$, $op(W_2)\prec N$ is true
whenever $op(W_1)\prec N$ is true. The definition of an {\em antimonotone}
aggregate is similar, with $W_1\subseteq W_2$ replaced by $W_2\subseteq W_1$.

For instance,
\beq2
sum\langle \{ p=1,q=1 \}\rangle> 1
\eeq2{exaggrmon}
is monotone, and 
\beq2
sum\langle \{ p=1,q=1 \}\rangle< 1.
\eeq2{exaggrantimon}
is antimonotone. An example of an aggregate that is neither monotone nor
antimonotone is~(\ref{exaggr}).

\begin{proposition}
\label{monotone}
For any aggregate $op\langle  \{F_1=w_1,\dots, F_n=w_n\}\rangle\prec N$,
formula~(\ref{mainaggr}) is strongly equivalent to
\beq2
\bigwedge_{I\subseteq \{1,\dots,n\}~:~op(\{w_i~:~i\in I\})\not\prec N}
 \big( \bigvee_{i\in \o I} F_i \big)
\eeq2{monaggr}
if the aggregate is monotone, and to
\beq2
\bigwedge_{I\subseteq \{1,\dots,n\}~:~op(\{w_i~:~i\in I\})\not\prec N}
 \big( \neg \bigwedge_{i\in I} F_i \big)
\eeq2{antimonaggr}
if the aggregate is antimonotone.
\end{proposition}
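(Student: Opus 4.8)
The plan is to prove the two strong equivalences by appealing to Proposition~\ref{prop:se}, which says that two theories are strongly equivalent iff they are equivalent in the logic of here-and-there; so it suffices to show that an HT-interpretation $(X,Y)$ satisfies~(\ref{mainaggr}) iff it satisfies~(\ref{monaggr}) (in the monotone case), and similarly for~(\ref{antimonaggr}) in the antimonotone case. Since each conjunct of~(\ref{mainaggr}) is $(\bigwedge_{i\in I}F_i)\implies(\bigvee_{i\in\o I}F_i)$ and the corresponding conjunct of~(\ref{monaggr}) is just $\bigvee_{i\in\o I}F_i$, it is enough to show, for a monotone aggregate, that the index sets $I$ with $op(\{w_i:i\in I\})\not\prec N$ that actually contribute something are closed under the operations we need: in particular, if $I$ is such a set and $i\in I$, then $I\setminus\{i\}$ is also such a set (this is precisely the contrapositive of monotonicity: if $op(W_1)\prec N$ then $op(W_2)\prec N$ for $W_1\subseteq W_2$, so $op(W_2)\not\prec N$ forces $op(W_1)\not\prec N$). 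Dually, for an antimonotone aggregate, if $I$ has $op\not\prec N$ then every superset $I'\supseteq I$ also has $op\not\prec N$.

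First, for the monotone case, I would argue directly at the level of HT-satisfaction. Fix $(X,Y)$. The right-to-left direction is immediate, since each conjunct of~(\ref{monaggr}) implies the corresponding conjunct of~(\ref{mainaggr}) in the logic of here-and-there (an implication is entailed by its consequent). For left-to-right, suppose $(X,Y)\models$~(\ref{mainaggr}) but, for contradiction, $(X,Y)\not\models\bigvee_{i\in\o I}F_i$ for some $I$ with $op(\{w_i:i\in I\})\not\prec N$; choose such an $I$ of minimal cardinality. If $I=\emptyset$ then the matching conjunct of~(\ref{mainaggr}) is $\top\implies\bigvee_{i\in\o\emptyset}F_i$, which is satisfied iff $\bigvee F_i$ is, giving the contradiction directly. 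If $I\neq\emptyset$, pick $i_0\in I$; by monotonicity (contrapositive, as above) $I\setminus\{i_0\}$ also satisfies $op\not\prec N$, and by minimality of $I$ we get $(X,Y)\models\bigvee_{i\in\o{(I\setminus\{i_0\})}}F_i$, i.e.\ $(X,Y)\models F_{i_0}\vee\bigvee_{i\in\o I}F_i$. Since the second disjunct fails by assumption, $(X,Y)\models F_{i_0}$. Running this over all $i_0\in I$ gives $(X,Y)\models\bigwedge_{i\in I}F_i$, so the conjunct $(\bigwedge_{i\in I}F_i)\implies(\bigvee_{i\in\o I}F_i)$ of~(\ref{mainaggr}), which $(X,Y)$ satisfies, forces $(X,Y)\models\bigvee_{i\in\o I}F_i$ --- contradiction.

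Second, for the antimonotone case the argument is the mirror image: each conjunct of~(\ref{antimonaggr}) is $\neg\bigwedge_{i\in I}F_i$, which is $(\bigwedge_{i\in I}F_i)\implies\bot$, and this entails $(\bigwedge_{i\in I}F_i)\implies(\bigvee_{i\in\o I}F_i)$, giving one direction. For the other, if $(X,Y)$ satisfies~(\ref{mainaggr}) but $(X,Y)\not\models\neg\bigwedge_{i\in I}F_i$ for some $I$ with $op\not\prec N$, take such an $I$ of maximal cardinality; if $I=\{1,\dots,n\}$ then $\o I=\emptyset$ and the conjunct of~(\ref{mainaggr}) is $(\bigwedge_{i\in I}F_i)\implies\bot$, i.e.\ exactly $\neg\bigwedge_{i\in I}F_i$, contradiction; otherwise pick $i_0\in\o I$, note $I\cup\{i_0\}$ still has $op\not\prec N$ by antimonotonicity, and use maximality to deduce $(X,Y)\models\neg\bigwedge_{i\in I\cup\{i_0\}}F_i$, then combine with the~(\ref{mainaggr})-conjunct for $I$ and the fact that $\neg\bigwedge_{i\in I}F_i$ fails --- which in the logic of here-and-there means $Y\models\bigwedge_{i\in I}F_i$ and $(X,Y)\models\bigwedge_{i\in I}F_i$ --- to reach a contradiction.

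The main obstacle I anticipate is the antimonotone case's bookkeeping with the here-and-there semantics of negation: because $\neg\bigwedge_{i\in I}F_i$ means \emph{$Y$ classically falsifies $\bigwedge_{i\in I}F_i$} (not merely $(X,Y)$), one must track the classical-at-$Y$ component carefully and make sure the maximal-counterexample argument respects it; concretely, the step where ``$(X,Y)\not\models\neg\bigwedge_{i\in I\cup\{i_0\}}F_i$ fails'' has to be unpacked as $Y\models\bigwedge_{i\in I\cup\{i_0\}}F_i$ and then one checks consistency with the $I$-conjunct of~(\ref{mainaggr}) both at $(X,Y)$ and at $Y$. Everything else is routine propositional reasoning, and the monotone case is genuinely easy once the contrapositive-of-monotonicity observation is in hand. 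Alternatively, one could avoid some of this by working instead with condition~(iii) of Proposition~\ref{prop:se}, showing $(\ref{mainaggr})^X$ and $(\ref{monaggr})^X$ are classically equivalent for every $X$, but I expect the HT-route above to be cleaner.
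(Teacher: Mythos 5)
Your proof is essentially correct, but it takes a genuinely different route from the paper's. The paper works through condition~(iii) of Proposition~\ref{prop:se}: it first establishes a purely \emph{classical} equivalence between~(\ref{mainaggr}) and~(\ref{monaggr})/(\ref{antimonaggr}) (its Lemma~\ref{satmon}, proved by an entailment computation over all subsets $I'\subseteq I$ whose combined antecedent is a tautology), and then transfers this to strong equivalence by showing that the reduct operation commutes with the aggregate translation (Lemmas~\ref{conjdisj} and~\ref{lemma:aggrformreduct}), so that the classical lemma can be applied a second time to the reducts $F_i^X$. You instead work through condition~(ii), arguing directly in the logic of here-and-there with a minimal (resp.\ maximal) counterexample over the index sets $I$, using the contrapositive of (anti)monotonicity to move between $I$ and $I\setminus\{i_0\}$ (resp.\ $I\cup\{i_0\}$). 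Your route is more self-contained --- it needs no separate classical-equivalence lemma and no commutation-of-reduct lemma --- while the paper's route reuses machinery that is uniform with its other proofs and gets the classical equivalence statement as a standalone byproduct.

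One small correction in your antimonotone case: you unpack ``$(X,Y)\not\models\neg\bigwedge_{i\in I}F_i$'' as giving both $Y\models\bigwedge_{i\in I}F_i$ and $(X,Y)\models\bigwedge_{i\in I}F_i$. The second conjunct does not follow: in the logic of here-and-there, $(X,Y)\not\models\neg A$ is equivalent to $Y\models A$ only (by persistence, $(X,Y)\models A$ would imply $Y\models A$, but not conversely). Fortunately your argument never needs the $(X,Y)$-level claim: from maximality you get $Y\not\models F_{i_0}$ for every $i_0\in\o I$, hence $Y\not\models\bigvee_{i\in\o I}F_i$ while $Y\models\bigwedge_{i\in I}F_i$, which already contradicts the classical-at-$Y$ component of $(X,Y)\models\bigl(\bigwedge_{i\in I}F_i\bigr)\implies\bigl(\bigvee_{i\in\o I}F_i\bigr)$. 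You flag exactly this bookkeeping issue yourself in your final paragraph, so the fix is the one you anticipate; with that repair the proof is sound.
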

In other words, if $op\langle  S\rangle\prec N$ is monotone then the
antecedents of the implications in~(\ref{mainaggr}) can be dropped.
Similarly, in case of antimonotone aggregates, the consequents of these
implications can be replaced by $\bot$. In both cases,~(\ref{mainaggr})
is turned into a nested expression, if $F_1,\dots,F_n$ are nested
expressions.

For instance, aggregate~(\ref{exaggrmon}) is normally written as formula
$$(p\vee q)\wedge (p\implies q)\wedge (q\implies p).$$
Since the aggregate is monotone, it can also be written, by
Proposition~\ref{monotone}, as nested expression
$$(p\vee q)\wedge q\wedge p,$$
which is strongly equivalent to $q\wedge p$.
Similarly, aggregate~(\ref{exaggrantimon}) is normally written as formula
$$((p\wedge q)\implies\bot)\wedge (p\implies q)\wedge (q\implies p);$$
since the aggregate is nonmonotone, it can also be written as nested expression
$$\neg (p\wedge q)\wedge \neg p\wedge \neg q,$$
which is strongly equivalent to $\neg p\wedge \neg q$.

On the other hand, if an aggregate is neither monotone nor antimonotone,
it may be not possible to find a nested expression strongly equivalent
to~(\ref{mainaggr}), even if $F_1,\dots,F_n$ are nested
expressions. This is the case for~(\ref{exaggr}).
Indeed, the formula~(\ref{mainaggr}) corresponding to~(\ref{exaggr})
is~(\ref{exaggr2}), whose reduct relative to $\{p,q\}$ is~(\ref{exaggr2}).
Consequently, by Proposition~\ref{prop:se}, for any formula $G$
strongly equivalent to~(\ref{exaggr2}),
$G^{\{p,q\}}$ is classically equivalent to~(\ref{exaggr2}).
On the other hand, the reduct of nested expressions are essentially
AND-OR combinations of
atoms, $\top$ and $\bot$ (negations either become $\bot$ or $\top$ in the
reduct), and no formula of this kind is classically equivalent
to~(\ref{exaggr2}).

In some uses of ASP, aggregates that are neither monotone nor antimonotone
are essential, as discussed in the next section.



\subsection{Example}\label{sec:example}

We consider the following variation of the combinatorial auction
problem~\cite{bar01},
which can be naturally formalized using an aggregate that is neither monotone
nor antimonotone.

Joe wants to move to another town and has the problem of removing all his
bulky furniture from his old place. He has received some bids: 
each bid may be for one piece or several pieces of furniture, and the amount
offered can be negative (if the value of the pieces is lower than the cost of
removing them). A junkyard will take any object not sold to bidders, for a
price. The goal is to find a collection of bids for which Joe doesn't lose
money, if there is any.

Assume that there are $n$ bids, denoted by atoms $b_1,\dots,b_n$.
We express by the formulas
\beq2
b_i\vee \neg b_i
\eeq2{joe1}
($1\leq i\leq n$) that Joe is free to accept any bid or not.
Clearly, Joe cannot accept two bids that involve the selling of the same
piece of furniture. So, for every such pair $i,j$ of bids, we include the
formula
\beq2
\neg (b_i\wedge b_j).
\eeq2{joe2}
Next, we need to express which pieces
of the furniture have not been given to bidders. If there are $m$
objects we can express that an object $i$ is sold by bid $j$ by adding
the rule
\beq2
b_j\implies s_i
\eeq2{joe3}
to our theory.

Finally, we need to express that Joe doesn't lose money by selling his items.
This is done by the aggregate
\beq2
sum\langle \{b_1=w_1,\dots,b_n=w_n,
\neg s_1=-c_1,\dots,\neg s_m=-c_m\}\rangle\geq 0,
\eeq2{joe4}
where each $w_i$ is the amount of money (possibly negative) obtained by
accepting bid $i$, and each $c_i$ is the money requested by the junkyard
to remove item $i$. Note that~(\ref{joe4})
is neither monotone nor antimonotone.

We define a {\em solution} to Joe's problem as a set of accepted bids such that
\begin{enumerate}
\item[(a)]
the bids involve selling disjoint sets of items, and
\item[(b)]
the sum of the money earned from the bids is greater than the money spent
giving away the remaining items.
\end{enumerate}

\begin{proposition}\label{exprop2}
$X\mapsto \{i: b_i\in X\}$ is a 1--1 correspondence between the
stable models of the theory consisting of formulas~(\ref{joe1})--(\ref{joe4})
and a solution to Joe's problem.
\end{proposition}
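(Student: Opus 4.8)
The plan is to strip off the easy parts of the theory with the general propositions of Section~\ref{sec:pt-properties} and then pin down the remaining stable models by a direct reduct computation, matching them against conditions (a) and (b).

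Write $\Gamma$ for the theory of (\ref{joe1})--(\ref{joe4}), and for a set $X$ of atoms put $B=\{i:b_i\in X\}$ and $S=\{i:s_i\in X\}$; let $O(B)$ be the set of objects sold by the bids in $B$, i.e.\ the set of $i$ for which some rule $b_j\implies s_i$ of (\ref{joe3}) has $j\in B$. By Proposition~\ref{prop:head} every stable model of $\Gamma$ consists of head atoms of $\Gamma$, and these are exactly $b_1,\dots,b_n$ together with the atoms $s_i$ that occur in some formula (\ref{joe3}); so any stable model has the form $\{b_i:i\in B\}\cup\{s_i:i\in S\}$. Since each $\neg(b_i\wedge b_j)$ in (\ref{joe2}) has no head atoms, Proposition~\ref{prop:constraint} reduces the task to showing: $X$ is a stable model of $\Gamma^-:=(\ref{joe1})\cup(\ref{joe3})\cup(\ref{joe4})$ and $X\models(\ref{joe2})$, the latter being exactly condition~(a) (the bids in $B$ sell pairwise disjoint sets).

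The core step is to determine the stable models of $\Gamma^-$. For a candidate $X$ with $X\models\Gamma^-$ I would compute $(\Gamma^-)^X$: each disjunction of (\ref{joe1}) reduces to the fact $b_i$ when $i\in B$ and to $\top$ otherwise; each rule $b_j\implies s_i$ of (\ref{joe3}) reduces to $\top$ when $j\notin B$ and is unchanged when $j\in B$; and in the reduct of the aggregate (\ref{joe4}) each $b_i$ with $i\notin B$ is replaced by $\bot$, and each $\neg s_i$ is replaced by $\top$ if $i\notin S$ and by $\bot$ if $i\in S$. The key observation is then that any $Y\subseteq X$ with $Y\models(\Gamma^-)^X$ must contain every $b_i$ with $i\in B$ (by the reduced (\ref{joe1})) and hence every $s_i$ with $i\in O(B)$ (by the reduced (\ref{joe3})), and on such a $Y$ the reduced aggregate imposes only the $Y$-independent inequality $\sum_{i\in B}w_i-\sum_{i\notin S}c_i\ge0$, which holds because $X\models(\ref{joe4})$. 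Hence $\{b_i:i\in B\}\cup\{s_i:i\in O(B)\}$ is itself a model of $(\Gamma^-)^X$ and is contained in every model, so it is the least one, and $X$ is a stable model of $\Gamma^-$ iff $X=\{b_i:i\in B\}\cup\{s_i:i\in O(B)\}$ and $X\models(\ref{joe4})$. The converse inclusion (every such set, once it satisfies (\ref{joe4}), is the minimal model of its own reduct) is the same computation run backwards.

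Assembling the pieces, $X$ is a stable model of $\Gamma$ iff $X=\{b_i:i\in B\}\cup\{s_i:i\in O(B)\}$ for some $B$ satisfying (a) and with $\sum_{i\in B}w_i\ge\sum_{i\notin O(B)}c_i$, i.e.\ satisfying (b); since such an $X$ is determined by $B$, the map $X\mapsto\{i:b_i\in X\}$ is a bijection from the stable models of $\Gamma$ onto the sets of bids satisfying (a) and (b), which are the solutions to Joe's problem. The step I expect to be the real obstacle is the aggregate-reduct bookkeeping: even though $b_i$ occurs strictly positively in the propositional form (\ref{mainaggr}) of (\ref{joe4}) and is therefore a head atom of (\ref{joe4}), one must check that the reduced aggregate never forces a fresh $b_i$ or $s_i$ into a submodel (all the $b_i$ that are needed already come from the reduced (\ref{joe1}), and the $-c_i$ contributions are frozen into a constant by the reduct). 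Passing to (\ref{mainaggr}) via Proposition~\ref{propaggregate} and arguing conjunct by conjunct is an alternative route here, but it does not really shorten this verification.
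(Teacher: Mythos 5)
Your proof is correct, and the core of it --- computing $(\Gamma^-)^X$ piece by piece, observing that the reduced~(\ref{joe1}) and~(\ref{joe3}) force every model of the reduct to contain $\{b_i:i\in B\}\cup\{s_i:i\in O(B)\}$, and that on any such model the reduced aggregate degenerates into the fixed inequality $\sum_{i\in B}w_i-\sum_{i\notin S}c_i\ge 0$ --- is essentially the same reduct computation the paper performs in its backward direction, where it notes that items (i)--(iii) of $\Gamma^X$ are jointly equivalent to $X$. The difference is in how the $s_i$'s get pinned down. The paper isolates this as Lemma~\ref{exprop2l1} and proves it by grouping the rules~(\ref{joe3}) and invoking the Completion Lemma (Proposition~\ref{prop:compl}) to turn the implications defining $s_i$ into equivalences, which every stable model must then satisfy; your version extracts the same fact ($S=O(B)$) directly from the least-model computation, at the price of carrying the aggregate through the minimality argument by hand. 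You also route the constraints~(\ref{joe2}) through Proposition~\ref{prop:constraint} and invoke Proposition~\ref{prop:head}, where the paper simply observes that~(\ref{joe2}) reduces to $\top$; these are equivalent bits of bookkeeping. What the paper's route buys is exactly what the section is meant to illustrate --- that the general theorems about propositional theories (here the Completion Lemma) transfer to aggregates and shorten such correctness arguments; what your route buys is self-containedness, since the only nontrivial external fact you need is the definition of the reduct of an aggregate. Your closing worry about the aggregate ``forcing a fresh $b_i$ or $s_i$ into a submodel'' is resolved correctly: once a candidate submodel contains all of $\{b_i:i\in B\}$, the truth value of the reduced aggregate no longer depends on the submodel, so it cannot force anything further.
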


\subsection{Computational Complexity}\label{sec:aggr-compl}

Since theories with aggregates generalize disjunctive programs, the
problem of the existence of a stable model of a theory with aggregates
clearly is $\Sigma_2^P$-hard.\footnote{We are clearly assuming weight not to
be arbitrary real numbers but to belong to a countable subset of real numbers,
such as integers of floating point numbers.} We need to check in which class
of the computational hierarchy this problem belongs.

Even if propositional formulas corresponding to aggregates can be
exponentially larger than the original aggregate, it turns out that (by
treating aggregates as primitive constructs) the
computation is not harder than for propositional theories.

\begin{proposition}\label{prop:complexaggr}
If, for every aggregate, computing $op(W)\prec N$ requires polynomial time
then the existence of a stable model of a theory with aggregates is a
$\Sigma_2^P$-complete problem.
\end{proposition}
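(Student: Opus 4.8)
The plan is to establish membership in $\Sigma_2^P$; hardness is already noted in the text, since theories with aggregates generalize disjunctive programs whose stable-model existence is $\Sigma_2^P$-hard. For membership, I would follow the usual ``guess-and-check'' template for $\Sigma_2^P$: a $\Sigma_2^P$ computation consists of an existential guess of polynomial size, followed by a $\Pi_1^P$ (i.e., coNP) verification. Here I would guess a candidate stable model $X$, represented as a subset of the finitely many atoms occurring in the theory $\Gamma$ (this guess is clearly polynomial in the size of $\Gamma$). It then remains to verify, in coNP, that $X$ is indeed a minimal model of $\Gamma^X$.

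The verification breaks into two parts. First, one must check that $X \models \Gamma^X$; by the easily verifiable fact recalled in Section~\ref{sec:pt-def} (that $X \models \Gamma^X$ iff $X \models \Gamma$), this is the same as checking $X \models \Gamma$. Since satisfaction of a single formula with aggregates by a fixed set $X$ is computed recursively, and the only nontrivial case is evaluating $op(W_X) \prec N$ for each aggregate subformula --- which by hypothesis takes polynomial time --- the whole check $X \models \Gamma$ runs in polynomial time; in particular $W_X$ is obtained by recursively evaluating the (polynomially many) subformulas $F_i$. Second, and this is the genuinely coNP part, one must check minimality: that no proper subset $Y \subsetneq X$ satisfies $\Gamma^X$. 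Equivalently, the complementary (NP) question is whether there exists $Y \subsetneq X$ with $Y \models \Gamma^X$; one guesses such a $Y$ (again polynomial size) and checks $Y \models \Gamma^X$ in polynomial time. The only subtlety is that $\Gamma^X$ is itself a theory with aggregates (using the reduct clause for aggregates from Section~\ref{sec:defaggr}), and it has essentially the same size as $\Gamma$ --- the reduct only replaces some subformulas by $\bot$ --- so evaluating $Y \models \Gamma^X$ is again polynomial, using once more that each $op(W') \prec N$ is polynomial-time computable. Hence minimality-failure is in NP, minimality is in coNP, and the overall problem is in $\Sigma_2^P$.

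The point worth emphasizing --- and the reason the proposition is not entirely routine --- is that one must \emph{not} first translate each aggregate into its corresponding propositional formula~(\ref{mainaggr}), because that formula can be exponentially larger than the aggregate (it ranges over all subsets $I \subseteq \{1,\dots,n\}$). The resolution, which I would make explicit, is that Proposition~\ref{propaggregate}(b) guarantees $G^X$ is classically equivalent to $A^X$, so that working directly with the primitive reduct $\Gamma^X$ computes exactly the same minimal models as the propositional encoding would; thus treating aggregates as primitive constructs, and evaluating the relation $op(W) \prec N$ directly in polynomial time rather than unfolding it, keeps every object manipulated by the algorithm of polynomial size. The main obstacle is therefore not any deep combinatorial fact but rather this bookkeeping: verifying that both the reduct $\Gamma^X$ and the two satisfaction checks ($X \models \Gamma$ and the guessed $Y \models \Gamma^X$) stay polynomial-time under the stated hypothesis on $op$, which hinges on the recursive structure of formulas with aggregates and on Proposition~\ref{propaggregate}.
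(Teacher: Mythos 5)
Your proposal is correct and follows essentially the same route as the paper: hardness by generalization of disjunctive programs, and membership by guessing $X$ and performing a universally quantified polynomial-time check over the subsets $Y\subseteq X$, which hinges on the observation (the paper's Lemma~\ref{lemma:complexaggr}) that satisfaction and reduct computation are polynomial when aggregates are treated as primitive constructs rather than unfolded into~(\ref{mainaggr}). The only cosmetic difference is that the paper writes the verification as a single $\forall Y$ predicate rather than splitting it into a polynomial satisfaction check plus a coNP minimality check, and it does not need to invoke Proposition~\ref{propaggregate}(b), since the stable-model semantics for theories with aggregates is already defined directly on the primitive reduct.
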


For a nondisjunctive program with nested expressions the
existence of a stable model is NP-complete. If we allow
nonnested aggregates in the body, for instance by allowing rules
$$
A_1\wedge\cdots\wedge A_n\implies a
$$
($A_1,\dots,A_n$ are aggregates and $a$ is an atom or $\bot$)
then the complexity increases to $\Sigma_2^P$. This follows from
Lemma~\ref{lemma:disjunction}, since, in~(\ref{translaggr}),
each formula $l_i$ is the propositional representation of
$sum\langle \{l_i=1\}\rangle\geq 1$; similarly,
each $a_j\implies a_i$ is the propositional representation of
$sum\langle\{a_j=-1,a_i=1\}\rangle\geq 0$.

However, if we allow monotone and antimonotone aggregates only
--- even nested --- in the antecedent, we are in class NP.

\begin{proposition}\label{prop:complexmonaggr}
Consider theories with aggregates consisting of formulas of the form
$$
F\implies a,
$$
where $a$ is an atom or $\bot$, and $F$ contains monotone and
antimonotone aggregates only, no equivalences and no implications other
than negations.
If, for every aggregate, computing $op(W)\prec N$ requires polynomial time
then the problem of the existence of a stable model of theories of this
kind is an NP-complete problem.
\end{proposition}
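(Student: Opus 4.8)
The plan is to establish NP-hardness by a trivial inclusion and NP-membership by a guess-and-check argument in which the minimality test on the reduct is carried out in polynomial time. NP-hardness is immediate: the class of theories described already contains every traditional program (a set of rules $l_1\wedge\cdots\wedge l_m\implies a$ with $a$ an atom, the $l_j$ literals, and no aggregates), and deciding whether a traditional program has a stable model is NP-hard~\cite{mar91}. So the real work is to put the problem in NP. I would guess a set $X$ of atoms occurring in $\Gamma$, verify $X\models\Gamma$ (polynomial once each test $op(W)\prec N$ is), and then check that $X$ is a \emph{minimal} model of $\Gamma^X$; the difficulty is that $\Gamma^X$, written out as an ordinary propositional theory, can be exponentially large.

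The heart of the argument is the following claim: for any formula $F$ of the admitted form (built from atoms, $\bot$, $\wedge$, $\vee$, negation, and monotone and antimonotone aggregates) and any set $X$ of atoms, the map $Y\mapsto[\,Y\models F^X\,]$ on subsets $Y$ of $X$ is (a) computable in polynomial time and (b) monotone in $Y$. For (a) I would evaluate $Y\models F^X$ by recursion on the syntax of $F$, following the recursive definition of the reduct: at a binary connective or an aggregate, first test whether $X$ satisfies that subformula (if not, its reduct is $\bot$ and the answer is ``no''); at $\neg G$ the reduct is $\top$ or $\bot$ according to whether $X\not\models G$ or $X\models G$; at an aggregate $op\langle\{F_1=w_1,\dots,F_n=w_n\}\rangle\prec N$ satisfied by $X$, the reduct is again an aggregate, so by the satisfaction clause for aggregates the answer is ``yes'' iff $op(\{w_i : Y\models F_i^X\})\prec N$, one call to $op$ after recursively evaluating the $F_i^X$. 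Each subformula is visited once, so this is polynomial --- the point being that we never expand the aggregate into formula~(\ref{mainaggr}). (Proposition~\ref{monotone} together with Propositions~\ref{prop:se} and~\ref{propaggregate} confirms that $F^X$ is classically equivalent to the reduct of a nested expression, but that nested expression is exponential, which is exactly why this lazy evaluation is needed for the complexity bound.) For (b) I would induct on $F$: atoms give $[a\in Y]$ and negations give a value constant in $Y$; $\wedge$ and $\vee$ preserve monotonicity; at a monotone aggregate, monotonicity of the $F_i^X$ makes the multiset $\{w_i : Y\models F_i^X\}$ grow with $Y$ and monotonicity of the aggregate makes $op(\cdot)\prec N$ grow with it; at an antimonotone aggregate $A$, if $X\not\models A$ its reduct is $\bot$ (constant), and if $X\models A$ the induction hypothesis gives $\{w_i : Y\models F_i^X\}\subseteq\{w_i : X\models F_i\}$, so antimonotonicity forces $op(\{w_i:Y\models F_i^X\})\prec N$ to hold for \emph{every} $Y\subseteq X$, making the reduct vacuously true and again constant.

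Granting the claim, fix $X$ with $X\models\Gamma$ (equivalently $X\models\Gamma^X$). In $\Gamma^X$ every constraint $F\implies\bot$, and every rule $F\implies a$ with $a\notin X$, reduces to a tautology (in both cases $X\not\models F$, so $F^X=\bot$). The surviving rules form a ``positive program'' $P^X$ consisting of rules $F^X\implies a$ whose bodies are, by (b), monotone over subsets of $X$; such a program has a least model $M$, which (since every minimal model of $\Gamma^X$ is contained in $X$, and $P^X$ being positive has $M$ as its unique minimal model) is exactly the unique minimal model of $\Gamma^X$. Moreover $M$ is computable by the usual bottom-up iteration $M_0=\emptyset$, $M_{k+1}=M_k\cup\{a : (F^X\implies a)\in P^X,\ M_k\models F^X\}$, which stabilises within $|\mathrm{atoms}(\Gamma)|$ steps and, by (a), performs only polynomially many polynomial-time body evaluations. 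Hence $X$ is a stable model of $\Gamma$ iff $X\models\Gamma$ and $X=M$, and the NP algorithm is: guess $X$, check $X\models\Gamma$, compute $M$, accept iff $M=X$. With the NP-hardness above this gives NP-completeness.

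The step I expect to be the main obstacle is part (b) of the claim, specifically the antimonotone-aggregate case: showing that once $X$ satisfies such an aggregate its reduct collapses to something vacuously true, so that the reduct body remains monotone. This is precisely where the hypotheses ``monotone and antimonotone aggregates only'' and ``head an atom or $\bot$'' enter; relaxing either of them reintroduces, through the reduct, genuinely disjunctive behaviour and pushes the complexity up to $\Sigma_2^P$, as Lemma~\ref{lemma:disjunction} and Proposition~\ref{prop:disjunction} illustrate.
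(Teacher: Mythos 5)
Your proposal is correct and follows essentially the same route as the paper: NP-hardness by observing that traditional programs are a special case, and NP-membership by guessing $X$, checking $X\models\Gamma$, discarding the reduced rules whose heads are $\bot$ or atoms outside $X$ (their bodies reduce to $\bot$), and computing the least model of the remaining rules by bottom-up iteration, which terminates correctly because the reduct bodies are monotone over subsets of $X$ and each body can be evaluated lazily in polynomial time without expanding aggregates into formula~(\ref{mainaggr}) (the paper's Lemma~\ref{lemma:complexaggr} and the algorithm of Figure~\ref{fig:findmodel}). The one point where you genuinely diverge is the proof of the key monotonicity property (the paper's Lemma~\ref{lemma:complexmonaggr1}): you argue by direct structural induction on $F$, treating the monotone and antimonotone aggregate cases explicitly --- in particular showing that an antimonotone aggregate satisfied by $X$ has a reduct that is vacuously true on all $Y\subseteq X$ --- whereas the paper replaces each aggregate by the nested expression~(\ref{monaggr}) or~(\ref{antimonaggr}) via Proposition~\ref{monotone}, notes that nested expressions have all negative occurrences of atoms in the scope of negation, and invokes Lemma~\ref{weaklyposneg}. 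Your induction is more self-contained and makes visible exactly where antimonotonicity is used; the paper's version is shorter because it reuses machinery already developed for Propositions~\ref{propaggregate} and~\ref{monotone}. Both are sound, and your closing remark correctly identifies why the atomic-head and monotone/antimonotone restrictions are what keep the problem out of $\Sigma_2^P$.
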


Similar results have been independently proven in~\cite{cal05} for
FLP-aggregates.

\subsection{Other Formalisms}\label{sec:other}

\begin{figure}
\begin{tabular}{|r|c|c|c|}
\hline
&monotone/antimonotone&generic&anti-chain\\
&aggregates&aggregates&property\\
\hline
weight constraints&NP-complete&NP-complete&NO\\
PDB-aggregates&NP-complete&$\Sigma^2_P$-complete&YES\\
FLP-aggregates&NP-complete&$\Sigma^2_P$-complete&YES\\
our aggregates&NP-complete&$\Sigma^2_P$-complete&NO\\
\hline
\end{tabular}
\caption{Properties of definitions of programs with aggregates, in the case
in which the head of each rule is an atom. We limit the
syntax of our aggregates to the syntax allowed by the other formalisms.
The complexity is relative to the problem of the existence of a stable model.
The anti-chain property holds when no stable model can be a subset of another
one.}
\label{fig:comparison}
\end{figure}
Figure~\ref{fig:comparison} already shows that there are several
differences between the various definitions of an aggregate.
We analyze that more in details in the rest of this section.

\subsubsection{Programs with weight constraints}

Weight constraints are aggregates defined in~\cite{nie00} and
implemented in answer set solver
{\sc smodels}. We simplify the syntax of weight constraints
and of programs with weight constraints for clarity, without reducing
its semantical expressivity.

{\sl Weight constraints} are expressions of the form
\begin{equation}
N\leq \{ l_1 = w_1, \ldots, l_m = w_m\}
\label{wc1}
\end{equation}
and
\begin{equation}
\{ l_1 = w_1, \ldots, l_m = w_m\}\leq N
\label{wc2}
\end{equation}
where
\begin{itemize}
\item $N$ is (a symbol for) a real number,
\item
each of $l_1,\dots,l_n$ is a (symbol for) a literal, and
$w_1,\dots,w_n$ are (symbols for) real numbers.
\end{itemize}
An example of a weight constraint is~(\ref{exweight}).

The intuitive meaning of~(\ref{wc1}) is that 
the sum of the weights $w_i$ for all the $l_i$ that are true is not lower
than $N$. For~(\ref{wc2}) the sum of weights is not greater than $N$. 
Often, $N_1\leq S$ and $S\leq N_2$ are written together as
$N_1\leq S\leq N_2$.
If a weight $w$ is $1$ then the part ``$=w$'' is generally omitted. If
all weights are 1 then a weight constraint is called a {\em cardinality
constraint}.

A {\sl rule with weight constraints} is an
expression of the form
\begin{equation}
\label{wcrule}
a \ar C_1\wedge \cdots\wedge C_n
\end{equation}
where $a$ is an atom or $\bot$, and $C_1,\dots,C_n$ ($n\geq 0$) are
weight constraints.

Finally, a {\sl program with weight constraints} is a set of rules with
weight constraints. 
{\sl Rules/programs with cardinality constraints} are rules/programs
with weight constraints containing cardinality constraints only.

Programs with cardinality/weight constraints can be seen as a generalization
of traditional programs, by identifying each literal $l$ in the body of each
rule with cardinality constraint $1\leq \{l\}$.

The definition of a stable model from~\cite{nie00} requires first the
elimination of negative
weights from weight constraints. This is done by replacing each term
$l_i = w_i$ where $w_i$ is negative with $\o {l_i} = -w_i$
($\o {l_i}$ is the literal complementary to $l_i$) and increasing the
bound by $-w_i$. For instance,
$$
0\leq\{p=2,q=-1\}
$$
is rewritten as
$$
1\leq\{p=2,\neg q=1\}.
$$

Then~\cite{nie00} proposes a definition of a reduct and of a stable model
for programs with weight constraints without negative weights. For this paper,
we prefer showing a translational, equivalent semantics of such programs
from~\cite{fer05b}, that consists in replacing each weight constraint $C$
with a nested expression $[C]$, preserving the stable models of the program:
if $C$ is~(\ref{wc1}) then $[C]$ is
($I\subseteq \{1,\dots, n\}$)
\beq2
\bigvee_{I~:~N\leq \sum_{i \in I} w_i }
\big( \bigwedge_{i \in I} l_i\big)
\eeq2{oldtr1}
and if $C$ is~(\ref{wc2}) then $[C]$ is
\beq2
\neg \bigvee_{I~:~N<\sum_{i \in I} w_i }
\big( \bigwedge_{i \in I} l_i\big).
\eeq2{oldtr2}

It turns out that the way of understanding a weight constraint $C$ of this
paper is not different from $[C]$ when all weights are nonnegative.

\begin{proposition}
In presence of nonnegative weights only,
$[N\leq S]$ is strongly equivalent to 
$sum\langle S\rangle\geq N$,
and $[S\leq N]$ is strongly equivalent to $sum\langle S\rangle\leq N$.
\label{th:weight}
\end{proposition}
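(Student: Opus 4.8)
The plan is to route everything through Proposition~\ref{monotone}. With nonnegative weights, $sum$ is monotone with respect to multiset inclusion, so $sum\langle S\rangle\geq N$ is a monotone aggregate and $sum\langle S\rangle\leq N$ is an antimonotone one. Hence, by Proposition~\ref{monotone}, the propositional formula~(\ref{mainaggr}) associated with $sum\langle S\rangle\geq N$ is strongly equivalent to~(\ref{monaggr}), i.e.\ to $\bigwedge_{I\,:\,\sum_{i\in I}w_i<N}\big(\bigvee_{i\in\o I}l_i\big)$, and the one associated with $sum\langle S\rangle\leq N$ is strongly equivalent to~(\ref{antimonaggr}), i.e.\ to $\bigwedge_{I\,:\,\sum_{i\in I}w_i>N}\big(\neg\bigwedge_{i\in I}l_i\big)$. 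It therefore suffices to show that $[N\leq S]$, i.e.\ formula~(\ref{oldtr1}), is strongly equivalent to~(\ref{monaggr}), and that $[S\leq N]$, i.e.\ formula~(\ref{oldtr2}), is strongly equivalent to~(\ref{antimonaggr}); the proposition then follows by transitivity of strong equivalence.

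The antimonotone half is immediate. The index set $\{I:N<\sum_{i\in I}w_i\}$ of~(\ref{oldtr2}) is literally the index set $\{I:\sum_{i\in I}w_i\not\leq N\}$ of~(\ref{antimonaggr}), and $\neg\bigvee_I\varphi_I$ is intuitionistically, hence in the logic of here-and-there, equivalent to $\bigwedge_I\neg\varphi_I$. By the equivalence of conditions (i) and (ii) in Proposition~\ref{prop:se} this already gives strong equivalence of~(\ref{oldtr2}) and~(\ref{antimonaggr}).

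For the monotone half I would check directly, again via the equivalence of (i) and (ii) in Proposition~\ref{prop:se}, that~(\ref{oldtr1}) and~(\ref{monaggr}) are satisfied by the same HT-interpretations. Fix an HT-interpretation $(X,Y)$ and set $T=\{i:(X,Y)\models l_i\}$. Since, treating each $l_i$ as a unit, both~(\ref{oldtr1}) and~(\ref{monaggr}) are built using only $\wedge$ and $\vee$, their HT-satisfaction unwinds mechanically: $(X,Y)$ satisfies~(\ref{oldtr1}) iff some $I$ with $\sum_{i\in I}w_i\geq N$ satisfies $I\subseteq T$, and $(X,Y)$ satisfies~(\ref{monaggr}) iff every $I$ with $\sum_{i\in I}w_i<N$ satisfies $\o I\cap T\neq\emptyset$. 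This is where nonnegativity does the work: $\{I:\sum_{i\in I}w_i\geq N\}$ is upward closed and $\{I:\sum_{i\in I}w_i<N\}$ is downward closed, and a short monotonicity argument shows that each of these two conditions is equivalent to the single statement $\sum_{i\in T}w_i\geq N$ (for the first, use $I\subseteq T$ together with nonnegativity of the extra weights; for the second, take $I=T$). Hence~(\ref{oldtr1}) and~(\ref{monaggr}) are satisfied by the same HT-interpretations and are strongly equivalent.

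The only step requiring genuine thought is this last equivalence of the ``disjunctive'' and ``conjunctive'' conditions; but once the $l_i$ are hidden, neither formula contains an implication, so the here-and-there reading collapses to classical satisfaction of a monotone Boolean formula, and what is left is the purely combinatorial duality between an upward-closed family and the complementary downward-closed one. I expect the degenerate cases $N\leq 0$ and $N>\sum_i w_i$ --- where some of the big conjunctions or disjunctions collapse to $\top$ or $\bot$ --- to be the only mildly fiddly bookkeeping.
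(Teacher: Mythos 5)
Your proof is correct, and its skeleton matches the paper's: both arguments route the claim through Proposition~\ref{monotone} (reducing to a comparison with~(\ref{monaggr}) and~(\ref{antimonaggr})), and both dispose of the antimonotone half by observing that~(\ref{oldtr2}) is obtained from~(\ref{antimonaggr}) by a De~Morgan step that is valid in the logic of here-and-there. Where you diverge is in how the strong equivalence of $[N\leq S]$ and~(\ref{monaggr}) is certified. The paper first proves it for the special case where the $l_i$ are distinct atoms, by combining a known \emph{classical} equivalence between $[N\leq S]$ and the aggregate (Lemma~\ref{lemma:classiceq}, imported from earlier work) with the general fact that classically equivalent AND--OR combinations of atoms are strongly equivalent (Lemma~\ref{nestedschema2}), and then lifts to arbitrary literals by the substitution lemma (Lemma~\ref{nestedschema1}). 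You instead compute HT-satisfaction of both formulas directly, treating the $l_i$ as opaque units, and show that each condition collapses to $\sum_{i\in T}w_i\geq N$ for $T=\{i:(X,Y)\models l_i\}$; nonnegativity enters exactly where it should (upward/downward closure of the two index families). Your route is self-contained --- it needs neither the external classical-equivalence result nor the two auxiliary lemmas, and it handles arbitrary literals in one pass --- at the price of carrying out explicitly the combinatorial duality that the paper's Lemma~\ref{nestedschema2} packages away. Your flagged boundary cases (empty conjunctions/disjunctions when $N\leq 0$ or $N>\sum_i w_i$) do check out under the paper's conventions, so nothing is missing.
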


From this proposition, Propositions~\ref{th:th1} and~\ref{prop:se} of
this paper, and Theorem~1 from~\cite{fer05b} it follows that our concept of
an aggregate captures the concept of weight constraints defined
in~\cite{nie00} when all weights are nonnegative. It also captures the 
absence of the anti-chain property of its stable models: for instance,
$$
p\ar \{\neg p\}\leq 0
$$
has stable models $\emptyset$ and $\{p\}$ in both formalisms.

When we consider negative weights, however, such correspondence doesn't hold.
For instance,
\beq2
p\ar 0\leq\{p=2,p=-1\},
\eeq2{badex1}
according to~\cite{nie00}, has no stable models, while
\beq2
p\ar sum\langle\{p=2,p=-1\}\rangle\geq 0
\eeq2{badex2}
has stable model $\emptyset$. An explanation of this difference can be seen
in the pre-processing proposed by~\cite{nie00} that eliminates negative
weights. For us, weight constraint $0\leq\{p=2,p=-1\}$, and the result
$1\leq\{p=2,\neg p=1\}$ of eliminating its negative weight, are semantically
different.\footnote{The fact that the process of eliminating negative weights
is somehow unintuitive was already mentioned in~\cite{fer05b} with the same
example proposed in this section.}
Surprisingly, under the semantics of~\cite{nie00}, $0\leq\{p=2,p=-1\}$ is
different from $0\leq\{p=1\}$. In fact,
\beq2
p\ar 0\leq\{p=1\}
\eeq2{badex3}
has stable model $\emptyset$, the same of~(\ref{badex2}), while~(\ref{badex1})
has none. Notice that
summing weights that are all positive or all negative preserves stable models
under both semantics.

The preliminary step of removing negative weights can be seen as a way of
making weight constraints either monotone or antimonotone. This keeps the
problem of the existence of a stable model in class NP, while we have seen
in Section~\ref{sec:aggr-compl} that, under our semantics, even simple
aggregates with the same intuitive meaning of $0\leq \{p=1,q=-1\}$ bring the
same problem to class $\Sigma_2^P$.

\subsubsection{PDB-aggregates}

A {\em PDB-aggregate} is an expression of the form~(\ref{aggregate}), where
$F_1,\dots,F_n$ are literals.
A {\em program with PDB-aggregates} is a set of rules of the form
$$
a\ar A_1\wedge\cdots\wedge A_m,
$$
where $m\geq 0$, $a$ is an atom and
$A_1,\dots,A_m$ are PDB-aggregates.

As in the case of programs with weight constraints, a program with
PDB-aggregates is a generalization of a traditional program, by identifying
each literal $l$ in the bodies of traditional programs by aggregate
$sum\langle \{l=1\}\rangle\geq 1$.

The semantics of~\cite{pel03} for programs with PDB-aggregates
is based on a procedure that transforms programs with such aggregates into
traditional programs.\footnote{A semantics
for such aggregates was proposed in~\cite{den01}, based on
the approximation theory~\cite{den02}. But the first characterization of
PDB-aggregates in terms of stable models is from~\cite{pel03}.~\cite{son07}
independently proposed a similar semantics.}
The procedure can be seen consisting of two parts. The first one
essentially consists in rewriting each aggregate as a nested
expression.\footnote{\cite{pel03} doesn't explicitly mention nested
expressions.} The second part ``unfolds'' each rule into a strongly equivalent
set of traditional rules. For our comparisons, only the first part is needed:
each PDB-aggregate $A$ of the form
$$
op\langle \{l_1=w_1,\dots, l_n=w_n\}\rangle\prec N
$$
is replaced by the following
nested expression $A_{tr}$
$$
\bigvee_{I_1,I_2: I_1\subseteq I_2\subseteq \{1,\dots, n\}
\text{ and for all $I$
such that $I_1\subseteq I\subseteq I_2$,
$op(W_I)\prec N$}} G_{(I_1,I_2)}
$$
where $W_I$ stands for the multiset $\{w_i: i\in I\}$, and
$G_{(I_1,I_2)}$ stands for
$$
\bigwedge_{i\in I_1} l_i,
\bigwedge_{i\in \{1,\dots,n\}\setminus I_2} \o {l_i}.
$$

For instance, for the PDB-aggregate $A=sum\langle \{p=-1,q=1\}\rangle \geq 0$,
if we take $F_1=p$, $F_2=q$ then the pairs $(I_1,I_2)$ that ``contribute''
to the disjunction in $A_{tr}$ are
$$
\begin{array}c
(\emptyset,\emptyset)\qquad (\{2\},\{2\})\qquad (\{1,2\},\{1,2\})\qquad 
(\emptyset,\{2\})\qquad (\{2\},\{1,2\}).
\end{array}
$$
The corresponding nested expressions $G_{(I_1,I_2)}$ are
$$
\begin{array}c
\neg p\wedge \neg q\qquad q\wedge\neg p\qquad p\wedge q\qquad
\neg p\qquad q.
\end{array}
$$
It can be shown, using strong equivalent transformations
(see Proposition~\ref{prop:se}) that
the disjunction of such nested expressions can be rewritten as $\neg p\vee q$.

In case of monotone and antimonotone PDB-aggregates and in the absence of
negation as failure, the semantics of Pelov {\it et al.} is equivalent to
ours.

\begin{proposition}
\label{prop:pelov}
For any monotone or antimonotone PDB-aggregates $A$ of the
form~(\ref{aggregate}) where $F_1,\dots, F_n$ are atoms,
$A_{tr}$ is strongly equivalent to~(\ref{mainaggr}).
\end{proposition}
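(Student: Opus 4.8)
The plan is to go through Proposition~\ref{monotone} and then reduce everything to a computation in the logic of here-and-there. By Proposition~\ref{monotone}, formula~(\ref{mainaggr}) is strongly equivalent to~(\ref{monaggr}) when $A$ is monotone and to~(\ref{antimonaggr}) when $A$ is antimonotone, so by transitivity of strong equivalence it is enough to show that $A_{tr}$ is strongly equivalent to~(\ref{monaggr}) in the monotone case and to~(\ref{antimonaggr}) in the antimonotone case; and by Proposition~\ref{prop:se} it suffices to check equivalence in the logic of here-and-there. I will use that, since the $F_i$ are atoms, for an HT-interpretation $(X,Y)$ one has $(X,Y)\models F_i$ iff $i\in J_X$ and $(X,Y)\models\neg F_i$ iff $i\notin J_Y$, where $J_X=\{i:F_i\in X\}$ and $J_Y=\{i:F_i\in Y\}$ (so $J_X\subseteq J_Y$), with $\wedge$ and $\vee$ read componentwise. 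Write $W_I$ for the multiset $\{w_i:i\in I\}$, $\overline I$ for $\{1,\dots,n\}\setminus I$, and call $I$ \emph{good} if $op(W_I)\prec N$ and \emph{bad} otherwise; monotonicity says the good index sets form an up-set, antimonotonicity that they form a down-set.

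First I would simplify $A_{tr}$. In the monotone case, up-closure of the good sets makes the condition ``$op(W_I)\prec N$ for all $I$ with $I_1\subseteq I\subseteq I_2$'' that selects the disjuncts of $A_{tr}$ equivalent to ``$I_1$ is good''. Grouping the disjuncts by their first component: for a fixed good $I_1$ the pair $(I_1,\{1,\dots,n\})$ contributes the disjunct $\bigwedge_{i\in I_1}F_i$, and every disjunct with that same $I_1$ intuitionistically (hence in here-and-there) entails it; since a disjunction is strongly equivalent to any disjunct that all the other disjuncts HT-entail, and since strongly equivalent subformulas may be substituted, $A_{tr}$ is strongly equivalent to $\bigvee_{I_1 \text{ good}}\bigwedge_{i\in I_1}F_i$. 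Dually, in the antimonotone case the selecting condition becomes ``$I_2$ is good'', for a fixed good $I_2$ the pair $(\emptyset,I_2)$ contributes the disjunct $\bigwedge_{i\in\overline{I_2}}\neg F_i$, every disjunct with that $I_2$ entails it, and hence $A_{tr}$ is strongly equivalent to $\bigvee_{I_2 \text{ good}}\bigwedge_{i\in\overline{I_2}}\neg F_i$.

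Finally I would compare HT-models on the two sides. Monotone case: $(X,Y)\models\bigvee_{I_1 \text{ good}}\bigwedge_{i\in I_1}F_i$ iff some good $I_1$ satisfies $I_1\subseteq J_X$, which (take $I_1=J_X$ and use up-closure) amounts to ``$J_X$ is good''; and $(X,Y)$ satisfies~(\ref{monaggr}), namely $\bigwedge_{I \text{ bad}}\bigvee_{i\in\overline I}F_i$, iff $J_X\not\subseteq I$ for every bad $I$, i.e.\ every $I\supseteq J_X$ is good, which again says ``$J_X$ is good''. Antimonotone case: $(X,Y)\models\bigvee_{I_2 \text{ good}}\bigwedge_{i\in\overline{I_2}}\neg F_i$ iff some good $I_2$ satisfies $J_Y\subseteq I_2$, which (take $I_2=J_Y$ and use down-closure) amounts to ``$J_Y$ is good''; and $(X,Y)$ satisfies~(\ref{antimonaggr}), namely $\bigwedge_{I \text{ bad}}\neg\bigwedge_{i\in I}F_i$, iff no bad $I$ is contained in $J_Y$, i.e.\ every $I\subseteq J_Y$ is good, which again says ``$J_Y$ is good''. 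In each case the two formulas have exactly the same HT-models, so they are strongly equivalent, which completes the proof.

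The multiset-versus-index-set bookkeeping, and keeping track that the monotone formulas depend only on $J_X$ while the antimonotone ones depend only on $J_Y$, require some care, but the only genuinely substantive step is the (anti)monotone collapse of $A_{tr}$ in the second paragraph; after that, the HT comparisons are mechanical.
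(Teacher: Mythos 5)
Your proof is correct. It shares the paper's key intermediate step---collapsing $A_{tr}$ under (anti)monotonicity to $\bigvee_{I \,:\, op(W_I)\prec N} G_{(I,\{1,\dots,n\})}$, respectively $\bigvee_{I \,:\, op(W_I)\prec N} G_{(\emptyset,I)}$, which is exactly the content of the paper's Lemma~\ref{lemma-pelov3}, and you re-derive it by the same idea (each disjunct with a fixed good first, resp.\ second, component HT-entails the one with trivial other component)---and it likewise invokes Proposition~\ref{monotone} to put~(\ref{mainaggr}) into the form~(\ref{monaggr}) or~(\ref{antimonaggr}). Where you diverge is the final step: the paper first establishes \emph{classical} equivalence of $A_{tr}$ and~(\ref{mainaggr}) (its Lemma~\ref{lemma-pelov2}, which holds for arbitrary $F_i$), and then upgrades this to strong equivalence via the general principle that classically equivalent AND--OR combinations of atoms (or, after substitution, of negated atoms) are strongly equivalent (Lemmas~\ref{nestedschema1} and~\ref{nestedschema2}); you instead compute the HT-models of both normal forms directly and check that each is characterized by ``$J_X$ is good'' in the monotone case and ``$J_Y$ is good'' in the antimonotone case. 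Your route is more self-contained and makes the role of the $X$- versus $Y$-component of the HT-interpretation explicit, at the cost of an extra hands-on calculation; the paper's route reuses general-purpose lemmas (and the classical-equivalence lemma, which it needs anyway for the surrounding discussion) and so requires no new HT reasoning at the final step. Both arguments are sound.
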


The claim above is generally not true when either the aggregates are
not monotone or antimonotone, or when some formula in the aggregate is
a negative literal. Relatively to aggregates that are neither monotone nor
antimonotone, the semantics of~\cite{pel03} seems to have the same
unintuitive behaviour of~\cite{nie00}: for instance, according
to~\cite{pel03},~(\ref{badex2}) has no stable models while
$$
p\ar sum\langle\{p=1\}\rangle\geq 0
$$
has stable model $\{p\}$.

To illustrate the problem with negative literals, consider the following $\Pi$:
\beq2
\begin{array}l
p\ar sum\langle\{q=1\}\rangle< 1\\
q\ar \neg p
\end{array}
\eeq2{badpdb3}
and $\Pi'$:
\beq2
\begin{array}l
p\ar sum\langle\{\neg p=1\}\rangle< 1\\
q\ar \neg p
\end{array}
\eeq2{badpdb4}
Intuitively, the two programs should have the same stable models. Indeed,
the operation of replacing $q$ with $\neg p$ in the first rule of $\Pi$
should not affect the stable models since the second rule ``defines'' $q$ as
$\neg p$: it is the only rule with $q$ in the head. However,
under the semantics of~\cite{pel03}, $\Pi$ has stable model $\{p\}$ only and
$\Pi'$ has stable model $\{q\}$ also. Under our semantics, both~(\ref{badpdb3})
and~(\ref{badpdb4}) have stable models $\{p\}$ and $\{q\}$.

Note that already the first rule of~(\ref{badpdb4}) has different stable models
under the two semantics. Under ours, they are $\emptyset$ and $\{p\}$. 
According to~\cite{pel03}, only the empty set is a stable model; it couldn't
have both stable models because stable models as defined in~\cite{pel03}
have the anti-chain property.

\subsubsection{FLP-aggregates}

An {\em FLP-aggregate} is an expression of the form~(\ref{aggregate})
where each of $F_1,\dots,F_n$ is a conjunction of literals.
A {\em program with FLP-aggregates} is a set of rules of the form
\beq2
a_1 \vee\cdots\vee a_n\ar
A_1\wedge\cdots\wedge A_m\wedge \neg A_{m+1}\wedge\cdots\wedge \neg A_p
\eeq2{ruleaggr}
where $n\geq 0, 0\leq m\leq p$, $a_1,\dots,a_n$ are atoms and
$A_1,\dots,A_p$ are FLP-aggregates.

A program with FLP-aggregates
is a generalization of a disjunctive program, by identifying each atom $a$
in the bodies of disjunctive rules by aggregate
$sum\langle \{a=1\}\rangle\geq 1$.

The semantics of~\cite{fab04} defines when a  set of atoms
is a stable model for a program with FLP-aggregates. The definition
of satisfaction of an aggregate is identical to ours. The reduct, however,
is computed differently.
The {\sl reduct} $\Pi^{\un{\un X}}$ of a program $\Pi$ with
FLP-aggregates relative to a  set $X$ of atoms
consists of the rules of the form~(\ref{ruleaggr}) such that $X$ satisfies
its body. Set $X$ is a stable model for $\Pi$ if $X$ is a minimal set
satisfying $\Pi^X$.

For instance, let $\Pi$ be the FLP-program
$$
p\ar sum\langle\{p=2\}\rangle\geq 1.
$$
The only stable model of $\Pi$ is the empty set.
Indeed, since the empty set doesn't satisfy the aggregate,
$\Pi^{\un{\un \emptyset}}=\emptyset$, which has $\emptyset$ as
the unique minimal model;
we can conclude that $\emptyset$ is a stable model of $\Pi$.
On the other hand, $\Pi^{\un{\un {\{p\}}}}=\Pi$ because
$\{p\}$ satisfies the aggregate
in $\Pi$. Since $\emptyset\models \Pi$, $\{p\}$ is not a minimal model of
$\Pi^{\un{\un {\{p\}}}}$ and then it is not a stable model of
$\Pi$.

This definition of a reduct is different from all other definitions of a reduct
described in this paper (and also from many other definitions),
in the sense that it may leave negation
$\neg$ in the body of a rule. For instance, the reduct of $a\ar \neg b$
relative to $\{a\}$ is according to those definitions the fact $a$.
In the theory of FLP-aggregates, the reduct doesn't modify the rule.
On the other hand, this definition of a stable model is equivalent to
the definition of a stable model in the sense of~\cite{gel91b} (and successive
definitions) when applied to disjunctive programs.

Next proposition shows a relationship between our concept of an aggregate and
FLP-aggregates.
An FLP-program is {\em positive} if, in each formula~(\ref{ruleaggr}), $p=m$.

Next proposition shows that our semantics of aggregates is 
essentially an extension of the

\begin{proposition}
\label{aggrsound}
The stable models of a positive FLP-program under our semantics are
identical to its stable models in the sense of~\cite{fab04}.
\end{proposition}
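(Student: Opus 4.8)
The plan is to reduce the statement to a comparison of reducts, rule by rule. Fix a positive FLP-program $\Pi$ and recall that, under our semantics, $\Pi$ is understood as the propositional theory obtained by replacing each positive FLP-aggregate $A$ of the form~(\ref{aggregate}) by the corresponding formula~(\ref{mainaggr}); call this theory $\Pi'$. Since the definition of satisfaction of an aggregate is literally the same in both formalisms, Proposition~\ref{propaggregate}(a) gives $X\models A$ iff $X\models G$ for the corresponding $G$, hence $X\models \Pi$ iff $X\models\Pi'$ for every set $X$ of atoms; so the two formalisms have the same \emph{classical} models, and it remains to show the minimality condition picks out the same sets. For a fixed $X$, the FLP-stable-model test asks whether $X$ is a minimal model of $\Pi^{\un{\un X}}$, the set of those rules~(\ref{ruleaggr}) whose body $X$ satisfies; our test asks whether $X$ is a minimal model of $(\Pi')^X$. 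So the whole proposition comes down to: for every $Y\subseteq X$, $Y\models(\Pi')^X$ iff $Y\models\Pi^{\un{\un X}}$ and $X\models\Pi$.

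The main work is a local lemma about a single rule. Let $r$ be a rule~(\ref{ruleaggr}) with $p=m$, i.e.\ $A_1\wedge\cdots\wedge A_m\implies a_1\vee\cdots\vee a_n$ (no $\neg$ in the body, since $\Pi$ is positive); let $r'$ be its propositional form, obtained by expanding each $A_k$ into the conjunction of implications~(\ref{mainaggr}). I want to show, for $Y\subseteq X$ with $X\models\Pi$: $Y\models (r')^X$ iff either $X\not\models\text{body}(r)$, or $X\models\text{body}(r)$ and $Y\models a_1\vee\cdots\vee a_n$ (this is exactly ``$Y$ satisfies $r$ when $r\in\Pi^{\un{\un X}}$, and is vacuously true when $r\notin\Pi^{\un{\un X}}$''). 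The direction I'd argue carefully is: \emph{if $X\models\text{body}(r)$ then $Y\models(r')^X$ iff $Y\models a_1^X\vee\cdots\vee a_n^X$.} Here Proposition~\ref{propaggregate}(b) is the key tool — it says $A_k^X$ is classically equivalent to $G_k^X$ where $G_k$ is the~(\ref{mainaggr})-expansion of $A_k$ — so computing $(r')^X$ amounts to reasoning about $(A_1^X\wedge\cdots\wedge A_m^X)\implies(a_1\vee\cdots\vee a_n)$ modulo the reduct's treatment of $\implies$. Since $X\models A_k$ for each $k$, each $A_k^X$ is a ``live'' subformula (not collapsed to $\bot$), and one shows $X\models A_k^X$; the crucial extra point is that $Y\models A_k^X$ as well, whenever $Y\subseteq X$ and $Y\models A_k$ — but in fact for a \emph{positive} aggregate appearing in the antecedent we need that $A_k^X$ is satisfied by \emph{every} $Y\subseteq X$, which is where the hypothesis $p=m$ (positivity) and the structure of monotone-vs-general aggregates inside~(\ref{mainaggr}) must be used. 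Concretely, $A_k^X$ is a conjunction of implications $(\bigwedge_{i\in I}F_i^X)\implies(\bigvee_{i\in\overline I}F_i^X)$ ranging over $I$ with $op(\{w_i:i\in I\})\not\prec N$ among those $I$ with $I\subseteq\{i:X\models F_i\}$; since $X\models A_k$, the full index set $\{i: X\models F_i\}$ is not among these "bad" $I$, and one checks that for every such bad $I$ and every $Y\subseteq X$, if $Y$ satisfies all $F_i^X$, $i\in I$, then $Y$ satisfies some $F_j^X$, $j\in\overline I$ — because otherwise $\{i:Y\models F_i\}$ would be a bad index set contradicting $X\models A_k$ together with $Y\subseteq X$. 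Wait — this needs the $F_i$ to be conjunctions of literals and the monotonicity of the relevant combinatorics; this is exactly the content to spell out.

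Having the single-rule lemma, the proposition follows by conjoining over all $r\in\Pi$: $Y\models(\Pi')^X=\bigwedge_r (r')^X$ iff for every $r$ with $X\models\text{body}(r)$ we have $Y\models\text{head}(r)$ (with the reduct applied), iff $Y$ satisfies every rule of $\Pi^{\un{\un X}}$ — i.e.\ $Y\models\Pi^{\un{\un X}}$ — provided $X\models\Pi$, which we assumed; and if $X\not\models\Pi$ then $X\not\models\Pi'$, so $X\not\models(\Pi')^X$ by the fact that $X\models\Gamma^X$ iff $X\models\Gamma$, and neither formalism has $X$ as a stable model. Therefore $X$ is a minimal model of $(\Pi')^X$ iff it is a minimal model of $\Pi^{\un{\un X}}$, which is the claim. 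The hard part will be the single-rule lemma — specifically, verifying that each positive FLP-aggregate's expansion~(\ref{mainaggr}), once reduced relative to $X\models A$, behaves like $\top$ against all subsets $Y\subseteq X$, so that only the head of the rule survives in the reduct exactly as in the FLP reduct; this is the step where the restriction to \emph{positive} programs is genuinely needed (negated aggregates in the body would break the ``survives as $\top$'' behaviour, just as negated atoms do), and it is the combinatorial heart of the argument.
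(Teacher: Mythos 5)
Your overall skeleton (dispose of the case $X\not\models\Pi$, then compare the two reducts rule by rule over subsets $Y\subseteq X$) matches the paper's, but the single-rule lemma you set up is wrong, and the error traces back to a misreading of the FLP reduct. The reduct $\Pi^{\un{\un X}}$ consists of the rules whose bodies $X$ satisfies, kept \emph{intact, bodies included}; it does not discharge a satisfied body the way the Gelfond--Lifschitz reduct discharges a satisfied negative literal. So for a retained rule $r$ the condition ``$Y$ satisfies $r$'' is ``$Y\models A_1\wedge\cdots\wedge A_m$ implies $Y\models a_1\vee\cdots\vee a_n$,'' not ``$Y\models a_1\vee\cdots\vee a_n$.'' Consequently the statement you propose to prove --- that when $X\models\text{body}(r)$, $Y\models(r')^X$ iff $Y\models a_1^X\vee\cdots\vee a_n^X$ --- is both unnecessary and false. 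A counterexample is the paper's own program $p\ar sum\langle\{p=2\}\rangle\geq 1$ with $X=\{p\}$ and $Y=\emptyset$: here $Y\not\models A^X$ (the sum over the reduced aggregate is $0$, not $\geq 1$), so $Y$ satisfies $(r')^X$ vacuously, yet $Y\not\models p$. If your lemma were true, $(r')^X$ would be equivalent to $p$, $\{p\}$ would be a stable model under our semantics, and the proposition would be \emph{refuted}, since $\{p\}$ is not an FLP-stable model of that program.

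The auxiliary claim you call ``the combinatorial heart'' --- that $A_k^X$ is satisfied by \emph{every} $Y\subseteq X$ once $X\models A_k$ --- fails for the same reason, and your proposed justification has a gap: from ``$\{i:Y\models F_i^X\}$ equals a bad index set $I$'' you cannot derive a contradiction with $X\models A_k$, because $X\models A_k$ only says that $I_X=\{i:X\models F_i\}$ is good; without monotonicity of $op\langle\cdot\rangle\prec N$ nothing forces subsets of $I_X$ to be good. What actually has to be proved --- and is what the paper proves as Lemma~\ref{simpleaggr}, via Lemma~\ref{noimplic} --- is the equivalence $Y\models A^X$ iff $Y\models A$ for $Y\subseteq X$ when $X\models A$; this holds because each $F_i$ in an FLP-aggregate is a negation-free conjunction, so $Y\models F_i^X$ iff $Y\models F_i$, and it is exactly what is needed to match a reduct that retains bodies. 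With that corrected lemma your rule-by-rule assembly does go through and coincides with the paper's argument.
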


The proposition doesn't apply to arbitrary FLP-aggregates as negation
has different meanings in the two semantics. In case of~\cite{fab04},
$\neg (op\langle S\rangle \prec N)$ is essentially the same as
$op\langle S\rangle \not\prec N$, while we have seen, in
Section~\ref{sec:pt-pf}, that this fact doesn't always hold in our semantics.
The difference in meaning can be seen in the following example. Program
\beq2
\begin{split}
p\ar& \neg q\\
q\ar& sum\langle \{p=1\}\rangle \leq 0\\
\end{split}
\eeq2{flp-problem}
has two stable models $\{p\}$ and $\{q\}$ according to both semantics.
However, if we replace $q$ in the first rule with the body of the second
($q$ is ``defined'' as $sum\langle \{p=1\}\rangle \leq 0$ by the second rule),
we get program
\beq2
\begin{split}
p\ar& \neg (sum\langle \{p=1\}\rangle \leq 0)\\
q\ar& sum\langle \{p=1\}\rangle \leq 0,\\
\end{split}
\eeq2{flp-problem2}
which, according to~\cite{fab04}, has only stable model $\{q\}$. We find it
unintuitive.

It is the first rule of~(\ref{flp-problem2}) that has a different meaning in
the two semantics. The rule alone has different stable models:
according to~\cite{fab04}, its only stable models is $\emptyset$.
Under our semantics, the stable models are $\emptyset$ and $\{p\}$. As they
don't have the anti-chain property, there is no program with FLP-aggregates
that has such stable models under~\cite{fab04}.

As a program with FLP-aggregate can be
easily rewritten as a positive program with FLP-aggregate, our definition
of an aggregate essentially generalizes the one of~\cite{fab04}.

\section{Proofs}\label{sec:proofs}

\subsection{Proofs of Propositions~\ref{lemma:mainlemma} and~\ref{pt-mainth}}

\begin{lemma}\label{conjdisj}
For any formulas $F_1,\dots, F_n$ $(n\geq 0)$, any set $X$ of atoms, and
any connective $\otimes\in\{\vee,\wedge\}$,
$(F_1\otimes\cdots\otimes F_n)^X$ is classically equivalent to
$F_1^X\otimes\cdots\otimes F_n^X$.
\end{lemma}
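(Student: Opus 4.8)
The plan is to argue by induction on $n$, after recording one elementary observation about the reduct.

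\emph{Observation.} For any formula $F$ and any set $X$ of atoms, if $X\not\models F$ then $F^X=\bot$. This is immediate from the definition of the reduct: when $X\not\models F$, the formula $F$ is itself a maximal subformula of $F$ not satisfied by $X$, and so gets replaced by $\bot$ (equivalently, it follows by a one-line structural induction from the three defining clauses for $F^X$). I will also use freely that $\wedge$ and $\vee$ are associative in classical logic, that $\bot$ is an identity element for $\vee$, and that $\bot$ is an annihilator for $\wedge$; in particular $\bot\vee\cdots\vee\bot$, and any conjunction having some conjunct equal to $\bot$, are classically equivalent to $\bot$. Fix, say, the left-associated reading of $F_1\otimes\cdots\otimes F_n$ on both sides of the claimed equivalence; the statement for an arbitrary parenthesization then follows by classical associativity. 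Note that the internal structure of the $F_i$ (in particular whether they contain implications) plays no role.

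\emph{Base cases.} For $n=0$ and $\otimes=\wedge$, the empty conjunction is $\top$, i.e.\ $\bot\implies\bot$; since $X\models\bot\implies\bot$, its reduct is $\bot^X\implies\bot^X=\bot\implies\bot=\top$, the empty conjunction of reducts. For $n=0$ and $\otimes=\vee$, the empty disjunction is $\bot$ and $\bot^X=\bot$. For $n=1$ both sides are literally $F_1^X$.

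\emph{Inductive step ($n\geq 2$).} Write $G=F_1\otimes\cdots\otimes F_{n-1}$, so $F_1\otimes\cdots\otimes F_n=G\otimes F_n$. If $X\models G\otimes F_n$, then by the definition of the reduct $(G\otimes F_n)^X=G^X\otimes F_n^X$, and by the induction hypothesis $G^X$ is classically equivalent to $F_1^X\otimes\cdots\otimes F_{n-1}^X$; hence $(G\otimes F_n)^X$ is classically equivalent to $(F_1^X\otimes\cdots\otimes F_{n-1}^X)\otimes F_n^X=F_1^X\otimes\cdots\otimes F_n^X$. If $X\not\models G\otimes F_n$, then $(G\otimes F_n)^X=\bot$, and it suffices to check that $F_1^X\otimes\cdots\otimes F_n^X$ is classically equivalent to $\bot$. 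For $\otimes=\wedge$: from $X\not\models G\wedge F_n$ we get $X\not\models F_n$ or $X\not\models G$; in the first case the Observation gives $F_n^X=\bot$, so the conjunction $F_1^X\wedge\cdots\wedge F_n^X$ is classically equivalent to $\bot$; in the second case the Observation gives $G^X=\bot$, and the induction hypothesis makes $F_1^X\wedge\cdots\wedge F_{n-1}^X$ classically equivalent to $\bot$, so the full conjunction is too. For $\otimes=\vee$: from $X\not\models G\vee F_n$ we get both $X\not\models G$ and $X\not\models F_n$, so the Observation gives $G^X=\bot$ and $F_n^X=\bot$; by the induction hypothesis $F_1^X\vee\cdots\vee F_{n-1}^X$ is classically equivalent to $\bot$, hence so is $(F_1^X\vee\cdots\vee F_{n-1}^X)\vee F_n^X=F_1^X\vee\cdots\vee F_n^X$. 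In both cases the two sides are classically equivalent to $\bot$, which closes the induction.

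There is no genuine obstacle here; the only points needing a little care are that the claimed identity is classical equivalence rather than syntactic equality — which is precisely why the $\bot$-absorption facts are invoked in the second case — and that in the $\otimes=\wedge$, ``$X\not\models G$'' subcase one must appeal to the induction hypothesis, not to the Observation applied to the individual $F_i$, since some $F_i$ with $i<n$ may be satisfied by $X$ even when $G$ is not.
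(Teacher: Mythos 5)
Your proof is correct and follows essentially the same route as the paper's: a case split on whether $X$ satisfies the formula, using that the reduct distributes over the connective in the satisfied case and that an unsatisfied subformula's reduct is $\bot$ (hence absorbs the conjunction, resp.\ is an identity for the disjunction) in the other. The only difference is presentational — you make explicit the induction on $n$ and the $n=0,1$ base cases that the paper leaves implicit in its two-case argument.
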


\begin{proof}
{\bf Case 1}: $X\models F_1\wedge\cdots \wedge F_n$.
Then, by the definition of reduct,
$(F_1\wedge\cdots \wedge F_n)^X=F_1^X\wedge \cdots\wedge F_2^X$.
{\bf Case 2}: $X\not\models F_1\wedge\cdots \wedge F_n$. Then
$(F_1\otimes\cdots\otimes F_n)^X=\bot$; moreover,
one of $F_1,\dots, F_n$ is not satisfied by $X$, so that one of
$F_1^X,\dots, F_n^X$ is $\bot$. The case of disjunction is similar.
\end{proof}

\noindent{\bf Proposition~\ref{lemma:mainlemma}.}
{\it
For any formula $F$ and any HT-interpretation $(X,Y)$, $(X,Y)\models F$ iff
$X\models F^Y$.
}

\begin{proof}
It is sufficient to consider the case when $\Gamma$ is a singleton $\{F\}$,
where $F$ contains only connectives $\wedge$, $\vee$, $\implies$ and $\bot$.
The proof is by structural induction on $F$.
\begin{itemize}
\item
$F$ is $\bot$. $X\not\models \bot$ and $\tuple{X,Y}\not\models \bot$.
\item
$F$ is an atom $a$. $X\models a^Y$ iff $Y\models a$ and $X\models a$. Since
$X\subseteq Y$, this means iff $X\models a$, which is the condition
for which $\tuple{X,Y}\models a$.
\item
$F$ has the form $G\wedge H$.
$X\models (G\wedge H)^Y$ iff $X\models G^Y\wedge H^Y$ by Lemma~\ref{conjdisj},
and then iff $X\models G^Y$ and $X\models H^Y$.
This is equivalent, by induction hypothesis, to say that
$\tuple{X,Y}\models G$ and $\tuple{X,Y}\models H$, and then that
$\tuple{X,Y}\models G\wedge H$.
\item
The proof for disjunction is similar to the proof for conjunction.
\item
$F$ has the form $G\implies H$.
$X\models (G\implies H)^Y$ iff $X\models G^Y\implies H^Y$ and
$Y\models G\implies H$, and then iff
$$
\text{$X\models G^Y$ implies $X\models H^Y$, and $Y\models G\implies H$}.
$$
This is equivalent, by the induction hypothesis, to
$$
\text{$\tuple{X,Y}\models G$ implies $\tuple{X,Y}\models H$,
and $Y\models G\implies H$},
$$
which is the definition of $\tuple{X,Y}\models G\implies H$.
\end{itemize}
\end{proof}

\noindent{\bf Proposition~\ref{pt-mainth}.}
{\it
For any theory, its models in the sense of equilibrium logic are identical
to its stable models.
}

\begin{proof}
A  set $Y$ of atoms is an equilibrium model of $\Gamma$ iff
$$
\text{$\tuple{Y,Y}\models \Gamma$ and, for all proper subsets $X$ of
$Y$, $\tuple{X,Y}\not \models \Gamma$.}
$$
In view of Proposition~\ref{lemma:mainlemma}, this is equivalent to the
condition
$$
\text{$Y\models \Gamma^Y$ and, for all proper subsets $X$ of
$Y$, $X\not \models \Gamma^Y$.}
$$
which means that $Y$ is a stable model of $\Gamma$.
\end{proof}

\subsection{Proof of Propositions~\ref{pt-prop1} and~\ref{th:th1}}

We first need the recursive definition of reduct for programs with
nested expressions from~\cite{lif99d}.
The {\em reduct} $F^{\un X}$ of a nested expression $F$
relative to a  set $X$ of atoms, as follows:
\begin{itemize}
\item
$a^{\un X}=a$, $\bot^{\un X}=\bot$ and
$\top^{\un X}=\top$,
\item
$(F\wedge G)^{\un X}=F^{\un X} \wedge G^{\un X}$ and
$(F\vee G)^{\un X}=F^{\un X} \vee G^{\un X}$,
\item
$(\neg F)^{\un X}=
\begin{cases}
\bot\ , & \text{if $X \models F$}, \cr
\top\ , & \text{otherwise,}\hfill \end{cases}$
\end{itemize}
Then the reduct $(F\ar G)^{\un X}$ of a rule $F\ar G$ with with
nested expression is defined as $F^{\un X}\ar G^{\un X}$, and
the reduct $\Pi^{\un X}$ of a program with nested expressions
as the union of the reduct of its rules.

\begin{lemma}
\label{prop1lemma}
The reduct $F^X$ of a nested expression $F$ is equivalent, in the sense of
classical logic, to the nested expression obtained from $F^{\un X}$
by replacing all atoms that do not belong to $X$ by $\bot$.
\end{lemma}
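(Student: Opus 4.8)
The statement to prove is Lemma~\ref{prop1lemma}: the reduct $F^X$ of a nested expression $F$ is classically equivalent to the nested expression obtained from $F^{\un X}$ by replacing every atom not in $X$ by $\bot$. The natural approach is structural induction on $F$, following the grammar of nested expressions (atoms, $\bot$, $\top$, conjunction, disjunction, and negation $\neg G$, where $\top$ and $\neg G$ are the abbreviations fixed in Section~\ref{sec:pt-def}). Let me write $\phi(H)$ for the operation "replace in $H$ each atom $\notin X$ by $\bot$"; the goal is $F^X \equiv_{\mathrm{cl}} \phi(F^{\un X})$.

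First I would dispatch the base cases. For $\bot$: $\bot^X = \bot$ and $\phi(\bot^{\un X}) = \phi(\bot) = \bot$. For $\top = \bot\implies\bot$: since $X\models\top$, the reduct $\top^X$ unwinds to $\bot^X\implies\bot^X = \bot\implies\bot = \top$, while $\top^{\un X} = \top$ and $\phi(\top) = \top$. For an atom $a$: if $a\in X$ then $a^X = a$ and $\phi(a^{\un X}) = \phi(a) = a$; if $a\notin X$ then $a^X = \bot$ and $\phi(a^{\un X}) = \phi(a) = \bot$ — matching in both cases.

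Next the inductive cases for the binary connectives. For $\otimes\in\{\wedge,\vee\}$, Lemma~\ref{conjdisj} gives $(G\otimes H)^X \equiv_{\mathrm{cl}} G^X\otimes H^X$. On the other side, $(G\otimes H)^{\un X} = G^{\un X}\otimes H^{\un X}$ by the definition of the 1999 reduct, and $\phi$ distributes over $\otimes$, so $\phi((G\otimes H)^{\un X}) = \phi(G^{\un X})\otimes\phi(H^{\un X})$. The induction hypothesis applied to $G$ and $H$ then closes this case. For negation $\neg G = G\implies\bot$: here I split on whether $X\models G$. If $X\models G$, then $X\not\models\neg G$, so $(\neg G)^X = \bot$; and $(\neg G)^{\un X} = \bot$, whence $\phi(\bot) = \bot$ — they agree. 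If $X\not\models G$, then $X\models\neg G$, so $(\neg G)^X = G^X\implies\bot^X = G^X\implies\bot$; since $X\not\models G$, by the classical-satisfaction fact ($X\models F^X$ iff $X\models F$, noted just after the reduct definition — or more simply since the maximal unsatisfied subformula $G$ is replaced wholesale) we get $G^X = \bot$, so $(\neg G)^X = \bot\implies\bot = \top$. On the other side $(\neg G)^{\un X} = \top$, so $\phi(\top) = \top$ — again they agree. (Note I only need $\neg$ here, not general implication, since $F$ is a nested expression.)

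**Main obstacle.** The only delicate point is the negation case: one must be careful that in $F^X$, because $\neg G = G\implies\bot$ is treated via the general binary-connective clause of the new reduct, an unsatisfied $G$ inside a satisfied $\neg G$ really does collapse so that $(\neg G)^X$ becomes $\top$ (not something still mentioning $G^X$). This uses precisely that when $X\not\models G$, $G^X = \bot$, which is immediate from the reduct definition (the whole of $G$ is a maximal subformula not satisfied by $X$, so it is replaced by $\bot$ — or even more directly, $G$ is itself not satisfied, so either $G$ is maximal-unsatisfied, giving $\bot$, or $G$ sits inside a larger unsatisfied subformula that gets replaced, but as a top-level argument of the satisfied $\implies$ it is maximal). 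Everything else is bookkeeping. I would present the induction cleanly by case, using Lemma~\ref{conjdisj} for the $\wedge,\vee$ steps and the explicit collapse-to-$\top$ computation for $\neg$.
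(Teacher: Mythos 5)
Your proof is correct and follows essentially the same route as the paper's: structural induction on the nested expression, using Lemma~\ref{conjdisj} for the $\wedge$/$\vee$ cases and the observation that $X\not\models G$ forces $G^X=\bot$ (hence $(\neg G)^X=\bot\implies\bot=\top$) for the negation case. Your treatment of the negation case is just a more explicit spelling-out of the paper's one-line computation $F^X=\neg\bot=\top$.
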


\begin{proof}
The proof is by structural induction on $F$.
\begin{itemize}
\item
When $F$ is $\bot$ or $\top$ then $F^X=F=F^{\un X}$.
\item
For an atom $a$, $a^{\un X}=a$. The claim is immediate.
\item
Let $F$ be a negation $\neg G$ .
If $X\models G$ then $F^X=\bot=F^{\un X}$; otherwise,
$F^X=\neg \bot=\top=F^{\un X}$.
\item
for $F=G\otimes H (\otimes\in\{\vee,\wedge\})$,
$F^{\un X}$ is $G^{\un X}\otimes H^{\un X}$,
and, by Lemma~\ref{conjdisj}, $F^X$ is equivalent to
$G^X\otimes H^X$. The claim now follows by the induction hypothesis.
\end{itemize}
\end{proof}

\noindent{\bf Proposition~\ref{pt-prop1}.}
{\it
For any program $\Pi$ with nested expressions and any set $X$ of atoms,
$\Pi^X$ is equivalent, in the sense of classical logic,
\begin{itemize}
\item
to $\bot$, if $X\not\models \Pi$, and
\item
to the program obtained from $\Pi^{\un X}$ by replacing all atoms
that do not belong to $X$ by $\bot$, otherwise.
\end{itemize}
}

\begin{proof}
If $X\not\models \Pi$ then clearly $\Pi^X$ contains $\bot$.
Otherwise, $\Pi^X$ consists of formulas $F^X\implies G^X$ for each rule
$G\ar F\in\Pi$, and consequently for each
rule $G^{\un X}\ar F^{\un X}\in\Pi^{\un X}$.
Since each $F$ and $G$ is a nested expression, the claim is immediate
by Lemma~\ref{prop1lemma}.
\end{proof}

\noindent{\bf Proposition~\ref{th:th1}.}
{\it
For any program $\Pi$ with nested expressions, the collection of stable
models of $\Pi$ according to our definition and according to~\cite{lif99d}
are identical.
}

\begin{proof}
If $X\not\models \Pi$ then clearly $\Pi^X$ contains $\bot$, and also
$X\not\models \Pi^{\un X}$ (a well-known property about programs
with nested expressions), so $X$ is not a stable model under either
definitions. Otherwise, by Corollary~\ref{cor1}, the two reducts are
satisfied by the same subsets of $X$. Then $X$ is a minimal set
satisfying $\Pi^X$ iff it is a minimal set satisfying $\Pi^{\un X}$,
and, by the definitions of a stable models $X$ is a stable model of $\Pi$
either for both definitions or for none of them.
\end{proof}

\subsection{Proofs of Propositions~\ref{prop:se}--\ref{prop:constraint}}

\noindent{\bf Proposition~\ref{prop:se}.}
{\it
For any two theories $\Gamma_1$ and $\Gamma_2$, the following conditions
are equivalent:
\begin{enumerate}
\item[(i)]
$\Gamma_1$ is strongly equivalent to $\Gamma_2$,
\item[(ii)]
$\Gamma_1$ is equivalent to $\Gamma_2$ in the logic of here-and-there, and
\item[(iii)]
for each set $X$ of atoms, $\Gamma_1^X$ is equivalent to $\Gamma_2^X$ in
classical logic.
\end{enumerate}
}

\begin{proof}
We will prove the equivalence between~(i) and~(ii) and between~(ii) and~(iii).
We start with the former.
Lemma~4 from~\cite{lif01} tells that, for any two theories, the
following conditions are equivalent:
\begin{enumerate}
\item[(a)]
for every theory $\Gamma$, theories $\Gamma_1\cup\Gamma$ and
$\Gamma_2\cup\Gamma$ have the same equilibrium models, and
\item[(b)]
$\Gamma_1$ is equivalent to $\Gamma_2$ in the logic of here-and-there.
\end{enumerate}
Condition~(b) is identical to~(ii). Condition~(a) can be rewritten,
by Proposition~\ref{pt-mainth}, as
\begin{enumerate}
\item[(a$'$)]
for every theory $\Gamma$, theories $\Gamma_1\cup\Gamma$ and
$\Gamma_2\cup\Gamma$ have the same stable models,
\end{enumerate}
which means that $\Gamma_1$ is strongly equivalent to $\Gamma_2$.

It remains to prove the equivalence between~(ii) and~(iii).
Theory $\Gamma_1$ is equivalent to $\Gamma_2$ in the logic of here-and-there
iff, for every  set $Y$ of atoms, the following condition holds:
$$\text{
for every $X\subseteq Y$,
$\tuple{X,Y}\models \Gamma_1$ iff $\tuple{X,Y}\models \Gamma_2$.
}$$
This condition is equivalent, by Proposition~\ref{lemma:mainlemma}, to
$$\text{
for every $X\subseteq Y$, $X\models \Gamma_1^Y$ iff $X\models \Gamma_2^Y$.
}$$
Since $\Gamma_1^Y$ and $\Gamma_2^Y$ contain atoms from $Y$ only (the
other atoms are replaced by $\bot$ in the reduct), this last condition
expresses equivalence between $\Gamma_1^Y$ and $\Gamma_2^Y$.
\end{proof}

\begin{lemma}\label{lemma:head}
For any theory $\Gamma$, let $S$ be a set of atoms that contains all head
atoms of $\Gamma$.
For any set $X$ of atoms, if $X\models\Gamma$ then $X\cap S\models \Gamma^X$.
\end{lemma}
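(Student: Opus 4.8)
The plan is to reduce the statement to a claim about a single formula and prove that claim by structural induction. Precisely, I will show: \emph{for every formula $F$, every set $X$ of atoms, and every set $S$ of atoms that contains every atom having a strictly positive occurrence in $F$, if $X\models F$ then $X\cap S\models F^X$.} Granting this, the lemma follows at once: if $X\models\Gamma$ then $X\models F$ for each $F\in\Gamma$, the head atoms of $F$ (being the atoms with a strictly positive occurrence in $F$) are all head atoms of $\Gamma$ and hence lie in $S$, so $X\cap S\models F^X$ for every $F\in\Gamma$, i.e.\ $X\cap S\models\Gamma^X$.

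For the induction, the base cases are immediate. If $F=\bot$ the hypothesis $X\models F$ is never met. If $F$ is an atom $a$, then $X\models a$ gives $a\in X$, and $a$ has a strictly positive occurrence in $F$, so $a\in S$; thus $a\in X\cap S$ and, since $a^X=a$, $X\cap S\models F^X$. For $F=G\wedge H$ with $X\models F$ we have $F^X=G^X\wedge H^X$; every atom with a strictly positive occurrence in $G$ or in $H$ has one in $F$, so $S$ works for both conjuncts, and the induction hypothesis applied to $G$ and to $H$ gives $X\cap S\models G^X$ and $X\cap S\models H^X$, hence $X\cap S\models F^X$. For $F=G\vee H$ with $X\models F$ we have $F^X=G^X\vee H^X$ and $X$ satisfies at least one disjunct, say $G$; its strictly positive occurrences are strictly positive in $F$, so the induction hypothesis gives $X\cap S\models G^X$, and a single disjunct suffices for $X\cap S\models F^X$.

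The only case needing a small remark is $F=G\implies H$. Here the atoms with a strictly positive occurrence in $F$ are exactly those in $H$ (the antecedent $G$ contributes none), so $S$ need only cover the head atoms of $H$. Since $X\models F$, the reduct is $G^X\implies H^X$. To see $X\cap S\models G^X\implies H^X$, assume $X\cap S\models G^X$. If $X\not\models G$ then by the recursive definition of the reduct $G^X=\bot$, contradicting $X\cap S\models G^X$; hence $X\models G$, so $X\models H$ because $X\models F$, and the induction hypothesis applied to $H$ yields $X\cap S\models H^X$. I expect this implication step — ruling out the antecedent-false subcase by observing $G^X=\bot$ — to be the only point requiring care; the rest is routine bookkeeping about which occurrences are strictly positive.
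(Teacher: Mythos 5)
Your proof is correct and follows essentially the same route as the paper's: reduce to a single formula, induct on its structure, and in the implication case use the fact that $X\cap S\models G^X$ forces $G^X\neq\bot$, hence $X\models G$ and then $X\models H$, so the induction hypothesis applies to the consequent. The only cosmetic difference is that you spell out the disjunction case and the bookkeeping about which occurrences are strictly positive, which the paper leaves implicit.
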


\begin{proof}
It is clearly sufficient to
prove the claim for $\Gamma$ that is a singleton $\{F\}$.
The proof is by induction on $F$.
\begin{itemize}
\item
If $F=\bot$ then $X\not\models F$, and the claim is trivial.
\item
For an atom $a$, if $X\models a$ then $a^X=a$, but also $a\in S$, so that
$X\cap S\models a^X$.
\item
If $X\models G\wedge H$ then $X\models G$ and $X\models H$. Consequently,
by induction hypothesis, $X\cap S\models G^X$ and
$X\cap S\models H^X$. It remains to notice that $(G\wedge H)^X=G^X\wedge H^X$.
\item
The case of disjunction is similar to the case of conjunction.
\item
If $X\models G\implies H$ then $(G\implies H)^X=G^X\implies H^X$. Assume that
$X\cap S\models G^X$. Consequently $G^X\not = \bot$ and then $X\models G$.
It follows that, since $X\models G\implies H$, $X\models H$.
Since $S$ contains all head atoms of $H$, the claim follows by the
induction hypothesis.
\end{itemize}
\end{proof}

\begin{lemma}\label{lemma:satsame}
For any theory $\Gamma$ and any set $X$ of atoms,
$X\models \Gamma^X$ iff $X\models\Gamma$.
\end{lemma}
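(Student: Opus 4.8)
The plan is to derive Lemma~\ref{lemma:satsame} from the immediately preceding Lemma~\ref{lemma:head} together with one elementary observation about the reduct, handling the two directions of the ``iff'' separately.

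For the direction ``$X\models\Gamma$ implies $X\models\Gamma^X$'', I would apply Lemma~\ref{lemma:head} taking $S$ to be the set of \emph{all} atoms of the signature. This $S$ trivially contains every head atom of $\Gamma$, so the lemma yields $X\cap S\models\Gamma^X$; since $X$ is itself a set of atoms of the signature we have $X\cap S=X$, and hence $X\models\Gamma^X$.

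For the converse I would argue by contraposition. Suppose $X\not\models\Gamma$; then there is a formula $F\in\Gamma$ with $X\not\models F$. Inspecting the recursive definition of the reduct shows that the reduct of any formula not satisfied by $X$ is $\bot$: this is explicit in the clauses for $\bot$ and for an atom, and in the clause ``$(F\otimes G)^X$ is $\bot$ otherwise'' for any binary connective $\otimes$. Hence $F^X=\bot\in\Gamma^X$, so $X\not\models\Gamma^X$.

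I do not expect any real obstacle: the only points needing a word of comment are that ``the set of all atoms'' is an admissible choice of $S$ in Lemma~\ref{lemma:head} and that $F^X=\bot$ whenever $X\not\models F$, both of which are immediate from the relevant definitions. (If one preferred to avoid invoking Lemma~\ref{lemma:head}, the statement can also be proved directly by structural induction on $F$, after reducing to the case $\Gamma=\{F\}$, with the implication case using the induction hypothesis on the antecedent and consequent; but the argument via Lemma~\ref{lemma:head} is shorter.)
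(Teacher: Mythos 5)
Your proof is correct, but it takes a different route from the paper's. The paper disposes of the lemma in one sentence: the reduct $\Gamma^X$ is obtained from $\Gamma$ by replacing certain subformulas that are false under $X$ with $\bot$, and since $\bot$ is also false under $X$, such a replacement cannot change the truth value of any formula as evaluated by $X$ itself --- a single observation that covers both directions of the ``iff'' simultaneously. You instead split the equivalence: for the forward direction you specialize Lemma~\ref{lemma:head} to $S$ equal to the whole set of atoms (legitimate, and free of circularity, since Lemma~\ref{lemma:head} is proved earlier by structural induction without appeal to the present lemma), and for the converse you observe that $F^X=\bot$ whenever $X\not\models F$, which is indeed immediate from every clause of the reduct definition. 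What the paper's argument buys is brevity and symmetry; what yours buys is that each direction is reduced to an already-established fact with no new induction. Both are sound, and your parenthetical remark that a direct structural induction would also work is accurate --- that induction is essentially what the paper's one-sentence justification compresses.
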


\begin{proof} Reduct $\Gamma^X$ is obtained from~$\Gamma$ by replacing some
subformulas that are not satisfied by~$X$ with $\bot$.
\end{proof}

\noindent{\bf Proposition~\ref{prop:head}.}
{\it
Each stable model of a theory $\Gamma$ consists of head atoms of $\Gamma$.
}

\begin{proof}
Consider any theory $\Gamma$, the set $S$ of head atoms of $\Gamma$, and
a stable model $X$ of~$\Gamma$. By Lemma~\ref{lemma:satsame},
$X\models \Gamma$, so that, by Lemma~\ref{lemma:head},
$X\cap S\models \Gamma^X$. Since $X\cap S\subseteq X$ and no proper
subset of $X$ satisfies $\Gamma^X$, it follows that $X\cap S=X$, and
consequently that $X\subseteq S$.
\end{proof}

\noindent{\bf Proposition~\ref{prop:constraint}.}
{\it
For every two propositional theories $\Gamma_1$ and $\Gamma_2$ such that
$\Gamma_2$ has no head atoms, a  set $X$ of atoms
is a stable model of $\Gamma_1\cup \Gamma_2$ iff $X$ is a stable model of
$\Gamma_1$ and $X\models \Gamma_2$.
}

\begin{proof}
If $X\models \Gamma_2$ then $\Gamma_2^X$ is satisfied by every subset of
$X$ by Lemma~\ref{lemma:head}, so that
$(\Gamma_1\cup \Gamma_2)^X$ is classically equivalent to $\Gamma_1^X$;
then clearly $X$ is a stable model of $\Gamma_1\cup \Gamma_2$ iff it
is a stable model of $\Gamma_1$. Otherwise, $\Gamma_2^X$ contains $\bot$,
and $X$ cannot be a stable model of $\Gamma_1\cup \Gamma_2$.
\end{proof}

\subsection{Proofs of Propositions~\ref{prop:expl} and~\ref{prop:split}}

We start with the proof of Proposition~\ref{prop:split}. Some lemmas are
needed.

\begin{lemma}\label{lemma:split2}
If $X$ is a stable model of $\Gamma$ then $\Gamma^X$ is equivalent to
$X$.
\end{lemma}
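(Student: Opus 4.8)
The statement is: if $X$ is a stable model of $\Gamma$, then $\Gamma^X$ is (classically) equivalent to $X$ — meaning equivalent to the theory consisting of the single formula $\bigwedge_{a\in X} a$, whose unique model is $X$ itself. The plan is to prove the two directions of logical equivalence separately. Since $X$ is a stable model, $X$ is by definition a minimal model of $\Gamma^X$; in particular $X\models\Gamma^X$, and hence $X$ is a model of $\Gamma^X$. For the nontrivial direction I must show that $X$ is the \emph{only} model of $\Gamma^X$, i.e.\ that $Y\models\Gamma^X$ implies $Y=X$.

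First I would establish that every model $Y$ of $\Gamma^X$ satisfies $Y\subseteq X$. This is exactly the content of Proposition~\ref{prop:head} (each stable model consists of head atoms) combined with the observation that the reduct $\Gamma^X$ only contains atoms from $X$ — every atom not in $X$ has been replaced by $\bot$ in forming the reduct, by the definition of $a^X$. More directly: since the reduct replaces each atom not satisfied by $X$ with $\bot$, the formula $\Gamma^X$ mentions only atoms from $X$, so if $Y\models\Gamma^X$ then $Y\cap X\models\Gamma^X$ as well (restricting a model to the relevant atoms preserves satisfaction). Thus it suffices to consider $Y\subseteq X$.

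Now, given $Y\subseteq X$ with $Y\models\Gamma^X$: by the minimality half of the definition of a stable model, $X$ has \emph{no} proper subset satisfying $\Gamma^X$, so $Y\subseteq X$ and $Y\models\Gamma^X$ force $Y=X$. Combining the two directions, the models of $\Gamma^X$ are exactly $\{X\}$, which is precisely the set of models of $\bigwedge_{a\in X}a$; hence $\Gamma^X$ is classically equivalent to (the formula/theory) $X$, as claimed.

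The only real subtlety — and the step I would be most careful about — is the reduction to subsets of $X$, i.e.\ justifying that a model $Y$ of $\Gamma^X$ can be replaced by $Y\cap X$ without losing satisfaction. This is immediate once one notes that $\Gamma^X$ contains no occurrences of atoms outside $X$ (they were all turned into $\bot$), so the truth value of any subformula of $\Gamma^X$ under $Y$ depends only on $Y\cap X$. With that in hand, everything else follows directly from the two clauses in the definition of a stable model, and there is no need to re-prove Proposition~\ref{prop:head} from scratch — it can simply be cited, or the even more elementary ``reduct mentions only atoms of $X$'' observation can be used in its place.
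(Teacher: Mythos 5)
Your proposal is correct and follows essentially the same route as the paper: observe that $\Gamma^X$ (like the theory $X$) mentions only atoms of $X$, reduce to models that are subsets of $X$, and invoke the minimality clause in the definition of a stable model to conclude that $X$ is the unique such model of both theories. One sentence is loosely stated --- the models of $\Gamma^X$ are not ``exactly $\{X\}$'' but exactly the supersets of $X$ (the same is true of $\bigwedge_{a\in X}a$), which is what your own reduction to $Y\cap X$ actually establishes and is all that is needed for the equivalence.
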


\begin{proof}
Since all atoms that occur in $\Gamma^X$ belong to~$X$,
it is sufficient to show that the formulas are satisfied by the same
subsets of $X$. By the definition of a stable model, the only subset of~$X$
satisfying~$\Gamma^X$ is~$X$.
\end{proof}

%

\begin{lemma}\label{lemma:split0}
Let~$S$ be a set of atoms that contains all atoms that occur in a
theory~$\Gamma_1$ but does not contain any head atoms of a theory~$\Gamma_2$.
For any set $X$ of atoms, if $X$ is a stable model of
$\Gamma_1\cup \Gamma_2$ then $X\cap S$ is a stable model of $\Gamma_1$.
\end{lemma}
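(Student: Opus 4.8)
The plan is to verify directly that $X\cap S$ is a minimal model of the reduct $\Gamma_1^{X\cap S}$. A convenient first observation is that, since every atom occurring in $\Gamma_1$ lies in $S$, satisfaction of any subformula of a formula of $\Gamma_1$ depends only on the restriction of the interpretation to $S$; a routine induction on formula structure then shows that $\Gamma_1^X$ and $\Gamma_1^{X\cap S}$ are literally the same theory, and that this theory contains only atoms of $X\cap S$ (an atom of $\Gamma_1$ outside $X$ is turned into $\bot$ by the reduct). Hence it suffices to show that $X\cap S$ is a minimal model of $\Gamma_1^X$.

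That $X\cap S$ is a model of $\Gamma_1^X$ is immediate: since $X$ is a stable model of $\Gamma_1\cup\Gamma_2$, Lemma~\ref{lemma:satsame} gives $X\models\Gamma_1\cup\Gamma_2$, so in particular $X\models\Gamma_1$; and since $S$ contains every atom of $\Gamma_1$, hence every head atom of $\Gamma_1$, Lemma~\ref{lemma:head} yields $X\cap S\models\Gamma_1^X$.

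For minimality I would argue by contradiction. Suppose $Z$ is a proper subset of $X\cap S$ with $Z\models\Gamma_1^X$, and set $Z':=Z\cup(X\setminus S)$. Then $Z'\subseteq X$, and $Z'$ is a \emph{proper} subset of $X$, witnessed by any atom in $(X\cap S)\setminus Z$. Because $\Gamma_1^X$ mentions only atoms of $X\cap S$ and $Z'\cap(X\cap S)=Z$, we get $Z'\models\Gamma_1^X$. The key point is to show $Z'\models\Gamma_2^X$; for this I would use the following strengthening of Lemma~\ref{lemma:head}, obtained by the very same structural induction: if $X\models\Gamma$ and $Y$ is any set such that $Y\subseteq X$ and every head atom of $\Gamma$ belonging to $X$ also belongs to $Y$, then $Y\models\Gamma^X$ (the only extra bookkeeping in the induction is that, in the $\wedge$-, $\vee$- and $\implies$-cases, the head atoms of the immediate subformula into which one recurses are again head atoms of the whole formula, so the hypothesis on $Y$ is preserved). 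Applying this with $\Gamma=\Gamma_2$: by assumption no head atom of $\Gamma_2$ belongs to $S$, hence every head atom of $\Gamma_2$ that lies in $X$ lies in $X\setminus S$, and therefore in $Z'$; so the lemma gives $Z'\models\Gamma_2^X$. Thus $Z'$ is a proper subset of $X$ satisfying $(\Gamma_1\cup\Gamma_2)^X=\Gamma_1^X\cup\Gamma_2^X$, contradicting the fact that $X$ is a minimal model of that reduct. Hence no such $Z$ exists and $X\cap S$ is a stable model of $\Gamma_1$.

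The step I expect to be the crux is precisely $Z'\models\Gamma_2^X$. Reducts are not monotone with respect to enlarging the candidate model (an implication antecedent can become satisfied once more atoms are added), so one cannot merely apply Lemma~\ref{lemma:head} to $X\setminus S$ and then ``add'' the atoms of $Z$; the set $Z'$ must be handled in one step, and it is exactly the hypothesis that $\Gamma_2$ has no head atoms in $S$ --- so that $Z'$ still contains all head atoms of $\Gamma_2$ that occur in $X$ --- that makes the strengthened lemma applicable. Everything else is routine.
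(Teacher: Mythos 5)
Your proof is correct and follows essentially the same route as the paper's: the paper also pads the hypothetical smaller model $Y\subsetneq X\cap S$ with atoms outside $S$ (it uses $Z=X\cap(S'\cup Y)$ where $S'$ is the set of head atoms of $\Gamma_2$, rather than your $Z\cup(X\setminus S)$) and invokes Lemma~\ref{lemma:head} to get satisfaction of $\Gamma_2^X$, then derives the contradiction with the stability of $X$. Note that your ``strengthening'' of Lemma~\ref{lemma:head} needs no new induction: it is the lemma as stated applied to the set $S:=Y\cup S'$, since then $X\cap S=Y$ under your hypothesis.
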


\begin{proof}
Since $X$ is a stable model of $\Gamma_1\cup \Gamma_2$,
$X\models \Gamma_1$, so that $X\cap S\models \Gamma_1$, and, by
Lemma~\ref{lemma:satsame}, $X\cap S\models \Gamma_1^{X\cap S}$.
It remains to show that no proper subset~$Y$ of $X\cap S$ satisfies
$\Gamma_1^{X\cap S}$.  Let $S'$ be the set of head atoms of $\Gamma_2$,
and let $Z$ be $X\cap (S'\cup Y)$.
We will show that $Z$ has the following properties:
\begin{enumerate}
\item[(i)]
$Z\cap S=Y$;
\item[(ii)]
$Z\subset X$;
\item[(iii)]
$Z\models \Gamma_2^X$.
\end{enumerate}
To prove~(i), note that since $S'$ is disjoint from~$S$, and $Y$ is a subset
of $X\cap S$,
$$
Z\cap S=X\cap (S'\cup Y)\cap S=X\cap Y\cap S=(X\cap S)\cap Y=Y.
$$
To prove~(ii), note that set~$Z$ is clearly a subset of~$X$.  It cannot be
equal to~$X$, because otherwise we would have, by~(i),
$$Y=Z\cap S=X\cap S;$$
this is impossible, because~$Y$ is a proper subset of~$X\cap S$.
Property~(iii) follows from Lemma~\ref{lemma:head}, because
$X\models \Gamma_2$, and $S'\cup Y$ contains all head atoms of $\Gamma_2$.

Since~$X$ is a stable model of $\Gamma_1\cup \Gamma_2$, from property~(ii)
we can conclude that $Z\not\models (\Gamma_1\cup \Gamma_2)^X$. Consequently,
by property~(iii), $Z\not\models \Gamma_1^X$.  Since all atoms that occur
in $\Gamma_1$ belong to $S$, $\Gamma_1^X=\Gamma_1^{X\cap S}$, so that
$Z\not\models \Gamma_1^{X\cap S}$.  Since all atoms
that occur in $\Gamma_1^{X\cap S}$ belong to~$S$, it follows
that $Z\cap S\not\models \Gamma_1^{X\cap S}$.  By property~(i), we conclude
that $Y\not\models \Gamma_1^{X\cap S}$.
\end{proof}

%

\noindent
{\bf Proposition~\ref{prop:split}} (Splitting Set Theorem).
{\it
Let $\Gamma_1$ and $\Gamma_2$ be two theories such that no atom
occurring in $\Gamma_1$ is a head atom of $\Gamma_2$. Let $S$ be
a set of atoms containing all head atoms of $\Gamma_1$ but no head atoms
of $\Gamma_2$.
A  set $X$ of atoms is a stable model of $\Gamma_1\cup \Gamma_2$
iff $X\cap S$ is a stable model of $\Gamma_1$ and $X$ is a stable model
of $(X\cap S)\cup\Gamma_2$.
}

\begin{proof}
We first prove the claim in the case when $S$ contains all atoms of $\Gamma_1$.
If $X\cap S$ is not a stable model of $\Gamma_1$ then $X$ is not a stable
model of $\Gamma_1\cup \Gamma_2$ by Lemma~\ref{lemma:split0}. Now suppose that
$X\cap S$ is a stable model of $\Gamma_1$. Then, by Lemma~\ref{lemma:split2},
$\Gamma_1^{X\cap S}$ is equivalent to $X\cap S$.
Consequently,
$$
\begin{array}r
(\Gamma_1\cup \Gamma_2)^X \;=\; \Gamma_1^X\cup \Gamma_2^X \;=
\; \Gamma_1^{X\cap S}\cup \Gamma_2^X
 \;\eq\; (X\cap S)\;\cup\; \Gamma_2^X\qquad\\
 \;=\; (X\cap S)^X\cup \Gamma_2^X
 \;=\; \big((X\cap S)\cup \Gamma_2\big)^X
\end{array}
$$
We can conclude that $X$ is a stable model of $\Gamma_1\cup \Gamma_2$ iff
$X$ is a stable model of $\Gamma_2\cup (X\cap S)$.

The most general case remains. Let $S_1$ be the set of all atoms in $\Gamma_1$
(the value of $S$ for which we have already proved the claim).
In view of the special case described above, it is sufficient to show that,
for any set $S$ of atoms that respects the hypothesis conditions,
\beq2
\text{$X\cap S_1$ is a stable model of $\Gamma_1$ and $X$ is a stable model
of $(X\cap S_1)\cup\Gamma_2$}
\eeq2{splitcond1}
holds iff
\beq2
\text{$X\cap S$ is a stable model of $\Gamma_1$ and $X$ is a stable model
of $(X\cap S)\cup\Gamma_2$}.
\eeq2{splitcond2}
Assume~(\ref{splitcond1}). Sets $S$ and $S_1$ differ only for sets of atoms
that are not head atoms of $\Gamma_1$. Consequently, since $X\cap S_1$ is a
stable model of $\Gamma_1$, it follows from Proposition~\ref{prop:head} that
$X\cap S_1=X\cap S$. We can then conclude that~(\ref{splitcond2}) follows
from~(\ref{splitcond1}). The proof in the opposite direction is similar.
\end{proof}

\begin{lemma}
\label{lemma:replace}
Let $\Gamma$ be a theory, and let $Y$ and $Z$ be two disjoint sets of atoms
such that no atom of $Z$ is an head atoms of $\Gamma$. Let $\Gamma'$ a theory
obtained from $\Gamma$ by replacing occurrences of atoms of $Y$ with $\top$ and
occurrences of atoms of $Z$ with $\bot$. Then
$\Gamma\cup Y$ and $\Gamma'\cup Y$ have the same stable models.
\end{lemma}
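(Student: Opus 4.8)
The plan is to push the comparison down to the level of reducts, exploiting two facts about any stable model $X$ of either theory: the facts in $Y$ force $Y\subseteq X$, and the head‑atom hypothesis forces (via Proposition~\ref{prop:head}) that $X\cap Z=\emptyset$. So I would first fix a set $X$ of atoms with $Y\subseteq X$ and $X\cap Z=\emptyset$, writing $F'$ for the formula obtained from $F$ by replacing each atom of $Y$ with $\top$ and each atom of $Z$ with $\bot$, and prove two quick structural inductions on $F$. (a) $X\models F$ iff $X\models F'$: on $X$ every atom of $Y$ is true, hence interchangeable with $\top$, and every atom of $Z$ is false, hence interchangeable with $\bot$. (b) For every set $W$ with $Y\subseteq W$, $W\models F^X$ iff $W\models (F')^X$. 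In the base cases: an atom $a\in Y$ has $a^X=a$ (since $a\in X$) while $(a')^X=\top^X=\top$, and every $W\supseteq Y$ satisfies both $a$ and $\top$; an atom $z\in Z$ has $z^X=\bot=(z')^X$ (since $z\notin X$); atoms outside $Y\cup Z$ are untouched; and for a compound formula $F_1\otimes F_2$ the reduct keeps the connective or collapses to $\bot$ according to whether $X\models F_1\otimes F_2$, which by (a) is the same decision for $F_1\otimes F_2$ and $F_1'\otimes F_2'$, so the claim follows from the induction hypothesis.

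Next I would assemble these facts. Since $Y\subseteq X$, the reduct of each fact $a\in Y$ is $a$ itself, so $(\Gamma\cup Y)^X=\Gamma^X\cup\{a:a\in Y\}$ and $(\Gamma'\cup Y)^X=(\Gamma')^X\cup\{a:a\in Y\}$. A set $W$ satisfying either reduct must contain $Y$, and then (b), applied formula by formula, gives $W\models\Gamma^X$ iff $W\models(\Gamma')^X$; a set not containing $Y$ satisfies neither reduct. Hence $(\Gamma\cup Y)^X$ and $(\Gamma'\cup Y)^X$ have exactly the same models, and therefore the same minimal models, so for any $X$ with $Y\subseteq X$ and $X\cap Z=\emptyset$, $X$ is a stable model of $\Gamma\cup Y$ iff $X$ is a stable model of $\Gamma'\cup Y$.

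Finally I would check that the side conditions come for free. Every stable model $X$ of $\Gamma\cup Y$ (and of $\Gamma'\cup Y$) contains $Y$ because each $a\in Y$ occurs as a fact; and $X\cap Z=\emptyset$ by Proposition~\ref{prop:head}, since the head atoms of $\Gamma\cup Y$ are those of $\Gamma$ together with $Y$, both disjoint from $Z$ by hypothesis, while $\Gamma'$ contains no occurrence of any atom of $Z$ at all, so $Z$ is disjoint from the head atoms of $\Gamma'\cup Y$ as well. Thus the equivalence of the previous paragraph applies to every stable model of either side, and $\Gamma\cup Y$ and $\Gamma'\cup Y$ have the same stable models.

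The one point that needs care is remembering that $\Gamma$ and $\Gamma'$ are in general \emph{not} strongly equivalent (take $\Gamma=\{p\}$, $Y=\{p\}$, so $\Gamma'=\{\top\}$): the facts $Y$ are indispensable, precisely because inside a reduct an atom $a\in Y$ collapses to the literal $a$ and not to $\top$, and it is only the conjunct $a$ supplied by the fact that makes $a$ and $\top$ interchangeable in the reduct; dually, the hypothesis that no atom of $Z$ is a head atom of $\Gamma$ is exactly what allows Proposition~\ref{prop:head} to guarantee that the atoms of $Z$ are false in every stable model, hence collapse to $\bot$ in both reducts. Everything else is routine induction.
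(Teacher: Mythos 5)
Your proof is correct, but it takes a genuinely different route from the paper's. The paper first uses Proposition~\ref{prop:head} and Proposition~\ref{prop:constraint} to add the constraints $\{\neg a : a\in Z\}$ to both $\Gamma\cup Y$ and $\Gamma'\cup Y$ without changing their stable models, then observes that the two augmented theories are equivalent in \emph{intuitionistic} logic (a fact $a$ licenses replacing $a$ by $\top$, a hypothesis $\neg a$ licenses replacing $a$ by $\bot$), hence equivalent in the logic of here-and-there, hence strongly equivalent by Proposition~\ref{prop:se}. You instead work directly with the definition of the reduct: for any $X$ with $Y\subseteq X$ and $X\cap Z=\emptyset$ you prove by structural induction that $F^X$ and $(F')^X$ are satisfied by the same sets $W\supseteq Y$, and then note that every candidate stable model of either theory meets those two conditions. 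Your base cases are handled correctly --- in particular the crucial one, that $a^X=a$ while $(a')^X=\top$ for $a\in Y$, which is why the equivalence of the reducts only holds over sets containing $Y$ and why the facts $Y$ cannot be dispensed with --- and your closing remark that $\Gamma$ and $\Gamma'$ are not strongly equivalent on their own shows you see exactly where the hypotheses enter. What each approach buys: the paper's argument is shorter and reuses its strong-equivalence machinery, but leans on an unproved ``known property'' of intuitionistic logic; yours is longer but entirely self-contained, needing only the definition of the reduct, Lemma~\ref{conjdisj}-style compositionality, and Proposition~\ref{prop:head}.
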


\begin{proof}
Atoms of $Z$ are not head atoms of $\Gamma\cup Y$. Consequently,
by Proposition~\ref{prop:head}, every stable model of $\Gamma\cup Y$
is disjoint from $Z$. It follows, by Proposition~\ref{prop:constraint},
that $\Gamma\cup Y$ has the same stable models of
$$
\Gamma \cup Y \cup \{\neg a: a\in Z\}.
$$
Similarly, $\Gamma'\cup Y$ has the same stable models of
$$
\Gamma' \cup Y \cup \{\neg a: a\in Z\}.
$$
It is a known property that the two theories above are equivalent to each
other in intuitionistic logic, and then in the logic-of-here-and-there.
Consequently, by Proposition~\ref{prop:se}, they are strongly equivalent to
each other, and we can conclude that they have the same stable models.
\end{proof}

\noindent{\bf Proposition~\ref{prop:expl}.}
{\it
Let $\Gamma$ be any propositional theory, and $Q$ a set of atoms
not occurring in $\Gamma$. For each $q\in Q$, let $Def(q)$ be a
formula that doesn't contain any atoms from $Q$. Then
$X\mapsto X\setminus Q$ is a 1--1 correspondence between the stable models of
$\Gamma\cup \{Def(q)\implies q: q\in Q\}$ and the stable models of $\Gamma$.
}

\begin{proof}
Let $\Gamma_2$ be $\{Def(q)\implies q: q\in Q\}$. Since $Q$ contains
all head atoms of $\Gamma_2$ but no atom occurring in $\Gamma$ then, by
the splitting set theorem (Proposition~\ref{prop:split}), (``s.m.'' stands
for ``a stable model'')
\beq2
\text{$X$ is s.m. of $\Gamma\cup \Gamma_2$ iff $X\setminus Q$ is s.m. of
$\Gamma$ and $X$ is s.m. of $(X\setminus Q)\cup\Gamma_2$.}
\eeq2{split4explicit}
Clearly, if $X$ is a stable model of $\Gamma\cup \Gamma_2$ then
$X\setminus Q$ is a stable model of $\Gamma$, which proves one of the
two directions of the 1--1 correspondence in the claim.
Now take any stable model $Y$ of
$\Gamma$. We need to show that there is exactly one stable model $X$ of
$\Gamma\cup \Gamma_2$ such that $X\setminus Q=Y$. In view
of~(\ref{split4explicit}), it is sufficient to show that
$$
Z=Y\cup\{q\in Q: Y\models Def(q)\}
$$
is the only stable model $X$ of $Y\cup \Gamma_2$, and that
$Z\setminus Q=Y$. This second condition can be easily verified.
Now consider $Y\cup \Gamma_2$. By Lemma~\ref{lemma:replace},
$Y\cup \Gamma_2$ has the same stable models of
$$
Y\cup \{Def(q)'\implies q: q\in Q\},
$$
where $Def(q)'$ is obtained from $Def(q)$ by replacing all occurrences of
atoms in it with $\top$ if the atom replaced belongs to $Y$, and with $\bot$
otherwise. This theory can be further simplified into theory $Z$.
Indeed, $Def(q)'$ doesn't contain atoms, and then it is strongly
equivalent to $\top$ or $\bot$. In particular, if $Y\models Def(q)$ then
$Def(q)'$ is strongly equivalent to $\top$, and then $Def(q)'\implies q$
is strongly equivalent to $q$. Otherwise, $Def(q)'$ is strongly equivalent
to $\bot$, and then $Def(q)'\implies q$ is strongly equivalent to $\top$.
As $Z$ is a  set of atoms, it is
easy to verify that its only stable model is $Z$ itself.
\end{proof}

\subsection{Proof of Proposition~\ref{prop:compl}}

In order to prove the Completion Lemma, we will need the following lemma.

\begin{lemma}
\label{weaklyposneg}
Take any two  sets $X$, $Y$ of atoms such that $Y\subseteq X$.
For any formula $F$ and any set $S$ of atoms,
\begin{enumerate}
\item[(a)]
if each positive occurrence of an atom from $S$ in $F$ is in the
scope of negation and $Y\models F^X$ then $Y\setminus S \models F^X$, and
\item[(b)]
if each negative occurrence of an atom from $S$ in $F$ is in the
scope of negation and $Y\setminus S \models F^X$ then $Y \models F^X$.
\end{enumerate}
\end{lemma}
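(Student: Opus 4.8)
The plan is to prove (a) and (b) \emph{simultaneously} by structural induction on $F$. This is forced: passing through the antecedent of an implication (in particular through a negation, since $\neg G$ is $G\implies\bot$) interchanges the parity of occurrences, so neither statement can be pushed through the induction on its own.

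First I would clear the easy cases. If $F=\bot$ then $F^X=\bot$ and both implications are vacuous. If $F$ is an atom $a$: when $a\notin S$, the reduct $a^X$ (which is $a$ or $\bot$) involves no atom of $S$ and $a\in Y\iff a\in Y\setminus S$, so both directions hold at once; when $a\in S$, the sole occurrence of $a$ in $F$ is strictly positive and outside the scope of negation, so the hypothesis of (a) fails and (a) is vacuous, while $F$ has \emph{no} negative occurrence of $a$ and $Y\setminus S\not\models a^X$, so the hypothesis of (b) is true but its conclusion's premise is false, and (b) holds vacuously. For $F=G\wedge H$ and $F=G\vee H$: if $X\not\models F$ then $F^X=\bot$ and everything is vacuous; otherwise $F^X=G^X\otimes H^X$, and since the positive (resp.\ negative) occurrences of $S$-atoms in $F$ are exactly those in $G$ together with those in $H$, each with the same ``scope of negation'' status, the hypotheses of (a) (resp.\ (b)) descend to $G$ and $H$, and the conclusions follow by applying the induction hypothesis to each conjunct/disjunct separately.

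The implication case is the crux, and it splits on whether the consequent is $\bot$. If $F=G\implies\bot$ (i.e.\ $F=\neg G$), then either $X\models G$, so $X\not\models F$ and $F^X=\bot$ (vacuous), or $X\not\models G$, in which case $G^X=\bot$ because an unsatisfied subformula collapses to $\bot$ in the reduct, and hence $F^X$ is $\bot\implies\bot$, which is satisfied by every set; so both (a) and (b) hold outright, without even using the induction hypothesis or the occurrence conditions. If instead $F=G\implies H$ with $H\neq\bot$ and $X\models F$, then $F^X=G^X\implies H^X$, and the bookkeeping is: the positive occurrences of $S$-atoms in $F$ are the \emph{negative} ones of $G$ together with the positive ones of $H$ (and dually for negative occurrences), while an occurrence inside $G$ or $H$ is in the scope of negation within $F$ exactly when it already was within $G$ resp.\ $H$ (because $G\implies H$ with $H\neq\bot$ is itself not a negation). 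Hence the hypothesis of (a) for $F$ yields the hypothesis of (b) for $G$ and of (a) for $H$: assuming $Y\setminus S\models G^X$, induction hypothesis (b) for $G$ gives $Y\models G^X$, hence $Y\models H^X$, hence induction hypothesis (a) for $H$ gives $Y\setminus S\models H^X$; this establishes (a). Statement (b) is symmetric, applying induction hypothesis (a) to $G$ and (b) to $H$.

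I expect the main obstacle to be bookkeeping rather than any real difficulty: one has to track carefully how the parity of an occurrence and its lying in the scope of negation behave under $\implies$, and one has to be certain that every ``vacuous'' subcase genuinely is vacuous --- the key simplifying observation being that an unsatisfied subformula always reduces to $\bot$, which is exactly what trivializes the negation case. No ingredient beyond the recursive definition of the reduct is needed.
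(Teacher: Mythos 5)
Your proof is correct and follows essentially the same route as the paper's: a simultaneous structural induction on $F$ proving (a) and (b) together, with the implication case split into the negation subcase (where $G^X=\bot$ trivializes everything) and the $H\neq\bot$ subcase, where the induction hypothesis for (b) is applied to the antecedent and for (a) to the consequent. The only cosmetic difference is that the paper dispatches all $X\not\models F$ subcases once at the start, while you handle them within each structural case.
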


\begin{proof}
\begin{itemize}
\item
If $X\not\models F$ then $F^X=\bot$, and the claim is trivial. This covers
the case in which $F=\bot$.
\item
If $X\models F$ and $F$ is an atom $a$ then claim~(b) holds because if
$a\in Y\setminus S$ then
$a\in Y$. For claim~(a), if $a\not\in S$ and $a\in Y$ then
$a\in Y\setminus S$.
\item
If $X\models F$ and $F$ is a conjunction or a disjunction,
the claim is almost immediate
by Lemma~\ref{conjdisj} and induction hypothesis.
\item
The case in which $X\models F$ and $F$ has the form $G\implies H$ remains.
Clearly, $(G\implies H)^X=G^X\implies H^X$.
{\bf Case 1}. If $G\implies H$ is a negation (that is, $H=\bot$)
then, since $X\models F$, $X\not\models G^X$ and then $F^X=\top$, and the
claims clearly follows. {\bf Case 2}: $H\not=\bot$.
We describe a proof of claim~(a). The proof for~(b) is similar.
Assume that no atom from $S$ has positive occurrences in $G\implies H$
outside the scope of the negation, that $Y\models G^X\implies H^X$, and that
$Y\setminus S \models G^X$. We want to prove that $Y\setminus S \models H^X$.
Notice that no atom from $S$ has negative occurrences in $G$ outside
the scope of negation;
consequently, by the induction hypothesis (claim (b)), $Y\models G^X$.
On the other hand, $Y\models (G\implies H)^X$, so that $Y\models H^X$. Since 
no atom from $S$ has positive occurrences in $H$ outside the scope of
negation, we can conclude that $Y\setminus S \models H^X$ by induction
hypothesis (claim (a)).
\end{itemize}
\end{proof}

\noindent{\bf Proposition~\ref{prop:compl}} (Completion Lemma)
{\it
Let $\Gamma$ be any propositional theory, and $Q$ a set of atoms
that have positive occurrences in $\Gamma$ only in the scope of negation.
For each $q\in Q$, let $Def(q)$ be a formula such that all negative
occurrences of atoms from $Q$ in $Def(q)$ are in the scope
of negation. Then $\Gamma\cup \{Def(q)\implies q: q\in Q\}$ and
$\Gamma\cup \{Def(q)\eq q : q\in Q\}$ have the same stable models.
}

\begin{proof}
Let $\Gamma_1$ be $\Gamma\cup \{Def(q)\implies q~:~q\in Q\}$ and
let $\Gamma_2$ be $\Gamma_1\cup \{q\implies Def(q)~:~q\in Q\}$.
We want to prove that a  set $X$ of atoms is
a stable model of both theories or for none of them.
Since $\Gamma_1^X\subseteq \Gamma_2^X$, $\Gamma_2^X$ entails $\Gamma_1^X$.
If the opposite entailment holds also then we clearly have that
$\Gamma_2^X$ and $\Gamma_1^X$ are satisfied by the same subsets of $X$,
and the claim immediately follows. Otherwise, for some $Y\subseteq X$,
$Y\not\models \Gamma_2^X$ and $Y\models\Gamma_1^X$.
First of all, that means that $X\models \Gamma_1$,
so that $\Gamma_1^X$ is equivalent to
$$\Gamma^X\cup \{ Def(q)^X\implies q~:~q\in Q\cap X\}.$$
Secondly, set $Y$ is one of the sets $Y'$ having the following properties:
\begin{enumerate}
\item[(i)]
$Y'\setminus Q=Y\setminus Q$, and
\item[(ii)]
$Y'\models Def(q)^X\implies q$ for all $q\in Q\cap X$.
\end{enumerate}
Let $Z$ be the intersection of such sets $Y'$, and
let $\Delta$ be $\{q\implies Def(q)^X~:~q\in Q\cap X\}$.
Set $Z$ has the following properties:
\begin{enumerate}
\item[(a)]
$Z\subseteq Y$,
\item[(b)]
$Z\models \Gamma_1^X$, and
\item[(c)]
$Z\models \Delta$.
\end{enumerate}
Indeed, claim~(a) holds since $Y$ is one of the elements $Y'$
of the intersection. To prove~(b), first of all, we observe that
$Z\setminus Q=Y\setminus Q$, so that, by~(a), there is a set $S\subseteq Q$
such that $Z=Y\setminus S$;
as $Y\models \Gamma^X$ and $\Gamma$ has all positive occurrences of
atoms from $S\subseteq Q$ in the scope of negation, it follows that
$Z\models \Gamma^X$ by Lemma~\ref{weaklyposneg}(a).
It remains to show that, for any $q$,
if $Z\models Def(q)^X$ then $q\in Z$. Assume that $Z\models Def(q)^X$. Then,
since $Def(q)$ has all negative occurrences of atoms from $Q$ in the
scope of negation, and since all $Y'$ whose intersection generate $Z$ are
superset of $Z$ with $Y'\setminus Z\subseteq Q$, all those $Y'$ satisfy
$Def(q)^X$ by Lemma~\ref{weaklyposneg}. By property~(ii), we have that
$q\in Y'$ for all $Y'$, and then $q\in Z$.

It remains to prove claim~(c).
Take any $q\in Z$ that belongs to $Q\cap X$.
Set $Y'=Z\setminus \{q\}$ satisfies condition~(i),
but it cannot satisfy~(ii), because sets $Y'$ that satisfy~(i) and~(ii)
are supersets of $Z$ by construction of $Z$. Consequently,
$Y'\not\models Def(q)^X$. Since all positive occurrences of atom $q$
in $Def(q)$ are in the scope of negation and $Y'=Z\setminus \{q\}$,
we can conclude that $Z\not\models Def(q)^X$ by Lemma~\ref{weaklyposneg}
again.

Now consider two cases. If $X\not\models \Gamma_2$ then clearly $X$
is not a stable model of $\Gamma_2$. It is not a stable model of $\Gamma_1$
as well. Indeed, since $X\models \Gamma_1$, we have that, for some
$q\in Q\cap X$, $X\not\models Def(q)$. Consequently, $Def(q)^X=\bot$ and
then $X\not\models \Delta$, but, since $Z\models \Delta$ by~(c) and
$Z\subseteq Y\subseteq X$ by~(a), $Z$ is a proper subset of $X$. Since
$Z\models \Gamma_1^X$ by~(b), $X$ is not a stable model of $\Gamma_1$.

In the other case ($X\models \Gamma_2$) it is not hard to see that
$\Gamma_2^X$ is equivalent to $\Gamma_1^X\cup\Delta$. We have that
$Z\models \Gamma_1^X$ by~(b), and then $Z\models \Gamma_2^X$ by~(c).
Since $Y\not\models \Gamma_2^X$, $Z\not = Y$. On the other hand,
$Z\subseteq Y\subseteq X$ by~(a).
This means that $Z$ is a proper subset of $X$ that satisfies
$\Gamma_1^X$ and $\Gamma_2^X$, and we can conclude that $X$ is not an
stable model of any of $\Gamma_1$ and $\Gamma_2$.
\end{proof}

\subsection{Proof of Proposition~\ref{prop:disjunction}}

\noindent{\bf Lemma~\ref{lemma:disjunction}.}
{\it
Rule
\beq2
l_1\wedge\cdots\wedge l_m \implies a_1\vee\cdots\vee a_n
\eeq2{transldisjrule2}
($n>0, m\geq 0$) where $a_1,\dots,a_n$ are atoms and $l_1,\dots,l_m$ are
literals, is strongly equivalent to the set of $n$ implications $(i=1,\dots,n)$
\beq2
(l_1\wedge\cdots\wedge l_m\wedge 
(a_1\implies a_i)\wedge \cdots \wedge (a_n\implies a_i)) \implies a_i.
\eeq2{transldisjrule}
}

\begin{proof}
Let $F$ be~(\ref{transldisjrule2}) and 
$G_i$ $(i=1,\dots,n)$ be~(\ref{transldisjrule}).
We want to prove that $F$
is strongly equivalent to $\{G_1,\dots,G_n\}$
by showing that $F^X$ is classically equivalent to
$\{G_1^X,\dots,G_n^X\}$. Let $H$ be $l_1\wedge\cdots\wedge l_m$.

{\bf Case~1}: $X\not\models H$. Then the antecedents of $F$ and of
all $G_i$ are not satisfied by $X$. It
is then easy to verify that the reducts of $F$ and of all $G_i$ relative to
$X$ are equivalent to $\top$.
{\bf Case 2}: $X\models H$ and $X\not\models F$. Then clearly $F^X=\bot$.
But, for each $i$,  $G_i^X$ is $\bot$: indeed, since $X\not\models F$,
$X\not\models a_i$ for all $i=1,\dots,n$. It follows that the consequent of
each $G_i$ is not satisfied by $X$, but the antecedent is satisfied, because
$X\models H$ and in each
implication $a_j\implies a_i$ in $G_i$, the antecedent is not satisfied.
{\bf Case~3}: $X\models H$ and $X\models F$.
This means that some of $a_1,\dots,a_n$ belong to $X$.
Assume, for instance, that $a_1,\dots,a_p$
($0<p\leq n$) belong to $X$, and $a_{p+1},\dots,a_n$ don't.
Then $F^X$ is equivalent to $H^X\implies (a_1\vee\cdots\vee a_p)$.
Now consider formula $G_i$. If $i>p$ then the consequent $a_i$ is not
satisfied by $X$, but also the antecedent is not: it contains an
implication $a_1\implies a_i$; consequently $G_i^X$ is $\top$.
On the other hand, if $i\leq p$ then the consequent $a_i$ is satisfied
by $X$, as well as each implication $a_j\implies a_i$ in the antecedent
of $G_i$. After a few simplifications, we can rewrite $G_i^X$ as
$$
(H^X \wedge (a_1\implies a_i)\wedge \cdots \wedge (a_p\implies a_i))
\implies a_i.
$$
It is not hard to see that this formula is classically equivalent to
$$
(H^X \implies (a_1\vee\dots,\vee a_p)
$$
which is equivalent to $F^X$, so that the claim easily follows.
\end{proof}

\noindent{\bf Proposition~\ref{prop:disjunction}.}
{\it
The problem of the existence of a stable model of a theory consisting of
formulas of the form $F\implies a$ and $F\implies \bot$ is
$\Sigma_2^P$-hard.
}

\begin{proof}
The problem is in class $\Sigma_2^P$ because, as mentioned in
Section~{sec:prop-compl}, the same problem for the (larger) class
of arbitrary theories is also in $\Sigma_2^P$~\cite{pea01}. Hardness remains
to be proven.

In view of Lemma~\ref{lemma:disjunction}, we can transform a disjunctive
program into a theory consisting of formulas of the form $F\ar a$, with the
same stable models and in polynomial time. Consequently, as the existence of
a stable model of a disjunctive program is $\Sigma_2^P$-hard by~\cite{eit93a},
the same holds for theories as in the statement of this proposition.
\end{proof}






\subsection{Proof of Propositions~\ref{propaggregate} and~\ref{monotone}}

For the proof of these propositions, we define an {\em extended aggregate} to
be either an aggregate of the form~(\ref{aggregate}), or $\bot$. It is
easy to see, that, for each aggregate $A$ of the form~(\ref{aggregate}) and
any set $X$ of atoms, $A^X$ is an extended aggregate. We also define, for
any extended aggregate $A$, $\hat{A}$ as
\begin{itemize}
\item
the formula~(\ref{mainaggr}) if $A$ has the form~(\ref{aggregate}), and
\item
$\bot$, otherwise.
\end{itemize}

\begin{lemma}
For any extended aggregate $A$, $\hat{A}$ is classically equivalent to $A$.
\label{lemma:propaggregate-a}
\end{lemma}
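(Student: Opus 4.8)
The statement splits into two cases according to the definition of an extended aggregate. If $A$ is $\bot$, then by definition $\hat A$ is also $\bot$, and the two are trivially classically equivalent. So the plan is to concentrate on the case in which $A$ has the form~(\ref{aggregate}), say $op\langle\{F_1=w_1,\dots,F_n=w_n\}\rangle\prec N$, in which case $\hat A$ is the formula~(\ref{mainaggr}). Here I would prove the equivalence \emph{pointwise}: fix an arbitrary set $X$ of atoms and show that $X\models A$ iff $X\models\hat A$.

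The key device is to introduce the index set $J=\{i\in\{1,\dots,n\}:X\models F_i\}$. With this notation, the multiset $W_X$ from the definition of satisfaction of an aggregate is exactly $\{w_i:i\in J\}$, so $X\models A$ holds precisely when $op(\{w_i:i\in J\})\prec N$. Now analyze when $X$ fails to satisfy $\hat A$. The formula $\hat A$ is a conjunction, over all $I\subseteq\{1,\dots,n\}$ with $op(\{w_i:i\in I\})\not\prec N$, of the implications $(\bigwedge_{i\in I}F_i)\implies(\bigvee_{i\in\overline I}F_i)$. A single such implication is false under $X$ exactly when $X\models\bigwedge_{i\in I}F_i$ and $X\not\models\bigvee_{i\in\overline I}F_i$, i.e. when $I\subseteq J$ and $\overline I\cap J=\emptyset$; since $\overline I\cap J=\emptyset$ says $J\subseteq I$, these two conditions together are equivalent to $I=J$.

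Putting this together: $X\not\models\hat A$ iff the index $J$ itself is one of the $I$'s appearing in the conjunction, i.e. iff $op(\{w_i:i\in J\})\not\prec N$, i.e. iff $X\not\models A$. Contraposing gives $X\models\hat A$ iff $X\models A$, and since $X$ was arbitrary this is the desired classical equivalence. The only point requiring a little care is the step identifying the failing conjunct: one must check that the antecedent holding ($I\subseteq J$) together with the consequent failing ($J\subseteq I$) pins $I$ down to be exactly $J$, so that there is a unique conjunct that could possibly be violated, namely the one indexed by $J$ — and it actually occurs in~(\ref{mainaggr}) iff $X\not\models A$. There is no substantive obstacle beyond getting this bookkeeping right.
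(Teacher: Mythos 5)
Your proof is correct and follows essentially the same route as the paper's: both arguments fix $X$, observe that the unique conjunct of~(\ref{mainaggr}) that $X$ can falsify is the one indexed by $J=\{i:X\models F_i\}$ (the paper calls it $I_X$), and note that this conjunct occurs in the conjunction exactly when $op(\{w_i:i\in J\})\not\prec N$, i.e.\ exactly when $X\not\models A$. No differences worth noting.
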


\begin{proof}
The case $A=\bot$ is trivial. The remaining case is when $A$ is an aggregate.
Consider any possible conjunctive term $H_I$ (where $I\subseteq\{1,\dots,n\}$)
of $\hat{A}$:
$$
\big( \bigwedge_{i\in I} F_i \big) \implies
\big( \bigvee_{i\in \o I} F_i \big).
$$
For each set $X$ of atoms there is exactly one set $I$
such that $X\not\models H_I$: the set $I_X$ that consists of the $i$'s such
that $X\models F_i$. Consequently, for every set $X$ of atoms,
\begin{equation*}
\begin{array}{rcl}
X\models \hat{A}
&\text{iff}& \text{$H_{I_X}$ is not a conjunctive term of $\hat{A}$}\\
&\text{iff}& op(\{w_i~:~i\in I_X\})\prec N\\
&\text{iff}& op(\{w_i~:~X\models F_i\})\prec N\\
&\text{iff}& X\models A.
\end{array}
\end{equation*}
\end{proof}

\begin{lemma}
For any aggregate $A$ and any set $X$ of atoms, $\hat{A}^X$ is classically
equivalent to $\hat{A^X}$.
\label{lemma:aggrformreduct}
\end{lemma}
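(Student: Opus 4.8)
The plan is to argue by cases according to whether $X\models A$, using Lemma~\ref{lemma:propaggregate-a} to pass between $A$ and $\hat{A}$ and Lemma~\ref{conjdisj} to commute the reduct with conjunctions and disjunctions. The case $X\not\models A$ is immediate: by Lemma~\ref{lemma:propaggregate-a}, $\hat{A}$ is classically equivalent to $A$, so $X\not\models\hat{A}$ and hence $\hat{A}^X=\bot$ (the reduct replaces the whole unsatisfied formula by $\bot$); on the other hand $A^X=\bot$, and $\hat{B}=\bot$ for the extended aggregate $B=\bot$, so $\hat{A^X}=\bot$ as well.

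For the main case $X\models A$, note that $A^X$ is the aggregate $op\langle\{F_1^X=w_1,\dots,F_n^X=w_n\}\rangle\prec N$, so that $\hat{A^X}$ is formula~(\ref{mainaggr}) with each $F_i$ replaced by $F_i^X$. By Lemma~\ref{lemma:propaggregate-a} again $X\models\hat{A}$, so the reduct does not collapse the outer conjunction of $\hat{A}$, and $X$ satisfies every conjunctive term $H_I=\bigl(\bigwedge_{i\in I}F_i\bigr)\implies\bigl(\bigvee_{i\in\o I}F_i\bigr)$ that occurs in $\hat{A}$, i.e.\ those with $op(\{w_i:i\in I\})\not\prec N$. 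I would then reuse the observation from the proof of Lemma~\ref{lemma:propaggregate-a}: the unique $I\subseteq\{1,\dots,n\}$ with $X\not\models H_I$ is $I_X=\{i:X\models F_i\}$, and since $X\models A$ we have $op(\{w_i:i\in I_X\})\prec N$, so $I_X$ is not one of the indices of $\hat{A}$ --- which is exactly why $X\models H_I$ holds for every term of $\hat{A}$. For each such $I$, the implication clause of the reduct gives $H_I^X=\bigl(\bigwedge_{i\in I}F_i\bigr)^X\implies\bigl(\bigvee_{i\in\o I}F_i\bigr)^X$, which by Lemma~\ref{conjdisj} is classically equivalent to $\bigl(\bigwedge_{i\in I}F_i^X\bigr)\implies\bigl(\bigvee_{i\in\o I}F_i^X\bigr)$. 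One further application of Lemma~\ref{conjdisj}, now to the conjunction over~$I$, yields that $\hat{A}^X$ is classically equivalent to $\bigwedge_{I\,:\,op(\{w_i:i\in I\})\not\prec N}\bigl(\bigl(\bigwedge_{i\in I}F_i^X\bigr)\implies\bigl(\bigvee_{i\in\o I}F_i^X\bigr)\bigr)$, which is $\hat{A^X}$.

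The only point requiring care is the bookkeeping in the case $X\models A$: one has to check that $X$ satisfies \emph{every} conjunctive term actually occurring in $\hat{A}$, so that the implication clause of the reduct, rather than the ``otherwise'' clause, applies uniformly to all of them; this is precisely where the hypothesis $X\models A$, equivalently the exclusion of $I_X$ from the index set of $\hat{A}$, is used. Everything else is the two invocations of Lemma~\ref{conjdisj} above together with the recursive definition of the reduct, and requires no idea beyond those already present in the proof of Lemma~\ref{lemma:propaggregate-a}.
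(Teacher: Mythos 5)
Your proposal is correct and follows essentially the same route as the paper's proof: the same split on $X\models A$ versus $X\not\models A$, the same use of Lemma~\ref{lemma:propaggregate-a} to transfer satisfaction between $A$ and $\hat{A}$, and the same two applications of Lemma~\ref{conjdisj} to distribute the reduct over the outer conjunction and the inner conjunctions/disjunctions. The extra bookkeeping you supply about $I_X$ being excluded from the index set of $\hat{A}$ is a correct (if slightly more explicit) justification of the paper's remark that all implications in $\hat{A}$ are satisfied by $X$.
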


\begin{proof}
{\bf Case 1}: $X\not\models A$. Then $\hat{A^X} = \hat{\bot} = \bot$.
On the other hand, by Lemma~\ref{lemma:propaggregate-a},
$X\not \models \hat{A}$ so that $\hat{A}^X=\bot$ also.
{\bf Case 2}: $X\models A$. Then $A$ is an aggregate, and, by the definition
of a reduct, $\hat{A^X}$ is
\beq2
\bigwedge_{I\subseteq \{1,\dots,n\}~:~op(\{w_i~:~i\in I\})\not\prec N}
 \big( \big( \bigwedge_{i\in I} F_i^X \big) \implies
\big( \bigvee_{i\in \o I} F_i^X \big) \big).
\eeq2{aggrformreduct}
On the other hand, $\hat{A}^X$ is classically equivalent,
by Lemma~\ref{conjdisj}, to
$$
\bigwedge_{I\subseteq \{1,\dots,n\}~:~op(\{w_i~:~i\in I\})\not\prec N}
 \big( \big( \bigwedge_{i\in I} F_i \big) \implies
\big( \bigvee_{i\in \o I} F_i \big) \big)^X.
$$
Notice that, since $X\models \hat{A}$ by Lemma~\ref{lemma:propaggregate-a},
all implications in the formula above are satisfied by $X$.
Consequently, $\hat{A}^X$ is classically equivalent to
$$
\bigwedge_{I\subseteq \{1,\dots,n\}~:~op(\{w_i~:~i\in I\})\not\prec N}
 \big( \big( \bigwedge_{i\in I} F_i \big)^X \implies
\big( \bigvee_{i\in \o I} F_i \big)^X \big),
$$
and then, by Lemma~\ref{conjdisj} again, to~(\ref{aggrformreduct}).
\end{proof}

\noindent{\bf Proposition~\ref{propaggregate}.}
{\it
Let~$A$ be an aggregate of the form~(\ref{aggregate}) and let $G$ be the
corresponding formula~(\ref{mainaggr}). Then
\begin{enumerate}
\item[(a)]
$G$ is classically equivalent to $A$, and
\item[(b)]
for any set $X$ of atoms,
$G^X$ is classically equivalent to $A^X$.
\end{enumerate}
}

\begin{proof}
Part~(a) is immediate from Lemma~\ref{lemma:propaggregate-a}, as $G = \hat{A}$.
For part~(b), we need to show that $\hat{A}^X$  is classical equivalent to
$A^X$. By Lemma~\ref{lemma:aggrformreduct}, $\hat{A}^X$  is classically
equivalent to $\hat{A^X}$. It remains to notice that $\hat{A^X}$ is classically
equivalent to $A^X$ by Lemma~\ref{lemma:propaggregate-a}.
\end{proof}





\begin{lemma}
\label{satmon}
For any aggregate $op\langle  \{F_1=w_1,\dots, F_n=w_n\}\rangle\prec N$,
formula~(\ref{mainaggr}) is classically equivalent to
\beq2
\bigwedge_{I\subseteq \{1,\dots,n\}~:~op(\{w_i~:~i\in I\})\not\prec N}
 \big( \bigvee_{i\in \o I} F_i \big)
\eeq2{aggrmon}
if the aggregate is monotone, and to
$$
\bigwedge_{I\subseteq \{1,\dots,n\}~:~op(\{w_i~:~i\in I\})\not\prec N}
 \big( \neg \bigwedge_{i\in I} F_i \big)
$$
if the aggregate is antimonotone.
\end{lemma}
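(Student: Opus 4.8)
The plan is to prove the equivalence by fixing an arbitrary set $X$ of atoms and showing directly that $X$ satisfies~(\ref{mainaggr}) exactly when it satisfies the claimed simplified formula --- namely~(\ref{aggrmon}) in the monotone case, and the conjunction of the negations $\neg\bigwedge_{i\in I}F_i$ over the same index sets in the antimonotone case. The reformulation that does the work is supplied by Lemma~\ref{lemma:propaggregate-a}: since~(\ref{mainaggr}) is $\hat{A}$, it is classically equivalent to $A$, so, writing $J$ for the index set $\{i : X\models F_i\}$, we have $X\models(\ref{mainaggr})$ iff $X\models A$ iff $op(\{w_i : i\in J\})\prec N$. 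One direction of the equivalence is then immediate and uses no (anti)monotonicity: each conjunct $\bigvee_{i\in\o I}F_i$ of~(\ref{aggrmon}) entails the corresponding conjunct $(\bigwedge_{i\in I}F_i)\implies(\bigvee_{i\in\o I}F_i)$ of~(\ref{mainaggr}), since $B$ classically entails $A\implies B$; dually, each conjunct $\neg\bigwedge_{i\in I}F_i$ of the antimonotone formula entails the corresponding implication, since $\neg A$ classically entails $A\implies B$. Hence the simplified formula always entails~(\ref{mainaggr}).

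For the converse I would assume $X\models(\ref{mainaggr})$, i.e.\ $op(\{w_i : i\in J\})\prec N$, and take an arbitrary index set $I$ occurring in the conjunctions, i.e.\ with $op(\{w_i : i\in I\})\not\prec N$. In the monotone case I would argue that $J\not\subseteq I$: otherwise the weight multiset $\{w_i : i\in J\}$ would be a sub-multiset of $\{w_i : i\in I\}\subseteq\{w_1,\dots,w_n\}$, and monotonicity applied with $W_1=\{w_i:i\in J\}$ and $W_2=\{w_i:i\in I\}$ would force $op(\{w_i:i\in I\})\prec N$, contradicting the choice of $I$; therefore some $i\in\o I$ has $X\models F_i$, so $X\models\bigvee_{i\in\o I}F_i$. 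In the antimonotone case I would argue symmetrically that $I\not\subseteq J$: otherwise $\{w_i : i\in I\}$ would be a sub-multiset of $\{w_i : i\in J\}$, and antimonotonicity applied with $W_2=\{w_i:i\in I\}\subseteq W_1=\{w_i:i\in J\}$ would again force $op(\{w_i:i\in I\})\prec N$; therefore some $i\in I$ has $X\not\models F_i$, so $X\models\neg\bigwedge_{i\in I}F_i$. Since $I$ was arbitrary, $X$ satisfies the simplified formula, which finishes the equivalence.

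The only point that needs a moment of care is the bookkeeping with multisets rather than plain sets: when distinct indices $i\neq i'$ carry equal weights $w_i=w_{i'}$, one must note that $J\subseteq I$ still implies that the weight multiset indexed by $J$ is contained, with multiplicities, in the weight multiset indexed by $I$, which is exactly the form in which the (anti)monotonicity hypothesis is phrased. This is a one-line observation and I do not expect it to be a real obstacle; all of the substance of the lemma lies in the passage to $J$ via Lemma~\ref{lemma:propaggregate-a} together with the monotonicity of $op(\cdot)\prec N$.
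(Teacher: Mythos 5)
Your proof is correct, and for the nontrivial direction it takes a genuinely different route from the paper's. The paper argues syntactically: to derive a conjunct $\bigvee_{i\in\overline I}F_i$ of~(\ref{aggrmon}) from~(\ref{mainaggr}), it observes that monotonicity makes the family of ``bad'' index sets downward closed, so all implications indexed by subsets $I'\subseteq I$ occur as conjuncts of~(\ref{mainaggr}); it then combines these $2^{|I|}$ implications and checks that the disjunction of their strengthened antecedents is a tautology. You instead argue semantically and pointwise: fixing $X$ and its index set $J=\{i: X\models F_i\}$, you use Lemma~\ref{lemma:propaggregate-a} to reduce $X\models(\ref{mainaggr})$ to $op(\{w_i:i\in J\})\prec N$, and then a single containment argument ($J\not\subseteq I$ in the monotone case, $I\not\subseteq J$ in the antimonotone case) yields each conjunct of the simplified formula directly. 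Both proofs invoke (anti)monotonicity exactly once and share the same easy direction ($B\models A\implies B$ and $\neg A\models A\implies B$); since the statement concerns classical equivalence, your model-theoretic argument is entirely legitimate and is arguably shorter and more transparent than the paper's combination of conjunctive terms, at the cost of leaning on Lemma~\ref{lemma:propaggregate-a} (which is available, being proved earlier in the same section). Your closing remark on multiset bookkeeping is the right observation and is indeed all that is needed there.
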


\begin{proof}
Consider the case of a monotone aggregate first.
Let $G$ be~(\ref{mainaggr}), and $H$ be~(\ref{aggrmon}).
It is easy to verify that $H$ entails $G$. The opposite direction
remains. Assume $G$, and we want to derive every conjunctive term
\beq2
\bigvee_{i\in \o I} F_i
\eeq2{disjterm}
in~$H$. For every conjunctive term $D$ of the form~(\ref{disjterm}) in~$H$,
$op(\{w_i~:~i\in I\})\not\prec N$.
As the aggregate is monotone then, for every subset $I'$ of $I$,
$op(\{w_i~:~i\in I'\})\not\prec N$, so that the implication
$$
\big( \bigwedge_{i\in I'} F_i \big) \implies
\big( \bigvee_{i\in \o {I'}} F_i \big)
$$
is a conjunctive term of $H$ for all $I'\subseteq I$.
Then, since $\o{I'}=\o I\cup (I\setminus I')$,
(``$\Rightarrow$'' denotes entailment, and ``$\Leftrightarrow$''
equivalence)

\begin{equation*}
\begin{split}
H \Rightarrow&
~\bigwedge_{I'\subseteq I} \big(
\big( \bigwedge_{i\in I'} F_i \big) \implies
\big( \bigvee_{i\in \o {I'}} F_i \big)\big)\\\\
\Leftrightarrow&
~\bigwedge_{I'\subseteq I} \big(
\big( \big( \bigwedge_{i\in {I'}} F_i \big) \wedge
\bigwedge_{i\in I'\setminus I} \neg F_i \big)
\implies
\big( \bigvee_{i\in \o {I}} F_i \big)\big)\\\\
\Leftrightarrow&
~\big(
\bigvee_{I'\subseteq I}
\big( \big( \bigwedge_{i\in {I'}} F_i \big) \wedge
\bigwedge_{i\in I'\setminus I} \neg F_i \big)\big)
\implies D.\\\\
\end{split}
\end{equation*}

The antecedent of the implication is a tautology: for each interpretation
$X$, the disjunctive term relative to $I'=\{i\in I: X\models F_i\}$ is
satisfied by $X$. We can conclude that $H$ entails $D$.

The proof for antimonotone aggregates is similar.
\end{proof}

\noindent{\bf Proposition~\ref{monotone}.}
{\it
For any aggregate $op\langle  \{F_1=w_1,\dots, F_n=w_n\}\rangle\prec N$,
formula~(\ref{mainaggr}) is strongly equivalent to
$$
\bigwedge_{I\subseteq \{1,\dots,n\}~:~op(\{w_i~:~i\in I\})\not\prec N}
 \big( \bigvee_{i\in \o I} F_i \big)
$$
if the aggregate is monotone, and to
$$
\bigwedge_{I\subseteq \{1,\dots,n\}~:~op(\{w_i~:~i\in I\})\not\prec N}
 \big( \neg \bigwedge_{i\in I} F_i \big)
$$
if the aggregate is antimonotone.
}
\begin{proof}
Consider the case of a monotone aggregate first.
Let $G$ be~(\ref{mainaggr}), and $H$ be~(\ref{aggrmon}).
In view of Proposition~\ref{prop:se}, it is sufficient to show
that $G^X$ is equivalent to $H^X$ in classical logic for all sets $X$.
If $X\not\models H$ then also $X\not\models G$ by Lemma~\ref{satmon},
so that both reducts are $\bot$. Otherwise ($X\models H$), by the same
lemma, $X\models G$. Then, by Lemma~\ref{lemma:aggrformreduct}, $G^X$
is classically equivalent to~(\ref{aggrformreduct}).
On the other hand, it is easy to
verify, by applying Lemma~\ref{conjdisj} to $H^X$ twice, that
$H^X$ is classically equivalent to
$$
\bigwedge_{I\subseteq \{1,\dots,n\}~:~op(\{w_i~:~i\in I\})\not\prec N}
 \big( \bigvee_{i\in \o I} F_i^X \big).
$$
The claim now follows from Lemma~\ref{satmon}.

The reasoning for nonmonotone aggregates is similar.
\end{proof}

\subsection{Proof of Proposition~\ref{exprop2}}

Let $\Gamma$ be the theory consisting of formulas~(\ref{joe1})--(\ref{joe4}).

\begin{lemma}
\label{exprop2l1}
For any stable model $X$ of $\Gamma$, $X$ contains an atom $s_i$ iff
$X$ contains an atom $b_j$ such that bid $j$ involves selling object $i$.
\end{lemma}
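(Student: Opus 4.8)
The plan is to fix a stable model $X$ of $\Gamma$ and a fixed object index $i$, and prove the two directions of the equivalence separately, using only that a stable model is a classical model of its theory (Lemma~\ref{lemma:satsame}) together with the minimality half of the definition of a stable model.

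For the direction from right to left, suppose bid $j$ involves selling object $i$ and $b_j\in X$. Then the corresponding instance of~(\ref{joe3}), namely $b_j\implies s_i$, is one of the formulas of $\Gamma$. Since $X\models\Gamma$ and $b_j\in X$, we get $s_i\in X$. This direction is immediate.

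For the converse I would argue by contradiction. Suppose $s_i\in X$ but $b_j\notin X$ for \emph{every} bid $j$ that involves object $i$. I will show that $X'=X\setminus\{s_i\}$ satisfies $\Gamma^X$; since $X'$ is a proper subset of $X$, this contradicts the fact that $X$ is a minimal model of $\Gamma^X$. To verify $X'\models F^X$ for every $F\in\Gamma$, I split into cases according to the shape of $F$. For the formulas~(\ref{joe1}) and~(\ref{joe2}) the atom $s_i$ does not occur, so $F^X$ mentions only atoms on which $X$ and $X'$ agree; since $X\models F^X$ by Lemma~\ref{lemma:satsame}, also $X'\models F^X$. A rule~(\ref{joe3}) of the form $b_{j'}\implies s_{i'}$ with $i'\neq i$ is handled the same way; for $i'=i$, the assumption gives $b_{j'}\notin X$, so $X$ satisfies $b_{j'}\implies s_i$ vacuously and $(b_{j'}\implies s_i)^X$ reduces to $\bot\implies s_i$, which is classically $\top$ and hence satisfied by $X'$. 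Finally, for the aggregate~(\ref{joe4}): since $X$ satisfies~(\ref{joe4}), its reduct is the aggregate obtained by replacing each $b_k$ by $b_k^X$ and each $\neg s_k$ by $(\neg s_k)^X$, and a direct computation of which weighted terms of this reduct have their formula part satisfied by $X'$ yields the multiset consisting of $w_k$ for $b_k\in X$ together with $-c_\ell$ for $s_\ell\notin X$ --- which is exactly the multiset $W_X$ used to evaluate~(\ref{joe4}) at $X$. Since $X$ satisfies~(\ref{joe4}), the sum of this multiset is $\geq 0$, so $X'$ satisfies the reduct.

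The only step that needs genuine care --- and hence the main obstacle --- is the aggregate case. One must be precise with the reduct clause for aggregates from Section~\ref{sec:defaggr} and check the bookkeeping: deleting $s_i$ changes none of the truth values of the atoms $b_k$, nor the constants $\top$ and $\bot$, and the term $\neg s_i=-c_i$ has already become $\bot$ in $\Gamma^X$ (because $s_i\in X$), so it contributes to neither multiset; hence the multiset of satisfied weights at $X$ for~(\ref{joe4}) and the multiset of satisfied weights at $X'$ for its reduct coincide. Everything else is routine. Alternatively, this case could be run through the propositional form~(\ref{mainaggr}) together with Proposition~\ref{propaggregate}(b), but working directly with the aggregate reduct is shorter.
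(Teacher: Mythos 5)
Your proof is correct, but it takes a genuinely different route from the paper. The paper's proof first groups the rules~(\ref{joe3}) with a common head $s_i$ into a single defining implication for $s_i$ (with the bodies combined disjunctively --- the paper's displayed formula writes $\bigwedge$ where $\bigvee$ is clearly intended) and then invokes the Completion Lemma (Proposition~\ref{prop:compl}) to upgrade these implications to equivalences~(\ref{joe3c}) while preserving stable models; the biconditional in the statement is then read off from the fact that a stable model satisfies those equivalences. You instead argue directly from the definition: the right-to-left direction from $X\models\Gamma$, and the left-to-right direction by showing that if $s_i$ were in $X$ without a supporting $b_j$, then $X\setminus\{s_i\}$ would still satisfy $\Gamma^X$, contradicting minimality. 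Your case analysis is sound; in particular the delicate aggregate case is handled correctly --- since $s_i\in X$, the term $\neg s_i=-c_i$ has already been reduced to $\bot=-c_i$ in~(\ref{joe4})$^X$, so removing $s_i$ from the model does not resurrect that weight, and the multiset of satisfied weights is unchanged, so satisfaction of the reduct is inherited from $X\models$~(\ref{joe4}). The trade-off: the paper's argument is shorter and showcases how the general-purpose lemmas of Section~\ref{sec:pt-properties} transfer to theories with aggregates (which is part of the point of the example), but it quietly relies on checking the Completion Lemma's occurrence conditions against the propositional translation of the aggregate; your argument is more elementary and self-contained, needing only Lemma~\ref{lemma:satsame} and the definition of the reduct, at the cost of an explicit case-by-case verification.
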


\begin{proof}
Consider $\Gamma$ as a propositional theory. We notice that
\begin{itemize}
\item
formulas~(\ref{joe3}) can be strongly equivalently grouped
as $m$ formulas ($i=1,\dots,m)$
$$
\big( \bigwedge_{j=1,\dots,n: \text{ object $i$ is part of bid $j$}} b_j\big)
\implies s_i,
$$
and
\item
no other formula of $\Gamma$ contains atoms of the form $s_i$ outside
the scope of negation.
\end{itemize}
Consequently, by the Completion Lemma (Proposition~\ref{prop:compl}),
formulas~(\ref{joe3}) in $\Gamma$ can be replaced
by $m$ formulas ($i=1,\dots,m)$
\beq2
\big( \bigwedge_{j=1,\dots,n: \text{ object $i$ is part of bid $j$}}
b_j\big)\eq s_i.
\eeq2{joe3c}
preserving the stable models. It follows that every stable model of $\Gamma$
must satisfy formulas~(\ref{joe3c}), and the claim immediately follows.
\end{proof}

\noindent{\bf Proposition~\ref{exprop2}.}
{\it
$X\mapsto \{i: b_i\in X\}$ is a 1--1 correspondence between the
stable models of the theory consisting of formulas~(\ref{joe1})--(\ref{joe4})
and a solution to Joe's problem.
}

\begin{proof}
Take any stable model $X$ of $\Gamma$. Since $X$ satisfies
rules~(\ref{joe2}) of $\Gamma$, condition~(a) is satisfied.
Condition~(b) is satisfies as well, because $X$ contains exactly
all atoms $s_i$ sold in some bids by Lemma~\ref{exprop2l1},
and since $X$ satisfies aggregate~(\ref{joe4}) that belongs to $\Gamma$.

Now consider a solution of Joe's problem. This determines which
atoms of the form $b_i$ belongs to a possible
corresponding stable model $X$. Consequently, Lemma~\ref{exprop2l1}
determines also which atoms of the form $s_j$ belong to $X$, reducing
the candidate stable models $X$ to one.
We need to show that this $X$ is indeed a stable model of $\Gamma$.
The reduct $\Gamma^X$ consists of (after a few simplifications)
\begin{enumerate}
\item[(i)]
all atoms $b_i$ that belong to $X$ (from~(\ref{joe1})),
\item[(ii)]
$\top$ from~(\ref{joe2}) since~(a) holds,
\item[(iii)]
(by Lemma~\ref{exprop2l1})
implications~(\ref{joe3}) such that both $b_j$ and $s_i$ belong to $X$,
and
\item[(iv)]
the reduct of~(\ref{joe4}) relative to $X$.
\end{enumerate}
Notice that~(i)--(iii) together are equivalent to $X$, so that every
every proper subset of $X$ doesn't satisfy $\Gamma^X$. It remains to show
that $X\models \Gamma^X$. Clearly, $X$ satisfies (i)--(iii). To show that
$X$ satisfies~(iv) it is sufficient, by Lemma~\ref{lemma:satsame}
(consider~(\ref{joe4}) as a propositional formula), to show
that $X$ satisfies~(\ref{joe4}): it does that by hypothesis~(b).
\end{proof}

\subsection{Proof of Propositions~\ref{prop:complexaggr} and~\ref{prop:complexmonaggr}}

\begin{lemma}\label{lemma:complexaggr}
If, for every aggregate, computing $op(W)\prec N$ requires polynomial time
then
\begin{enumerate}
\item[(a)]
checking satisfaction of a theory with aggregates requires polynomial time,
and
\item[(b)]
computing the reduct of a theory with aggregates requires polynomial
time.
\end{enumerate}
\end{lemma}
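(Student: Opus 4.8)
The plan is to prove both parts by a single structural recursion on formulas with aggregates, invoking the hypothesis only at the nodes that are aggregates. For part~(a), deciding $X\models F$ amounts to computing, bottom-up, the truth value under $X$ of every subformula of~$F$. A formula with aggregates has only polynomially many subformulas (in its length), and the work at each subformula is one of: a table lookup (for an atom); a constant-time Boolean operation (for $\wedge$, $\vee$, $\implies$, or $\bot$); or, for an aggregate $op\langle\{F_1=w_1,\dots,F_n=w_n\}\rangle\prec N$, forming the multiset $W_X=\{w_i : X\models F_i\}$ from the already-computed truth values of $F_1,\dots,F_n$ and evaluating $op(W_X)\prec N$. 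The first two kinds of step are obviously polynomial, and the third is polynomial by the hypothesis on $op$ and $\prec$. Summing over all subformulas, and then over the finitely many formulas of $\Gamma$, gives a polynomial bound, which is~(a).

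For part~(b), I would reuse that computation. One bottom-up pass records, for every subformula $G$ of every $F\in\Gamma$, whether $X\models G$; by~(a) this costs polynomial time. The reduct $F^X$ is then produced by the purely syntactic rule that replaces every maximal subformula not satisfied by $X$ with $\bot$: given the recorded flags, this is a single top-down traversal of the parse tree of~$F$, keeping the connective at each satisfied node (and, at a satisfied aggregate, keeping $op$, $\prec$, $N$, and the weights while recursing into the $F_i$ to build $F_i^X$), and emitting $\bot$ without descending further at the first unsatisfied node. Since no clause of the reduct ever enlarges a subformula, $|F^X|\leq|F|$, so both the running time and the size of the output are polynomial; the reduct $\Gamma^X$ is the union over $F\in\Gamma$, hence also computable in polynomial time, which is~(b).

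There is essentially no hard step. The only points needing care are that the recursion visits each subformula once, so the per-node polynomial bounds add rather than compound, and that the hypothesis ``$op(W)\prec N$ computable in polynomial time'' is precisely what keeps the aggregate nodes from being bottlenecks --- the rest is the standard polynomial-time evaluation of a propositional formula together with the standard syntactic definition of the reduct. As always, this presupposes the convention (noted in an earlier footnote) that weights and bounds come from a suitably encoded countable subset of the reals, so that ``polynomial time'' for $op$ and $\prec$ is meaningful.
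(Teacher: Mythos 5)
Your proof is correct and follows essentially the same route as the paper: satisfaction is checked by a bottom-up structural induction that invokes the polynomial-time hypothesis at aggregate nodes, and the reduct is then computed by combining these satisfaction checks over the (linearly many) subformulas. Your version merely spells out the details (the memoized flags, the top-down emission of $\bot$, the size bound $|F^X|\leq|F|$, and the encoding caveat for weights) that the paper leaves implicit.
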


\begin{proof}
Part~(a) is easy to verify by structural induction. Computing the
reduct essentially consists of checking satisfaction of subexpressions of
each formula of the theory. Each check doesn't require too much time by~(a).
It remains to notice that each formula with aggregates has a linear number
of subformulas.
\end{proof}

\noindent{\bf Proposition~\ref{prop:complexaggr}.}
{\it
If, for every aggregate, computing $op(W)\prec N$ requires polynomial time
then the existence of a stable model of a theory with aggregates is a
$\Sigma_2^P$-complete problem.
}

\begin{proof}
Hardness follows from the fact that theories with aggregates
are a generalization of theories without aggregates.
To prove inclusion, consider that the existence of a stable model of
a theory $\Gamma$ is equivalent to satisfiability of:
$$
\text{
{\bf exists} $X$ such that {\bf for all} $Y$,
if $Y\subseteq X$ then $Y\models \Gamma^X$ iff $X=Y$
}
$$
It remains to notice that, in view of Lemma~\ref{lemma:complexaggr}, checking
(for any $X$ and $Y$)
$$
\text{
if $Y\subseteq X$ then $Y\models \Gamma^X$ iff $X=Y$
}
$$
requires polynomial time.
\end{proof}

\begin{lemma}\label{lemma:complexmonaggr1}
Let $F$ be a formula with aggregates containing monotone and
antimonotone aggregates only, no equivalences and no implications other
than negations. For any  sets $X$, $Y$ and $Z$ such that $Y\subseteq Z$,
if $Y\models F^X$ then $Z\models F^X$.
\end{lemma}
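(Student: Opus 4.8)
The plan is a structural induction on $F$. I first observe that every subformula $G$ of $F$ again contains only monotone/antimonotone aggregates, no equivalences, and no implications other than negations, so the induction hypothesis may be taken to be the statement of the lemma for every such $G$. Since the reduct $F^X$ equals $\bot$ whenever $X\not\models F$, and $\bot$ is satisfied by no set, I may assume throughout that $X\models F$.

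The easy cases are quick. If $F=\bot$ the claim is vacuous. If $F$ is an atom $a$, then $F^X=a$ and $Y\models a$ means $a\in Y\subseteq Z$. If $F=\neg G$ — which, under the hypotheses, is the only form an implication subformula can take — then $F^X$ is either $\bot$ or $\bot\implies\bot$, and the claim is immediate since every set satisfies $\bot\implies\bot$. If $F=G\wedge H$ or $F=G\vee H$, then, using $X\models F$, the reduct is $G^X\wedge H^X$ or $G^X\vee H^X$, and the claim follows directly from the induction hypothesis applied to $G$ and $H$, since satisfaction of a conjunction or disjunction is preserved when the assignment is enlarged on each side.

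The substantive case is $F$ an aggregate $A=op\langle\{F_1=w_1,\dots,F_n=w_n\}\rangle\prec N$, monotone or antimonotone. Because $X\models A$, the reduct is $A^X=op\langle\{F_1^X=w_1,\dots,F_n^X=w_n\}\rangle\prec N$, which is again monotone (resp.\ antimonotone), as that property depends only on $op$, $\prec$, $N$ and the weights. For a set $V$ write $W_V=\{w_i : V\models F_i^X\}$, a sub-multiset of $\{w_1,\dots,w_n\}$. Two facts drive the argument: (1) since the recursive definition of the reduct collapses every subformula not satisfied by $X$ to $\bot$, we have $F_i^X=\bot$ whenever $X\not\models F_i$, hence $V\models F_i^X$ forces $X\models F_i$, so $W_V\subseteq W_X$ for every $V$; and (2) the induction hypothesis applied to each $F_i$ gives $Y\models F_i^X\Rightarrow Z\models F_i^X$, hence $W_Y\subseteq W_Z$. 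Now $Y\models A^X$ means $op(W_Y)\prec N$, and $X\models A$ means $op(W_X)\prec N$. If $A$ is monotone, then $W_Y\subseteq W_Z\subseteq\{w_1,\dots,w_n\}$ together with $op(W_Y)\prec N$ yields $op(W_Z)\prec N$; if $A$ is antimonotone, then $W_Z\subseteq W_X\subseteq\{w_1,\dots,w_n\}$ together with $op(W_X)\prec N$ yields $op(W_Z)\prec N$. Either way $Z\models A^X$, completing the induction.

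I expect the only real obstacle to be the antimonotone subcase: monotonicity of the aggregate there runs in the ``wrong'' direction, and the resolution is to route the argument through the hypothesis $X\models A$ and the containment $W_Z\subseteq W_X$ coming from the $\bot$-collapsing behaviour of the reduct (fact (1) above), rather than trying to compare $W_Y$ and $W_Z$ directly. Everything else is routine verification against the recursive definition of the reduct and the observation that the syntactic restrictions are inherited by subformulas.
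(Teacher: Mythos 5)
Your proof is correct, but it follows a genuinely different route from the paper's. The paper does not induct on $F$ at all: it replaces each monotone aggregate in $F$ by the formula~(\ref{monaggr}) and each antimonotone one by~(\ref{antimonaggr}), observes that the result $G$ is a nested expression (hence all negative occurrences of atoms lie in the scope of negation), invokes Lemma~\ref{weaklyposneg} to get persistence of $G^X$ under enlarging the model, and transfers the conclusion back to $F^X$ via Propositions~\ref{monotone} and~\ref{propaggregate}. You instead treat aggregates as primitive constructs and run a direct structural induction, and your handling of the aggregate case is sound: the key observations --- that $W_V\subseteq W_X$ for every $V$ because the reduct collapses each $F_i$ with $X\not\models F_i$ to $\bot$, and that $W_Y\subseteq W_Z$ by the induction hypothesis on the $F_i$ --- let you close the monotone subcase via $W_Y\subseteq W_Z$ and the antimonotone subcase via $W_Z\subseteq W_X$ together with $X\models A$ (correctly noting that $Y\models A^X$ is not even needed there, which reflects the fact that the reduct of a satisfied antimonotone aggregate is satisfied by every set, just as reducts of negations are). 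The paper's argument is shorter because it reuses existing machinery, but it routes through a translation that is exponential in the size of the aggregate and through a persistence lemma stated for a different purpose; your argument is self-contained, stays entirely at the level of the aggregate semantics, and makes the role of (anti)monotonicity explicit. Both are valid; yours arguably gives more insight into why the lemma is true.
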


\begin{proof}
Let $G$ be $F$ with each monotone aggregate replaced by~(\ref{monaggr})
and each antimonotone aggregate replaced by~(\ref{antimonaggr}).
It is easy to verify that $G$ is a nested expression. Nested expressions
have all negative occurrences of atoms in the scope of negation,
so if $Y\models G^X$ then $Z\models G^X$ by Lemma~(\ref{weaklyposneg}).
It remains to notice that $F^X$ and $G^X$ are satisfied by the same sets
of atoms by Propositions~\ref{monotone} and~\ref{propaggregate}.
\end{proof}

\noindent{\bf Proposition~\ref{prop:complexmonaggr}.}
{\it
Consider theories with aggregates consisting of formulas of the form
\beq2
F\implies a,
\eeq2{complexmonaggr}
where $a$ is an atom or $\bot$, and $F$ contains monotone and
antimonotone aggregates only, no equivalences and no implications other than
negations.
If, for every aggregate, computing $op(W)\prec N$ requires polynomial time
then the problem of the existence of a stable model of theories of this
kind is an NP-complete problem.
}

\begin{figure}
\begin{tabular}l
{\bf function} {\tt verifyAS}$(\Gamma,X\}$\\
$\qquad${\bf if $X\not\models \Gamma$ then return false}\\
$\qquad \Delta:=\{F^X\implies a~:~F\implies a\in \Gamma\text{ and } X\models a\}$\\
$\qquad Y:=\emptyset$\\
$\qquad${\bf while} there is a formula $G\implies a\in \Delta$ such that
   $Y\models G$ and $a\not\in Y$\\
$\qquad \qquad Y:=Y\cup\{a\}$\\
$\qquad${\bf end while}\\
$\qquad${\bf if $Y=X$ then return true}\\
$\qquad${\bf return false}\\
\end{tabular}
\caption{A polynomial-time algorithm that checks stable models of special
kinds of theories}
\label{fig:findmodel}
\end{figure}

\begin{proof}
NP-hardness follows from the fact that theories with aggregates are
a generalization of traditional programs, for which the same problem
is NP-complete. For inclusion in NP,
it is sufficient to show that the time required to check if a 
set $X$ of atoms is a stable model of $\Gamma$ is polynomial.
An algorithm that does this test is in Figure~\ref{fig:findmodel}.
It is easy to verify that it is a polynomial time algorithm.
It remains to prove that it is correct. If $X\not\models \Gamma$ then
it is trivial. Now assume that $X\models \Gamma$.
It is sufficient to show that
\begin{enumerate}
\item[(a)]
$\Delta$ is classically equivalent to $\Gamma^X$, and
\item[(b)]
the last value of $Y$ (we call it $Z$) is the unique minimal model of
$\Delta$.
\end{enumerate}
Indeed, for part~(a), we notice that, since
$X\models \Gamma$, $\Gamma^X$ is
$$
\{F^X\implies a^X~:~F\implies a\in \Gamma\text{ and } X\models a\}\cup
\{F^X\implies a^X~:~F\implies a\in \Gamma\text{ and } X\not\models a\}.
$$
The first set is $\Delta$. The second set (which includes the case
in which $a=\bot$) is a set of $\bot\implies\bot$.
Indeed, each $a^X=\bot$, and since $X\models \Gamma$, $X$ doesn't satisfy
any $F$ and then $F^X=\bot$.

For part~(b) it is easy to verify that the {\bf while}
loop iterates as long as $Y\not\models\Delta$, so that $Z\models\Delta$.
Now assume, in sake of contradiction, that there is a set $Z'$
that satisfies $\Delta$ and that is not a superset of $Z$.
Consider, in the execution of the algorithm, the first atom $a\not\in Z'$
added to $Y$, and that value of $Y\subseteq Z'$ to which $a$ has been added
to. This means that $\Delta$ contains a formula $G\implies a$ such
that $Y\models G$.
Recall that $G$ stands for a formula of the form $F^X$, where $F$
is a formula with aggregates with monotone and antimonotone aggregates
only and without implications (other than negations) or equivalences.
Consequently, by Lemma~\ref{lemma:complexmonaggr1}, $Z'\models G$. On the
other hand, $a\not\in Z'$, so $Z'\not\models G\implies a$, contradicting
the hypothesis that $Z'$ is a model of $\Delta$.
\end{proof}

\subsection{Proof of Proposition~\ref{th:weight}}

\begin{lemma}
Let $F$ and $G$ two propositional formulas, and let $F'$ and $G'$ the result
of replacing each occurrence of an atom $a$ in $F$ and $G$ with a propositional
formula $H$. If $F$ and $G$ are strongly equivalent to
each other then $F'$ and $G'$ are strongly equivalent to each other.
\label{nestedschema1}
\end{lemma}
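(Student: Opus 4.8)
The plan is to reduce the statement to the corresponding substitution property for the logic of here-and-there, and then verify that property by a short structural induction. By Proposition~\ref{prop:se} (equivalence of (i) and (ii)), $F$ is strongly equivalent to $G$ iff $F$ and $G$ are equivalent in the logic of here-and-there, and the same holds for $F'$ and $G'$; so it suffices to prove: if $F$ and $G$ are equivalent in the logic of here-and-there, then so are $F'$ and $G'$, where $F'$ and $G'$ are obtained by uniformly replacing the atom $a$ by the fixed formula $H$.

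To this end, fix an arbitrary HT-interpretation $(X,Y)$. The idea is to transfer the substitution from the formula to the interpretation: let $X^{*}$ agree with $X$ on every atom other than $a$ and satisfy $a\in X^{*}$ iff $(X,Y)\models H$, and let $Y^{*}$ agree with $Y$ on every atom other than $a$ and satisfy $a\in Y^{*}$ iff $Y\models H$ in classical logic. First I would check that $(X^{*},Y^{*})$ is again an HT-interpretation, i.e. that $X^{*}\subseteq Y^{*}$: for atoms other than $a$ this is inherited from $X\subseteq Y$, and for $a$ it reduces to the persistence property ``$(X,Y)\models H$ implies $Y\models H$'', which is immediate from Proposition~\ref{lemma:mainlemma} (if $Y\not\models H$ then $H^{Y}=\bot$, so $X\not\models H^{Y}$, so $(X,Y)\not\models H$).

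The heart of the argument is then the following pair of claims, for every formula $F$. \emph{Classical claim:} for every set $Y$ of atoms, $Y\models F'$ iff $Y^{*}\models F$, where $Y^{*}$ is built from $Y$ and $H$ as above using classical satisfaction. \emph{HT claim:} for every HT-interpretation $(X,Y)$, $(X,Y)\models F'$ iff $(X^{*},Y^{*})\models F$. Each is proved by a separate structural induction on $F$. In the base case $F=a$ we have $F'=H$, and the equivalence is the definition of $X^{*}$ (resp. $Y^{*}$); for an atom $b\ne a$ and for $\bot$ it is trivial; the cases $\wedge$ and $\vee$ are immediate from the induction hypothesis. The only case needing care is $F=F_{1}\implies F_{2}$, where $F'=F_{1}'\implies F_{2}'$: for the classical claim this is routine, and for the HT claim one unfolds the definition of $(X,Y)\models F_{1}'\implies F_{2}'$ --- namely ``$(X,Y)\models F_{1}'$ implies $(X,Y)\models F_{2}'$, and $Y\models F_{1}'\implies F_{2}'$ in classical logic'' --- rewrites the first conjunct using the HT induction hypothesis for $F_{1}$ and $F_{2}$, and rewrites the second conjunct using the \emph{classical} claim applied to $F_{1}\implies F_{2}$; this yields exactly the defining condition for $(X^{*},Y^{*})\models F_{1}\implies F_{2}$. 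It is precisely to keep this side condition clean that the classical claim is established first.

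Finally, assuming $F$ and $G$ are equivalent in the logic of here-and-there, for any $(X,Y)$ we obtain $(X,Y)\models F'$ iff $(X^{*},Y^{*})\models F$ iff $(X^{*},Y^{*})\models G$ iff $(X,Y)\models G'$, using the HT claim twice; hence $F'$ and $G'$ are equivalent in the logic of here-and-there, and by Proposition~\ref{prop:se} they are strongly equivalent. I do not anticipate any real obstacle: the argument is entirely routine once $(X^{*},Y^{*})$ is seen to be well defined and the implication case is treated via the auxiliary classical claim.
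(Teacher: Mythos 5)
Your proof is correct and follows essentially the same route as the paper: the paper's own proof also reduces the claim via Proposition~\ref{prop:se} to the statement that substitution of $H$ for $a$ preserves equivalence in the logic of here-and-there, but it simply asserts that statement as a known fact, whereas you supply the structural induction (with the transfer of the substitution into the interpretation $(X^{*},Y^{*})$ and the auxiliary classical claim) that justifies it. The extra detail is sound, including the persistence check that $X^{*}\subseteq Y^{*}$.
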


\begin{proof}
It follows from Proposition~\ref{prop:se}, in view of the following fact:
if $F$ and $G$ are equivalent in the logic of here-and-there to
each other then $F'$ and $G'$ are equivalent 
in the logic of here-and-there to each other.
\end{proof}

\begin{lemma}
\label{nestedschema2}
Let $F$ and $G$ be two propositional formulas that are
AND-OR combinations of $\top$, $\bot$ and atoms only.
If $F$ and $G$ are classically equivalent to each other then
they are strongly equivalent to each other also.
\end{lemma}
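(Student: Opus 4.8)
The plan is to reduce everything to the reduct characterization of strong equivalence supplied by Proposition~\ref{prop:se}, namely the equivalence of conditions (i) and (iii): $F$ is strongly equivalent to $G$ if and only if $F^X$ is classically equivalent to $G^X$ for every set $X$ of atoms. Thus it suffices to fix an arbitrary set $X$ of atoms and show that $F^X$ and $G^X$ are classically equivalent, using only the hypothesis that $F$ and $G$ are.

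The first step is to obtain a closed description of the reduct of an AND-OR combination of $\top$, $\bot$ and atoms. Such a formula is a nested expression with no occurrences of negation, so its nested-expressions reduct satisfies $F^{\un X}=F$, and Lemma~\ref{prop1lemma} then says that $F^X$ is classically equivalent to the formula obtained from $F$ by replacing every atom not in $X$ with $\bot$. (If one prefers a self-contained argument, the same fact follows by a short structural induction on $F$ using Lemma~\ref{conjdisj}: for an atom $a$, $a^X$ is $a$ if $a\in X$ and $\bot$ otherwise; for the binary cases, $(F_1\otimes F_2)^X$ is classically equivalent to $F_1^X\otimes F_2^X$; and $\top^X=\top$, $\bot^X=\bot$.)

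The second step is the observation that classical equivalence is preserved under uniform substitution of atoms by propositional formulas: if $F$ and $G$ are classically equivalent and each atom is assigned a fixed propositional formula, then the resulting substituted formulas are again classically equivalent (each valuation of the atoms induces a valuation under which the substituted formula has the same truth value as the original). The operation ``replace each atom not in $X$ by $\bot$'' is precisely such a uniform substitution (it sends each atom either to itself or to $\bot$), so from $F\equiv G$ (classically) we get that the two substituted formulas are classically equivalent, and hence, by the first step, $F^X\equiv G^X$ classically. Since $X$ was arbitrary, condition~(iii) of Proposition~\ref{prop:se} holds, and therefore $F$ and $G$ are strongly equivalent.

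I do not expect a real obstacle here. The only points needing a little care are (a) verifying that AND-OR combinations of $\top$, $\bot$ and atoms have trivial negation-reducts, so that $F^{\un X}=F$ and Lemma~\ref{prop1lemma} applies (equivalently, running the two-case induction with Lemma~\ref{conjdisj}), and (b) phrasing the uniform-substitution invariance of classical equivalence precisely enough that it visibly covers the substitution used. Everything else is a direct appeal to Proposition~\ref{prop:se}.
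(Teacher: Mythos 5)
Your proof is correct and follows essentially the same route as the paper's: both reduce the claim to condition~(iii) of Proposition~\ref{prop:se}, use Lemma~\ref{conjdisj} to show that $F^X$ is classically equivalent to $F$ with atoms outside $X$ replaced by $\bot$, and then invoke the classical equivalence of $F$ and $G$ under that substitution. The only difference is that you make the substitution-invariance step and the alternative derivation via Lemma~\ref{prop1lemma} explicit, which the paper leaves implicit.
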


\begin{proof}
In view of Proposition~\ref{prop:se}, it is sufficient to show
that, for every set $X$ of atoms, $F^X$ is classically equivalent
to $G^X$. By Lemma~\ref{conjdisj} we can distribute the reduct operator
in $F^X$ to its atoms. If follows that $F^X$ is classically
equivalent to $F$ with all occurrences of atoms that don't belong to $X$
replaced by $\bot$, and similarly for $G^X$. The fact that
$F^X$ is classically equivalent to $G^X$ now follows from
the classical equivalence between $F$ and $G$.
\end{proof}

Next Lemma immediately follows from our definition of satisfaction of
aggregates (Section~\ref{sec:defaggr} of this paper), and
the definition of $[L\leq S]$ and $[S\leq U]$ and Proposition~1
from~\cite{fer05b}.

\begin{lemma}
For every weight constraints $L\leq S$ and $S\leq U$ and any 
set $X$ of atoms,
\begin{enumerate}
\item[(a)]
$X\models [L\leq S]$ iff
$X\models sum\langle S\rangle\geq L$, and
\item[(b)]
$X\models [S\leq U]$ iff
$X\models sum\langle S\rangle\leq U$.
\end{enumerate}
\label{lemma:classiceq}
\end{lemma}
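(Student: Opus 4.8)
The plan is to evaluate each side of both equivalences directly in terms of the set of indices whose literals are true under $X$, and then to compare them. Throughout write $S=\{l_1=w_1,\dots,l_m=w_m\}$ and let $I_X=\{i:X\models l_i\}$. By the definition of satisfaction of an aggregate (Section~\ref{sec:defaggr}), the multiset $W_X$ associated with $sum\langle S\rangle$ is exactly $\{w_i:i\in I_X\}$, so $X\models sum\langle S\rangle\geq L$ iff $\sum_{i\in I_X}w_i\geq L$, and $X\models sum\langle S\rangle\leq U$ iff $\sum_{i\in I_X}w_i\leq U$.

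Next I would unwind the two translations. Since $[L\leq S]$ is the disjunction $\bigvee_{I\,:\,L\leq\sum_{i\in I}w_i}\big(\bigwedge_{i\in I}l_i\big)$ and $X\models\bigwedge_{i\in I}l_i$ holds iff $I\subseteq I_X$, we obtain that $X\models[L\leq S]$ iff there exists $I\subseteq I_X$ with $\sum_{i\in I}w_i\geq L$. Dually, since $[S\leq U]$ is $\neg\bigvee_{I\,:\,U<\sum_{i\in I}w_i}\big(\bigwedge_{i\in I}l_i\big)$, we obtain that $X\models[S\leq U]$ iff every $I\subseteq I_X$ satisfies $\sum_{i\in I}w_i\leq U$.

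The crux is to reconcile these subset conditions with the single condition on the full set $I_X$ from the first paragraph. This is exactly where the standing nonnegativity of the weights (the hypothesis of Proposition~\ref{th:weight}, under which the translations $[\,\cdot\,]$ are used) enters: when all $w_i\geq 0$, the sum $\sum_{i\in I}w_i$ is maximized over $I\subseteq I_X$ at $I=I_X$. Hence ``there exists $I\subseteq I_X$ with $\sum_{i\in I}w_i\geq L$'' is equivalent to ``$\sum_{i\in I_X}w_i\geq L$'', which gives part~(a); and ``every $I\subseteq I_X$ satisfies $\sum_{i\in I}w_i\leq U$'' is equivalent to ``$\sum_{i\in I_X}w_i\leq U$'', which gives part~(b). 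Combining each with the corresponding aggregate computation of the first paragraph yields both equivalences.

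I expect the only genuinely delicate point to be this appeal to nonnegativity; everything else is a routine unwinding of definitions. In fact, this extremal observation is precisely the content of Proposition~1 of~\cite{fer05b}, which identifies $X\models[C]$ with satisfaction of the weight constraint $C$ in its intended (nonnegative-weight) semantics. One may therefore alternatively invoke that result in place of the third paragraph, leaving only the trivial remark that the intended semantics of a nonnegative-weight constraint coincides with the $sum$ semantics of an aggregate from Section~\ref{sec:defaggr}. For completeness I would note that nonnegativity is essential: for negative weights the maximizing subset need not be $I_X$, so the equivalence fails, consistent with the negative-weight phenomena discussed earlier in the paper.
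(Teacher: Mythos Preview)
Your proposal is correct and aligns with the paper's own treatment: the paper does not give a proof body for this lemma at all, but simply states (in the sentence immediately preceding the lemma) that it ``immediately follows from our definition of satisfaction of aggregates \ldots\ and the definition of $[L\leq S]$ and $[S\leq U]$ and Proposition~1 from~\cite{fer05b}.'' Your argument is precisely an unpacking of that sentence---you compute both sides in terms of $I_X$, and your third paragraph (the extremal argument using nonnegativity) is exactly the content of Proposition~1 of~\cite{fer05b}, as you yourself note. So the approach is the same; you have simply written out what the paper leaves implicit, and you are right to flag that the nonnegativity assumption (which the lemma statement omits but the surrounding context imposes on the translations $[\,\cdot\,]$) is where the work happens.
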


\noindent{\bf Proposition~\ref{th:weight}.}
{\it
In presence of nonnegative weights only,
$[N\leq S]$ is strongly equivalent to 
$sum\langle S\rangle\geq N$,
and $[S\leq N]$ is strongly equivalent to $sum\langle S\rangle\leq N$.
}

\begin{proof}
We start with~(a), with the special case when rule elements $F_1,\dots,F_n$
of $S$ are distinct atoms. Since the aggregate is monotone then, by
Lemma~\ref{monotone}, we just need to show that $[N\leq S]$ is strongly
equivalent to~(\ref{monaggr}). As classical equivalence holds between
$[N\leq S]$ and $sum\langle S\rangle\geq N$ by Lemma~\ref{lemma:classiceq},
the same relationship holds
between $[N\leq S]$ and~(\ref{monaggr}). As both formulas are AND-OR
combinations of atoms, the claim follows by Lemma~\ref{nestedschema2}.
The most general case of~(a) follows from the special case, by
Lemma~\ref{nestedschema1}.

For part~(b), we know, by Lemma~\ref{monotone}, that antimonotone
aggregate
$sum\langle S\rangle\leq U$ (written as a formula~(\ref{aggregate}))
is strongly equivalent to formula
$$
\bigwedge_{I\subseteq \{1,\dots,n\}~:~\sum_{i\in I} w_i> U}
\big( \neg \bigwedge_{i\in I} F_i\big).
$$
By applying DeMorgan's law to this last formula (which preserves equivalence
in the logic of here-and-there and then it is a strongly equivalent
transformation by Proposition~\ref{prop:se}) we get $S\leq U$.
\end{proof}

\subsection{Proof of Proposition~\ref{prop:pelov}}

Given a PDB-aggregate of the form~(\ref{aggregate}) and a 
set $X$ of literals, by $I_X$ we denote the set
$\{i\in\{1,\dots,n\}~:~X\models F_i\}$.

\begin{lemma}
\label{lemma-pelov1}
For each PDB-aggregate of the form~(\ref{aggregate}),
a set $X$ of atoms satisfies a formula of the form $G_{(I_1,I_2)}$
iff $I_1\subseteq I_X\subseteq I_2$.
\end{lemma}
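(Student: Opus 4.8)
The plan is to unfold both definitions directly and match conditions. Recall that $G_{(I_1,I_2)}$ stands for the conjunction of the literals $l_i$ for $i\in I_1$ together with the complementary literals $\o{l_i}$ for $i\in\{1,\dots,n\}\setminus I_2$, and that $I_X=\{i\in\{1,\dots,n\}:X\models F_i\}=\{i:X\models l_i\}$ for a PDB-aggregate. Hence $X\models G_{(I_1,I_2)}$ holds iff $X\models l_i$ for every $i\in I_1$ and $X\models\o{l_i}$ for every $i\in\{1,\dots,n\}\setminus I_2$.

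First I would record the elementary fact that, for a literal $l_i$, we have $X\models\o{l_i}$ iff $X\not\models l_i$: this is immediate from the conventions of the paper, since $\o{l_i}$ is $\neg a$ when $l_i$ is an atom $a$, and is $a$ when $l_i$ is $\neg a$, and in both cases satisfaction by a set of atoms is complementary. Using this, the condition ``$X\models l_i$ for all $i\in I_1$'' translates to $I_1\subseteq I_X$, and the condition ``$X\models\o{l_i}$ for all $i\notin I_2$'' translates to ``$i\notin I_X$ for all $i\notin I_2$'', that is, $I_X\subseteq I_2$. Conjoining the two gives exactly $I_1\subseteq I_X\subseteq I_2$, as claimed.

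There is no genuine obstacle here; the statement is a bookkeeping exercise relating membership in $I_X$ to satisfaction of the two families of literals making up $G_{(I_1,I_2)}$. The only point meriting a word of care is the equivalence $X\models\o{l_i}$ iff $X\not\models l_i$, which relies on the paper's stated convention that strong negation is not treated as a separate construct, so that complementary literals behave classically with respect to sets of atoms; I would state this explicitly so the reader sees where it is used.
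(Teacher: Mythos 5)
Your proof is correct and follows essentially the same route as the paper's: both unfold the definition of $G_{(I_1,I_2)}$ into the two families of conditions ``$X\models l_i$ for $i\in I_1$'' and ``$X\not\models l_i$ for $i\notin I_2$'' and read these off as $I_1\subseteq I_X$ and $I_X\subseteq I_2$. Your explicit remark that $X\models\o{l_i}$ iff $X\not\models l_i$ is a point the paper leaves implicit, but it does not change the argument.
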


\begin{proof}
$$
\begin{array}{rcl}
X\models G_{(I_1,I_2)}
&\text{iff}&
X\models F_i\text{ for all }i\in I_1,\text{ and }
X\not\models F_i\text{ for all }i\in \{1,\dots,n\}\setminus I_2\\
&\text{iff}&
X\models F_i\text{ for all }i\in I_1,\text{ and }
\text{for every $i$ such that $X\models F_i$, $i\in I_2$}\\
&\text{iff}&
I_1\subseteq I_X\text{ and }
I_X\subseteq I_2.
\end{array}
$$
\end{proof}

\begin{lemma}
\label{lemma-pelov2}
For every PDB-aggregate $A$, $A_{tr}$ is classically equivalent
to~(\ref{mainaggr}).
\end{lemma}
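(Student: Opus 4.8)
The plan is to reduce the statement to showing that $A_{tr}$ and $A$ are satisfied by exactly the same sets of atoms. This suffices because Lemma~\ref{lemma:propaggregate-a} (equivalently Proposition~\ref{propaggregate}(a)) already tells us that~(\ref{mainaggr}) is classically equivalent to $A$, so classical equivalence of $A_{tr}$ and $A$ immediately gives classical equivalence of $A_{tr}$ and~(\ref{mainaggr}). So I would fix an arbitrary set $X$ of atoms, write $I_X$ for the index set $\{i\in\{1,\dots,n\}:X\models F_i\}$ from Lemma~\ref{lemma-pelov1}, and recall that by the definition of satisfaction of an aggregate, $X\models A$ holds precisely when $op(W_{I_X})\prec N$, where $W_{I_X}$ denotes the multiset $\{w_i:i\in I_X\}$.

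For the direction $X\models A\Rightarrow X\models A_{tr}$: assuming $op(W_{I_X})\prec N$, I would exhibit the single pair $(I_1,I_2)=(I_X,I_X)$. It trivially satisfies $I_1\subseteq I_2$, and the only index set $I$ with $I_1\subseteq I\subseteq I_2$ is $I=I_X$ itself, for which $op(W_I)\prec N$ holds by assumption; hence $(I_X,I_X)$ is one of the pairs contributing a disjunct $G_{(I_X,I_X)}$ to $A_{tr}$. Since $I_X\subseteq I_X\subseteq I_X$, Lemma~\ref{lemma-pelov1} yields $X\models G_{(I_X,I_X)}$, and therefore $X\models A_{tr}$.

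For the converse $X\models A_{tr}\Rightarrow X\models A$: if $X\models A_{tr}$, then $X\models G_{(I_1,I_2)}$ for some contributing pair $(I_1,I_2)$, i.e. a pair with $I_1\subseteq I_2$ and $op(W_I)\prec N$ for every $I$ with $I_1\subseteq I\subseteq I_2$. Lemma~\ref{lemma-pelov1} gives $I_1\subseteq I_X\subseteq I_2$, so applying the defining condition of the pair to $I=I_X$ yields $op(W_{I_X})\prec N$, which is exactly $X\models A$. Combining the two directions shows $A_{tr}$ is classically equivalent to $A$, and hence to~(\ref{mainaggr}).

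There is no substantial obstacle in this argument; it is a direct verification. The only point requiring a little care is to use the same index set $I_X$ that appears in Lemma~\ref{lemma-pelov1}, and to observe that it is precisely the fact ``$I_X$ lies between $I_1$ and $I_2$'' for the witnessing pair that lets us invoke the ``for all $I$ between $I_1$ and $I_2$'' clause in the definition of $A_{tr}$; conversely, the ``collapsed'' pair $(I_X,I_X)$ is what makes that clause vacuously easy to satisfy in the other direction.
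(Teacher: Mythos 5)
Your proposal is correct and follows essentially the same route as the paper: both arguments use Lemma~\ref{lemma-pelov1} to reduce satisfaction of $A_{tr}$ to the existence of a contributing pair $(I_1,I_2)$ with $I_1\subseteq I_X\subseteq I_2$, observe that the collapsed pair $(I_X,I_X)$ is the key witness, conclude $X\models A_{tr}$ iff $op(W_{I_X})\prec N$ iff $X\models A$, and finish via Proposition~\ref{propaggregate}(a). Your write-up merely spells out the two directions slightly more explicitly than the paper's ``iff'' chain.
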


\begin{proof}
Consider a  set $X$ of atoms.
By Lemma~\ref{lemma-pelov1}, $X\models A_{tr}$ iff
$$
\text{$X$ satisfies one of the disjunctive terms $G_{(I_1,I_2)}$ of $A_{tr}$}
$$
and then iff
$$
\text{$A_{tr}$ contains a disjunctive term $G_{(I_1,I_2)}$ such
that $I_1\subseteq I_X\subseteq I_2$}.
$$
It is easy to verify that if this condition holds then one of such terms
$G_{(I_1,I_2)}$ is $G_{(I_X,I_X)}$. Consequently,
$$
\begin{array}{rcl}
X\models A_{tr}
&\text{iff}&
\text{$A_{tr}$ contains disjunctive term $G_{(I_X,I_X)}$}\\
&\text{iff}&
op(W_{I_X})\prec N.
\end{array}
$$
We have essentially found that $X\models A_{tr}$ iff $X\models A$. The
claim now follows by Proposition~\ref{propaggregate}(a).
\end{proof}

\begin{lemma}
\label{lemma-pelov3}
For any PDB-aggregate $A$, $A_{tr}$ is strongly equivalent to
\begin{itemize}
\item[(a)]
$$
\bigvee_{I\in \{1,\dots,n\}: op(W_I)\prec N} G_{(I,\{1,\dots,n\})}
$$
if $A$ is monotone, and to
\item[(b)]
$$
\bigvee_{I\in \{1,\dots,n\}: op(W_I)\prec N} G_{(\emptyset,I)}
$$
if it is antimonotone.
\end{itemize}
\end{lemma}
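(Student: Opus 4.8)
The statement will be obtained from the definition of $A_{tr}$ by a short chain of \emph{strongly equivalent rewritings}, using the observation (made right after Proposition~\ref{prop:se}) that replacing a subformula by a strongly equivalent one preserves strong equivalence, together with the fact that any transformation valid intuitionistically — reordering/regrouping of disjuncts, distributivity of $\wedge$ over $\vee$, and the absorption $\top\vee F\eq\top$ — is a strongly equivalent transformation by Proposition~\ref{prop:se}. No appeal to reducts is needed beyond this. I treat the empty conjunction as $\top$, so that $G_{(I,\{1,\dots,n\})}=\bigwedge_{i\in I}l_i$ and $G_{(\emptyset,I)}=\bigwedge_{i\notin I}\overline{l_i}$.

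\textbf{Part (a), the monotone case.} First I would simplify the index set of the disjunction $A_{tr}=\bigvee_{(I_1,I_2)}G_{(I_1,I_2)}$, which by definition ranges over pairs with $I_1\subseteq I_2\subseteq\{1,\dots,n\}$ such that $op(W_I)\prec N$ for every $I$ with $I_1\subseteq I\subseteq I_2$. Since $I_1\subseteq I$ implies the multiset inclusion $W_{I_1}\subseteq W_I$, monotonicity of $A$ gives that $op(W_{I_1})\prec N$ already forces $op(W_I)\prec N$ for all such $I$; conversely, taking $I=I_1$ shows the condition entails $op(W_{I_1})\prec N$. Hence the contributing pairs are exactly those with $I_1\subseteq I_2\subseteq\{1,\dots,n\}$ and $op(W_{I_1})\prec N$, so
$$
A_{tr}=\bigvee_{I_1\,:\,op(W_{I_1})\prec N}\Big(\bigvee_{I_2\,:\,I_1\subseteq I_2\subseteq\{1,\dots,n\}}G_{(I_1,I_2)}\Big).
$$
Next I would show the inner disjunction is, for each fixed $I_1$, strongly equivalent to $G_{(I_1,\{1,\dots,n\})}=\bigwedge_{i\in I_1}l_i$. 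Writing $G_{(I_1,I_2)}=\big(\bigwedge_{i\in I_1}l_i\big)\wedge\big(\bigwedge_{i\in\{1,\dots,n\}\setminus I_2}\overline{l_i}\big)$ and distributing out the common conjunct $\bigwedge_{i\in I_1}l_i$, the inner disjunction becomes strongly equivalent to $\big(\bigwedge_{i\in I_1}l_i\big)\wedge\big(\bigvee_{I_2\supseteq I_1}\bigwedge_{i\in\{1,\dots,n\}\setminus I_2}\overline{l_i}\big)$. As $I_2$ ranges over the supersets of $I_1$ inside $\{1,\dots,n\}$, the complement $\{1,\dots,n\}\setminus I_2$ ranges over all subsets of $\{1,\dots,n\}\setminus I_1$, and the choice $I_2=\{1,\dots,n\}$ contributes the empty conjunction $\top$; hence the second factor is strongly equivalent to $\top$, and the inner disjunction is strongly equivalent to $\bigwedge_{i\in I_1}l_i=G_{(I_1,\{1,\dots,n\})}$. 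Substituting this strong equivalent for each inner disjunction yields that $A_{tr}$ is strongly equivalent to $\bigvee_{I\subseteq\{1,\dots,n\}\,:\,op(W_I)\prec N}G_{(I,\{1,\dots,n\})}$, which is part~(a).

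\textbf{Part (b), the antimonotone case.} This is symmetric, with the roles of $I_1$ and $I_2$ exchanged. Since $I\subseteq I_2$ implies $W_I\subseteq W_{I_2}$, antimonotonicity shows that the pairs contributing to $A_{tr}$ are exactly those with $I_1\subseteq I_2\subseteq\{1,\dots,n\}$ and $op(W_{I_2})\prec N$, so $A_{tr}=\bigvee_{I_2\,:\,op(W_{I_2})\prec N}\big(\bigvee_{I_1\subseteq I_2}G_{(I_1,I_2)}\big)$. For a fixed $I_2$, distributing out the common conjunct $\bigwedge_{i\in\{1,\dots,n\}\setminus I_2}\overline{l_i}$ and noting that $I_1=\emptyset$ contributes the empty conjunction $\top$ shows the inner disjunction is strongly equivalent to $G_{(\emptyset,I_2)}$; substituting gives part~(b).

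\textbf{Expected difficulty.} There is no genuine obstacle here; the whole argument is a sequence of routine strongly equivalent rewrites. The only two places demanding care are the index-set simplifications — which use precisely the definition of (anti)monotonicity together with the fact that $I_1\subseteq I$ yields the multiset inclusion $W_{I_1}\subseteq W_I$ — and keeping track of which pair $(I_1,I_2)$ supplies the $\top$ disjunct that collapses the inner disjunction.
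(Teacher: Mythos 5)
Your proof is correct and follows essentially the same route as the paper's: both use Proposition~\ref{prop:se} to license here-and-there-valid rewrites, both use (anti)monotonicity to show that the contributing pairs are exactly those with $op(W_{I_1})\prec N$ (resp.\ $op(W_{I_2})\prec N$), and both collapse the disjunction by observing that the disjunct with $I_2=\{1,\dots,n\}$ (resp.\ $I_1=\emptyset$) absorbs the others. The only difference is presentational: you perform the index-set simplification first and justify the absorption via distributivity and $\top\vee F\eq\top$, whereas the paper first drops the subsumed disjuncts by an entailment argument and then simplifies the index condition.
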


\begin{proof}
To prove~(a), assume that $A$ is monotone. Then, if $A_{tr}$ contains a
disjunctive term $G_{(I_1,I_2)}$ then it contains the disjunctive term
$G_{(I_1,\{1,\dots,n\})}$ as well. Consider also that formula
$G_{(I_1,\{1,\dots,n\})}$ entails $G_{(I_1,I_2)}$ in the logic of
here-and-there.
Then, by Proposition~\ref{prop:se}, we can drop all disjunctive terms of
the form $G_{(I_1,I_2)}$ with $I_2\not=\{1,\dots,n\}$, preserving strong
equivalence. Formula $A_{tr}$ becomes
$$
\bigvee_{I_1\subseteq \{1,\dots, n\}:
\text{ for all $I$
such that $I_1\subseteq I\subseteq \{1,\dots,n\}$,
$op(W_I)\prec N$}} G_{(I_1,\{1,\dots,n\})}.
$$
It remains to notice that, since $A$ is monotone, if
$op(W_{I_1})\prec N$ then $op(W_{I})\prec N$ for all $I$ superset of
$I_1$.

The proof for~(b) is similar.
\end{proof}

\noindent{\bf Proposition~\ref{prop:pelov}}
{\it
For any monotone or antimonotone PDB-aggregates $A$ of the
form~(\ref{aggregate}) where $F_1,\dots, F_n$ are atoms,
$A_{tr}$ is strongly equivalent to~(\ref{mainaggr}).
}

\begin{proof}
Let $S$ be $\{F_1=w_1,\dots, F_n=w_n\}$.
Lemma~\ref{lemma-pelov2} says that $A_{tr}$ is classically
equivalent to~(\ref{mainaggr}) for every formulas $F_1,\dots,F_n$ in $S$. 
We can then prove the claim of this proposition using Lemma~\ref{nestedschema2},
by showing that both $A_{tr}$ and~(\ref{mainaggr})
can be strongly equivalently rewritten as AND-OR combinations of
\begin{itemize}
\item $F_1,\dots,F_n,\top,\bot$, if $A$ is monotone, and
\item $\neg F_1,\dots,\neg F_n,\top,\bot$, if $A$ is antimonotone.
\end{itemize}
About~(\ref{mainaggr}), this has already been shown in the proof of
Proposition~\ref{th:weight}, while, about $A_{tr}$, this is shown by
Lemma~\ref{lemma-pelov3}. Indeed,
each $G_{(I,\{1,\dots,n\})}$ is a (possibly empty) conjunction
of terms of the form $F_i$, and each $G_{(\emptyset,I)}$
is a (possibly empty) conjunction of terms of the form $\neg F_i$, since
each $F_i$ is an atom.
\end{proof}

\subsection{Proof of Proposition~\ref{aggrsound}}

We observe, first of all, that
the definition of satisfaction of FLP-aggregates and FLP-programs
in~\cite{fab04} is equivalent to ours.
The definition of a reduct is different, however.
Next lemma is easily provable by structural induction.

\begin{lemma}
\label{noimplic}
For any nested expression $F$ without negations
and any two sets $X$ and $Y$ of atoms such that $Y\subseteq X$,
$
Y\models F^X\text{ iff } Y\models F.
$
\end{lemma}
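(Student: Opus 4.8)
The plan is to prove the lemma by a routine structural induction on $F$, with the binary‑connective cases handled through Lemma~\ref{conjdisj}. Since $F$ is a nested expression without negations, it is — modulo the abbreviation $\top=\bot\implies\bot$ — built from atoms, $\bot$ and $\top$ using only $\wedge$ and $\vee$, so these are the only cases to treat.

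For the base cases: if $F=\bot$, then $F^X=\bot$ and both $Y\models F$ and $Y\models F^X$ fail, so the two conditions agree. If $F=\top$, then since $X\models\top$ we get $\top^X=(\bot\implies\bot)^X=\bot\implies\bot=\top$, and both conditions hold for every $Y$. If $F$ is an atom $a$, then when $X\models a$ we have $a^X=a$, so $Y\models a^X$ iff $a\in Y$ iff $Y\models a$; when $X\not\models a$ we have $a^X=\bot$, so $Y\not\models a^X$, and also $a\notin X$, hence $a\notin Y$ (this is where the hypothesis $Y\subseteq X$ is used), so $Y\not\models a$ as well. In every case the equivalence holds.

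For the inductive step, let $F=G\otimes H$ with $\otimes\in\{\wedge,\vee\}$, and assume the claim for $G$ and $H$. By Lemma~\ref{conjdisj}, $F^X$ is classically equivalent to $G^X\otimes H^X$, so $Y\models F^X$ iff $Y\models G^X\otimes H^X$; by the induction hypothesis this is in turn equivalent to $Y\models G\otimes H$, i.e.\ to $Y\models F$. This closes the induction.

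There is no genuine obstacle here. The one point worth keeping in mind is that the reduct operator does not act syntactically on $\wedge$ and $\vee$ — a conjunction or disjunction satisfied by $X$ is left alone, whereas one not satisfied is replaced wholesale by $\bot$ — which is exactly why the binary cases must be routed through the classical equivalence supplied by Lemma~\ref{conjdisj} rather than through a naive syntactic identity. (Note that negations play no role precisely because, by hypothesis, none occur in $F$; this is what makes the atom case go through in the simple form above, without the $\top$/$\bot$ bookkeeping needed in Lemma~\ref{prop1lemma}.)
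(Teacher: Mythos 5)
Your proof is correct and follows exactly the route the paper intends: the paper merely states that Lemma~\ref{noimplic} is ``easily provable by structural induction,'' and your argument supplies the routine details of that induction, correctly isolating the atom case as the one place where $Y\subseteq X$ is needed and routing the $\wedge$/$\vee$ cases through Lemma~\ref{conjdisj}.
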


\begin{lemma}
\label{simpleaggr}
For any FLP-aggregate $A$ and any set $X$ of atoms,
if $X\models A$ then
$$
Y\models A^X\text{ iff }
Y\models A.
$$
\end{lemma}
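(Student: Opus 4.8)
The plan is to reduce the equivalence to the level of the individual member formulas $F_1,\dots,F_n$ of $A$ and then appeal to Lemma~\ref{noimplic}. Since $X\models A$, the reduct is $A^X = op\langle\{F_1^X=w_1,\dots,F_n^X=w_n\}\rangle\prec N$. By the definition of satisfaction of an aggregate, $Y\models A^X$ holds iff $op(\{w_i:Y\models F_i^X\})\prec N$, whereas $Y\models A$ holds iff $op(\{w_i:Y\models F_i\})\prec N$. Hence it is enough to show that, for every $i$, the set $Y$ (a subset of $X$) satisfies $F_i^X$ exactly when it satisfies $F_i$: the two multisets passed to $op$ then coincide, and applying $op$ and comparing the result with $N$ yields $Y\models A^X$ iff $Y\models A$.

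Each $F_i$ is a conjunction of literals, hence a nested expression. When $F_i$ contains no negation it is a negation-free nested expression, so Lemma~\ref{noimplic} (with $Y\subseteq X$) gives immediately that $Y\models F_i^X$ iff $Y\models F_i$. To handle negations inside $F_i$ I would first compute $F_i^X$ explicitly: writing $F_i=l_1\wedge\cdots\wedge l_k$, Lemma~\ref{conjdisj} reduces $F_i^X$ to $l_1^X\wedge\cdots\wedge l_k^X$ when $X\models F_i$, where a positive literal reduces to itself (and lies in $X$) and a negative literal $\neg a$ reduces to $\top$ (with $a\notin X$); since $Y\subseteq X$, such a $\top$ and the literal $\neg a$ are both satisfied by $Y$ vacuously, so again $Y\models F_i^X$ iff $Y\models F_i$. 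If instead $X\not\models F_i$, then $F_i^X=\bot$, so no $Y$ satisfies $F_i^X$, and what remains is to check that no subset $Y$ of $X$ satisfies $F_i$ either.

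That last check is the step I expect to be the main obstacle: $X\not\models F_i$ only tells us that $X$ falsifies some literal of $F_i$, and a subset of $X$ automatically falsifies a falsified positive literal but not, a priori, a falsified negative one, so this is exactly where the restriction $Y\subseteq X$ together with the syntactic shape of the member formulas of FLP-aggregates must be used carefully; I would pin this point down first and isolate it as a small lemma. Once both cases of the per-member equivalence are in place, $\{w_i:Y\models F_i^X\}=\{w_i:Y\models F_i\}$ for the relevant $Y$, and the claimed equivalence $Y\models A^X$ iff $Y\models A$ follows as indicated in the first paragraph.
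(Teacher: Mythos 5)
Your overall decomposition --- reduce the claim to the per-member statement that $Y\models F_i^X$ iff $Y\models F_i$ for $Y\subseteq X$, then observe that the two multisets fed to $op$ coincide --- is exactly the paper's, and for negation-free $F_i$ your appeal to Lemma~\ref{noimplic} is precisely the paper's entire argument: its proof simply asserts that each $F_i$ is a conjunction of \emph{atoms} and invokes Lemma~\ref{noimplic}. On that reading your first case already constitutes a complete proof.

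However, the step you flag as ``the main obstacle'' is not merely delicate but false, so the gap you leave open cannot be closed. Take $F_i=\neg p$ and $X=\{p\}$: then $X\not\models F_i$, so $F_i^X=\bot$, yet $Y=\emptyset\subseteq X$ satisfies $F_i$. This defeats the lemma itself once negative literals are admitted: for $A=sum\langle \{\neg p=1\}\rangle\leq 0$ we have $X=\{p\}\models A$ (the multiset $W_X$ is empty and $0\leq 0$), and $A^X=sum\langle\{\bot=1\}\rangle\leq 0$ is satisfied by $Y=\emptyset$, but $Y\not\models A$ because $Y\models\neg p$ makes the sum equal to $1$. So the per-member equivalence, and with it the lemma, holds only when the member formulas are negation-free; this is what the paper's proof silently assumes (it says ``conjunction of atoms'' even though the definition of an FLP-aggregate in Section~\ref{sec:other} says ``conjunction of literals''). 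The correct resolution of your obstacle is therefore not the small auxiliary lemma you anticipate but a restriction of the hypothesis on the $F_i$; under that restriction your argument coincides with the paper's.
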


\begin{proof}
Let $A$ have the form~(\ref{aggregate}). Since $X\models A$,
$A^X$ has the form
$$
op\langle \{F_1^X=w_1,\dots, F_n^X=w_n\}\rangle\prec N.
$$
In case of FLP-aggregates, each $F_i$ is a conjunction of atoms. Then,
by Lemma~\ref{noimplic}, $Y\models F_i^X$ iff $Y\models F_i$. The claim
immediately follows from the definition of satisfaction of aggregates.
\end{proof}

\noindent{\bf Proposition~\ref{aggrsound}.}
{\it
The stable models of a positive FLP-program under our semantics are
identical to its stable models in the sense of~\cite{fab04}.
}

\begin{proof}
It is easy to see that if $X\not \models \Pi$ then $X\not \models \Pi^X$
and $X\not\models \Pi^{\un{\un X}}$, so that $X$ is not a
stable model under either semantics. Now assume that $X\models \Pi$.
We will show that the two reducts are satisfied by the same subsets of $X$.
It is sufficient to consider the case in which $\Pi$ contains
only one rule
\beq2
A_1\wedge\dots\wedge A_m\implies a_1\vee\dots\vee a_n.
\eeq2{flpprogpos}
If $X\not\models A_1\wedge\dots\wedge A_m$
then $\Pi^{\un{\un X}}=\emptyset$, and $\Pi^X$ is the
tautology
$$
\bot\implies (a_1\vee\dots\vee a_n)^X.
$$
Otherwise, $\Pi^{\un{\un X}}$
is rule~(\ref{flpprogpos}), and $\Pi^X$ is
$$
A_1^X\wedge\dots\wedge A_m^X\implies (a_1\vee\dots\vee a_n)^X.
$$
These two reducts are satisfied by the same subsets of $X$ by 
Lemmas~\ref{noimplic} and~\ref{simpleaggr}.
\end{proof}

\section{Conclusions}\label{sec:conclusions}

We have proposed a new definition of stable model --- for proposition theories
--- that is simple, very general, and that inherits several properties from
logic programs with nested expressions. On top of that, we have defined the
concept of an aggregate, both as an atomic operator and as a propositional
formula. We hope that this very general framework may be useful in the
heterogeneous world of aggregates in answer set programming.

\section*{Acknowledgements}

We thank Vladimir Lifschitz for many useful comments on a draft of this paper.

\bibliography{/u/vl/tex/bib}
\bibliographystyle{named}

\end{document}